\title[ER-SPUD]{Exact Recovery of Sparsely-Used Dictionaries}
 \author{\Name{Daniel A. Spielman} \Email{spielman@cs.yale.edu}
\and
 \Name{Huan Wang} \Email{huan.wang@yale.edu}\\
 \addr Department of Computer Science,
Yale University
\AND
 \Name{John Wright} \Email{johnwright@ee.columbia.edu}\\
 \addr Department of Electrical Engineering,
Columbia University
 }
\renewcommand{\Re}{\mathbb{R}}
\newcommand{\E}{\mathbb{E}}
\renewcommand{\P}{\mathbb{P}}
\newcommand{\mc}{\mathcal}
\newcommand{\eps}{\varepsilon}
\newcommand{\event}{\mc E}
\newcommand{\mb}{\boldsymbol}
\newcommand{\prob}[1]{\P\left[ \, #1 \, \right] }
\newcommand{\condprob}[2]{\P\left[ \, #1 \mid #2 \, \right] }
\newcommand{\magnitude}[1]{\left|#1\right|}
\def\abs#1{\left|#1  \right|}
\def\norm#1{\left\| #1 \right\|}
\newdimen\pIR
\newcommand\StevesR{{\rm I\kern\pIR R}}
\def\Reals#1{\StevesR^{#1}}
\def\sizeof#1{\left|#1  \right|}
\def\setof#1{\left\{#1  \right\}}
\def\mand{\mbox{ and }}
\def\mnot{\mbox{ not}}
\def\floor#1{\left\lfloor #1 \right\rfloor}
\def\ceil#1{\left\lceil #1 \right\rceil}
\newif\ifincludefigures
\begin{document}

\maketitle

\begin{abstract}
We consider the problem of learning sparsely used dictionaries with an arbitrary square
dictionary and a random, sparse coefficient matrix. We prove that $O
(n \log n)$ samples are sufficient to uniquely determine the
coefficient matrix. Based on this proof, we design a polynomial-time
algorithm, called Exact Recovery of Sparsely-Used Dictionaries
(ER-SpUD), and prove that it probably recovers the dictionary and
coefficient matrix when the coefficient matrix is sufficiently
sparse. Simulation results show that ER-SpUD reveals the true
dictionary as well as the coefficients with probability higher than
many state-of-the-art algorithms.
\end{abstract}

\begin{keywords}
Dictionary learning, matrix decomposition, matrix sparsification.
\end{keywords}

\section{Introduction}

In the Sparsely-Used Dictionary Learning Problem, one is given a matrix $\mb Y \in \Re^{n \times p}$
  and asked to find a pair of 
  matrices $\mb A \in \Re^{n \times m}$ and $\mb X \in \Re^{m \times p}$ 
  so that $\norm{\mb Y - \mb A \mb X}$ is small and so that $\mb X$ is {\em sparse}  --
  $\mb X$ has only a few nonzero elements.
We examine solutions to this problem in which $\mb A$ is a basis, so $m = n$,
  and without the presence of noise, in which case we insist $\mb Y = \mb A \mb X$.
Variants of this problem arise in different contexts in
machine learning, signal processing, and even computational
neuroscience. We list two prominent examples:
\begin{itemize}
\item {\em Dictionary learning \cite{Olshausen1996-Nature,Kruetz-Delgado2003-NC}:} Here, the goal is to find a basis $\mb A$ that most compactly represents a given set of sample data. Techniques based on learned dictionaries have performed quite well in a number of applications in signal and image processing \cite{Bruckstein2009-SIAM,Rubenstein2010-IEEE,Yang2010-TIP}.

\item {\em Blind source separation \cite{Zibulevsky2001-NC}:} Here, the rows of $\mb X$ are considered the emissions of various sources over time. The sources are linearly mixed by $\mb A$ (instantaneous mixing). Sparse component analysis \cite{Zibulevsky2001-NC,Georgiev2005-TNN} is the problem of using the prior information that the sources are sparse in some domain to unmix $\mb Y$ and obtain $(\mb A,\mb X)$.
\end{itemize}
These applications raise several basic questions. First, when is the problem well-posed? More precisely, suppose that $\mb Y$ is indeed the product of some unknown dictionary $\mb A$ and sparse coefficient matrix $\mb X$. Is it possible to identify $\mb A$ and $\mb X$, up to scaling and permutation. If we assume that the rows of $\mb X$ are sampled from independent random sources, classical, general results in the literature on Independent Component Analysis imply that the problem is solvable in the large sample limit \cite{Comon94-SP}. If we instead assume that the columns of $\mb X$ each have at most $k$ nonzero entries, and that for each possible pattern of nonzeros, we have observed $k+1$ nondegenerate samples $\mb y_j$, the problem is again well-posed \cite{Aharon06,Georgiev2005-TNN}. This suggests a sample requirement of $p \ge (k+1)\binom{n}{k}$. We ask: is this large number necessary? Or could it be that the desired factorization is unique\footnote{Of course, for some applications, weaker notions than uniqueness may be of interest. For example, Vainsencher et.\ al.\ \cite{Vainsencher2011-COLT} give generalization bounds for a learned dictionary $\hat{\mb A}$. Compared to the results mentioned above, these bounds depend much more gracefully on the dimension and sparsity level. However, they do not directly imply that the ``true'' dictionary $\mb A$ is unique, or that it can be recovered by an efficient algorithm.} even with more realistic sample sizes?

Second, suppose that we know that the problem is well-posed. Can it be solved efficiently? This question has been vigorously investigated by many authors, starting from seminal work of Olshausen and Field \cite{Olshausen1996-Nature}, and continuing with the development of alternating directions methods such as the Method of Optimal Directions (MOD) \cite{Engan1999-ICASSP}, K-SVD \cite{Aharon2006-TSP}, and more recent, scalable variants \cite{Mairal09}.  This dominant approach to dictionary learning exploits the fact that the constraint $\mb Y = \mb A \mb X$ is bilinear. Because the problem is nonconvex, spurious local minima are a concern in practice, and even in the cases where the algorithms perform well empirically, providing global theoretical guarantees would be a daunting task. Even the local properties of the problem have only recently begun to be studied carefully. For example, \cite{Gribonval10,Wright11} have shown that under certain natural random models for $\mb X$, the desired solution will be a local minimum of the objective function with high probability. However, these results do not guarantee correct recovery by any efficient algorithm.

In this work, we contribute to the understanding of both of these questions in the case when $\mb A$ is square and nonsingular. We prove that $O (n\log n)$ samples are sufficient to uniquely determine the decomposition with high probability, under the assumption $\mb X$ is generated by a Bernoulli-Subgaussian process. 

Our argument for uniqueness suggests a new, efficient dictionary
learning algorithm, which we call Exact Recovery of Sparsely-Used
Dictionaries (ER-SpUD). This algorithm solves a sequence of linear
programs with varying constraints. We prove that under the
aforementioned assumptions, the algorithm exactly recovers $\mb A$ and
$\mb X$ with high probability. This result holds when the expected
number of nonzero elements in each column of $\mb X$ is at most $O (\sqrt{n})$
and the number of samples  $p$ is at least $\Omega (n^2\log^2 n)$. 
To the best of our knowledge, this result
is the first to demonstrate an efficient algorithm for dictionary
learning with provable guarantees.

Moreover, we prove that this result is tight to within a $\log $ factor: when the expected number of nonzeros in each column is $\Omega( \sqrt{ n\log n} )$,  algorithms of this style fail with high probability.

Our algorithm is related to previous proposals by Zibulevsky and Pearlmutter \cite{Zibulevsky2001-NC} (for source separation) and Gottlieb and Neylon \cite{Gottlieb10} (for dictionary learning), but involves several new techniques that seem to be important for obtaining provable correct recovery -- in particular, the use of sample vectors in the constraints. We will describe these differences more clearly in Section \ref{sec:alg}, after introducing our approach. Other related recent proposals include \cite{Plumbley2007-ICA,Jaillet2010-ICASSP}.

The remainder of this paper is organized as follows. In Section \ref{sec:model}, we fix our model. Section \ref{sec:uniq} discusses situations in which this problem is well-posed. Building on the intuition developed in this section, Section \ref{sec:alg} introduces the ER-SpUD algorithm for dictionary recovery. In Section \ref{sec:main}, we introduce our main theoretical results, which characterize the regime in which ER-SpUD performs correctly. Section \ref{sec:sketch} describes the key steps in our analysis. Technical lemmas and proofs are sketched; for full details please see the full version.  Finally, in Section \ref{sec:exp} we perform experiments corroborating our theory and suggesting the utility of our approach. 

\section{Notation} 
We write $\norm{\mb v}_{p}$ for the standard $\ell^{p}$ norm of a vector $\mb v$,
  and we write $\norm{\mb M}_{p}$ for the induced operator norm on a matrix $\mb M$.
$\norm{\mb v}_{0}$ denotes the number of non-zero entries in $\mb v$.
We denote the Hadamard (point-wise) product by $\odot$.
$[n]$ denotes the first $n$ positive integers, $\{1,2,\dots, n\}$. 
For a set of indices $I$, we let $\mb P_{I}$ denote the projection matrix onto
  the subspace of vectors supported on indices $I$, zeroing out the other coordinates.
For a matrix $\mb X$ and a set of indices $J$, we let $\mb X_{J}$ ($\mb X^{J}$) denote the submatrix
  containing just the rows (columns) indexed by $J$.
We write the standard basis vector that is non-zero in coordinate $i$ as $\mb e_{i}$.
For a matrix $\mb X$ we let $\mathrm{row}(\mb X)$ denote the span of its rows.
For a set $S$, $|S|$ is its cardinality.

\section{The Probabilistic Models}\label{sec:model}
We analyze the dictionary learning problem under the assumption that $\mb A$ is
  an  arbitrary nonsingular $n$-by-$n$ matrix, and $\mb X$ is a random sparse $n$-by-$p$ that follows the following probabilistic model:

\begin{definition} We say that $\mb X$ satisfies the {\em Bernoulli-Subgaussian model} with parameter $\theta$ if $\mb X = \mb \Omega \odot \mb R$, where $\mb \Omega$ is an iid $\mathrm{Bernoulli}(\theta)$ matrix, and $\mb R$ is an independent random matrix whose entries are iid symmetric random variables with
\begin{equation}\label{eqn:moments}
\mu \doteq \E\left[ \, |R_{ij}| \, \right] \in [1/10, 1], \quad \E\left[ \, R_{ij}^2 \, \right] \;\le\; 1.
\end{equation}
and
\begin{equation}\label{eqn:tail}
\P\left[ \, | R_{ij} | > t \, \right] \;<\; 2 \exp\left( - \frac{t^2}{2} \right) \quad \forall \, t \ge 0.
\end{equation}
\end{definition}

This model includes a number of special cases of interest -- e.g., standard Gaussians and Rademachers. The constant $1/10$ is not essential to our arguments and is chosen merely for convenience. The subgaussian tail inequality \eqref{eqn:tail} implies a number of useful concentration properties. In particular, if $x_1 \dots x_N$ are independent, random variables satisfying an inequality of the form \eqref{eqn:tail}, then 
\begin{equation} \label{eqn:sg-conc}
\P\left[ \left| \sum_{i=1}^N x_i  - \E\left[ \sum_{i=1}^N x_i \right] \right | > t \right] < 2 \exp\left( - \frac{t^2}{2 N} \right).
\end{equation}
We will occasionally refer to the following special case of the Bernoulli-Subgaussian model:
\begin{definition} We say that $\mb X$ satisfies the {\em Bernoulli-Gaussian model} with parameter $\theta$ if $\mb X = \mb \Omega \odot \mb R$, where $\mb \Omega$ is an iid $\mathrm{Bernoulli}(\theta)$ matrix, and $\mb R$ is an independent random matrix whose entries are iid $\mc N(0,1)$. 
\end{definition}

\section{When is the Factorization Unique?} \label{sec:uniq}

At first glance, it seems the number of samples $p$
required to identify $\mb A$ could be quite large. For example, Aharon \emph{et.\ al.} view the given data matrix $\mb X$ as having sparse columns, each with at most $k$ nonzero entries.
If the given samples $\mb y_j = \mb A \mb x_j$ lie on an arrangement of $\binom{n}{k}$ $k$-dimensional subspaces
$\mathrm{range}(\mb A_I)$, corresponding to possible support sets $I$, $\mb A$ is identifiable. 

On the other hand, the most immediate lower bound on the number of
samples required comes from the simple fact that to recover $\mb A$ we
need to see at least one linear combination involving each of its
columns. The ``coupon collection'' phenomenon tells us that 
$p = \Omega(\frac{1}{\theta}\log n )$ samples are required for this to occur with
constant probability, where $\theta$ is the probability that an element
$\mb X_{ij}$ is nonzero.  When $\theta$ is as small as $O (1/n)$, this means $p$ must
be at least proportional to $n \log n$. Our next result shows that, in
fact, this lower bound is tight -- the problem becomes well-posed once
we have observed $c n \log n$ samples.

\begin{theorem}[Uniqueness]\label{thm:uniq}
Suppose that $\mb X = \mb \Omega \odot \mb R$ follows the Bernoulli-Subgaussian model, and $\P[ R_{ij} = 0 ] = 0$. Then if $1/n \le \theta \le 1/C$ and $p>Cn\log n$, with probability at least $1-  C' n \exp\{-c \theta p\}$ the following holds:
\begin{quote}
For
any alternative factorization $\mb Y = \mb A' \mb X'$ such that $\max_i \| \mb e_i^T \mb X' \|_0 \;\le\; \max_i \| \mb e_i^T \mb X \|_0$, we have $\mb A' = \mb A \mb \Pi \mb \Lambda$ and $\mb X' = \mb \Lambda^{-1} \mb \Pi^T \mb X$, for some permutation matrix $\mb \Pi$ and nonsingular diagonal matrix $\mb \Lambda$. 
\end{quote}
Above, $c$, $C$, $C'$ are absolute constants. 
\end{theorem}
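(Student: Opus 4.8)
The plan is to reduce Theorem~\ref{thm:uniq} to a statement about the row space of $\mb X$ and then to establish that statement by a union bound organized by the size of the support of the combining vector.

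\emph{Reduction.} Let $s := \max_i \|\mb e_i^T \mb X\|_0$. I would first argue that, with probability $1 - C'n\exp\{-c\theta p\}$, $\mb X$ has no zero row and every $\mb w$ with $\|\mb w\|_0 \ge 2$ satisfies $\|\mb w^T\mb X\|_0 > s$ (call this property $(\star)$). On this event $\mb X$ automatically has full row rank $n$ (a nonzero left-null vector of $\mb X$ would contradict one of these two facts), so $\mathrm{row}(\mb Y) = \mathrm{row}(\mb X)$ and $\mb Y$ has rank $n$. Consequently any $n\times n$ factor $\mb A'$ with $\mb Y = \mb A'\mb X'$ is nonsingular, $\mb X' = (\mb A')^{-1}\mb A\,\mb X =: \mb M\mb X$ with $\mb M$ nonsingular, and each row of $\mb X'$ equals $\mb w^T\mb X$ for some nonzero $\mb w$ (an invertible matrix has no zero row). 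The hypothesis says $\|\mb w^T\mb X\|_0 \le s$ for each such $\mb w$, so by $(\star)$ each row of $\mb M$ has exactly one nonzero entry; since $\mb M$ is invertible it is then a scaled permutation matrix $\mb M = \mb\Lambda^{-1}\mb\Pi^T$, giving $\mb X' = \mb\Lambda^{-1}\mb\Pi^T\mb X$ and $\mb A' = \mb A\,\mb\Pi\,\mb\Lambda$, as claimed.

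\emph{Small supports.} It remains to prove $(\star)$. First, $s \le (1+\delta)\theta p$ for a small absolute constant $\delta$ with probability $1 - n\exp\{-c\theta p\}$, by the concentration inequality \eqref{eqn:sg-conc} applied to each of the $n$ rows. Now fix $\mb w$ with support $I$, $|I| = k \ge 2$. Any column $j$ in which \emph{exactly one} coordinate of $I$ is active contributes a nonzero entry $w_i R_{ij}$ to $\mb w^T\mb X$ (here we use $\P[R_{ij} = 0] = 0$), so $\|\mb w^T\mb X\|_0 \ge N_1(I)$, where $N_1(I)$ is a sum of $p$ i.i.d.\ $\mathrm{Bernoulli}\bigl(k\theta(1-\theta)^{k-1}\bigr)$ variables. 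For $2 \le k \lesssim \theta^{-1}\log(1/\theta)$ the mean $p\,k\theta(1-\theta)^{k-1}$ is a constant factor larger than $(1+\delta)\theta p$ — the binding case being $k=2$, where one only needs $\theta \le 1/C$ — so $\P[N_1(I) \le (1+\delta)\theta p] \le \exp(-c\,k\theta p)$. Union-bounding over the $\binom{n}{k} \le (en/k)^k$ supports of size $k$ costs a factor $\exp(k\log(en/k))$, which is dominated because $\theta p \ge C\log n$; summing the resulting geometric series in $k$ contributes $O(\exp(-c\theta p))$ to the failure probability.

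\emph{Large supports --- the crux.} For $k \gtrsim \theta^{-1}\log(1/\theta)$ the bound $\|\mb w^T\mb X\|_0 \ge N_1(I)$ is too weak. Here a ``bad'' $\mb w$ with support $I$ would have to annihilate all but $\le s$ of the $p$ columns of $\mb X_I$; since a fully supported $\mb w$ annihilates no column with exactly one active $I$-coordinate, and only the $N_0(I) \le 2p\theta$ columns (w.h.p.) with no active $I$-coordinate are annihilated automatically, $\mb w$ must annihilate a $(1-O(\theta))$ fraction of the columns with $\ge 2$ active coordinates --- i.e.\ some $k\times(1-O(\theta))p$ submatrix of $\mb X_I$ has a fully supported left-null vector. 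I would rule this out by an $\varepsilon$-net over the unit sphere of $\Re^{k}$ with $\varepsilon$ polynomially small in $n$: for a fixed ``spread'' direction (one with a constant fraction of coordinates of comparable magnitude), an anticoncentration (Littlewood--Offord-type) estimate shows that it is nearly orthogonal to a given column of $\mb X_I$ with probability at most a constant $\rho < 1$, because such a column carries $\Omega(k\theta)$ independent entries of size $\gtrsim k^{-1/2}$; hence it is nearly orthogonal to $(1-O(\theta))p$ independent columns with probability $\rho^{\Omega(p)}$, which beats the net cardinality $(C/\varepsilon)^{k} \le (C/\varepsilon)^{n}$ since $p \gg n\log n$. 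Directions that are instead $\ell^2$-concentrated on a few coordinates must be handled separately, using that a small perturbation of a standard basis vector still produces a large support after multiplication by $\mb X$. I expect this separation of the spread and concentrated directions, together with reconciling the exponential-in-$n$ union bound against the small-ball estimate, to be the main difficulty; everything else is routine concentration, and collecting the four pieces --- $\mb X$ having a zero row, $s > (1+\delta)\theta p$, small-support failure, and large-support failure --- yields the stated bound $1 - C'n\exp\{-c\theta p\}$.
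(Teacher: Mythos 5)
Your reduction and your treatment of small supports are sound and essentially match the paper's: the paper likewise reduces to showing that every linear combination of two or more rows of $\mb X$ has more than $\max_i \|\mb e_i^T \mb X\|_0 \approx \theta p$ nonzeros (Lemmas \ref{lem:uniq_2} and \ref{lem:uniqueLastGR}), and for small supports it also exploits the columns meeting the support in exactly one coordinate. The genuine gap is in the large-support case, which you correctly identify as the crux and then leave as a plan with acknowledged open difficulties. Two concrete problems with the route you sketch. First, the per-column small-ball bound $\P[|\mb z^T \mb X_j| \le \delta] \le \rho < 1$ simply fails for net points $\mb z$ that are $\ell^2$-concentrated on one coordinate (for $\mb z \approx \mb e_1$ the probability is at least $1-\theta$), so the ``concentrated directions'' branch is not a routine patch: it has to recover a count of $(1+\delta)\theta p$ non-annihilated columns by an entirely different mechanism, with cross-terms between the large and small parts of $\mb z$ that can cancel exactly in the Rademacher case. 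Second, even in the spread branch, the scale $\delta$ at which you need anticoncentration is tied to the net resolution $\eps$ times $\max_j \|\mb X_j\|_2$, and you must verify that the Littlewood--Offord bound at that scale is a constant uniformly over the net while the net cardinality $(C/\eps)^k$ stays below $\exp(cp)$ with $p$ only $Cn\log n$; this is a delicate balance you have not carried out.

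The missing idea that makes the paper's proof short is Lemma \ref{lem:uniq_2.5R}: instead of a net over candidate vectors $\mb \alpha$, track the left nullspace $N_j$ of the first $j$ columns restricted to fully dense vectors. By symmetrization the nonzeros can be taken Rademacher, and the Erd\H{o}s (Littlewood--Offord) theorem says each new column with at least two nonzeros strictly decreases $\dim N_j$ with probability at least $1/2$, independently of the past; since the dimension can drop at most $s$ times, a union bound over which $b-s$ columns fail to decrease it gives $\binom{b}{s}2^{-b+s}$. This needs anticoncentration only at the single point $0$ (exactly what $\P[R_{ij}=0]=0$ plus symmetry provides), requires no spread/concentrated dichotomy, and handles the adversarial, $\mb X$-dependent choice of $\mb \alpha$ automatically. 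I would encourage you to either adopt that argument for the large-support case or substantially expand your net-based plan before claiming the theorem.
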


\subsection{Sketch of Proof} 
Rather than looking at the problem as one of trying to
recover the sparse columns of $\mb X$, we instead try to recover the
sparse rows. 
As $\mb X$ has full row rank with very high probability, the following lemma
  tells us that for any other factorization the row spaces of $\mb X$, $\mb Y$ and $\mb X'$
  are likely the same.

\begin{lemma}\label{lem:uniq_2}
If $\mathrm{rank}(\mb X)=n$, $\mb A$ is nonsingular, and $\mb Y$ can be decomposed into $\mb Y = \mb A' \mb X'$, then the row spaces of $\mb X'$, $\mb X$, and $\mb Y$ are the same. 
\end{lemma}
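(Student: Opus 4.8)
The plan is to show that all three row spaces coincide by exploiting the invertibility of the two dictionaries; since $\mb A$ is square and nonsingular and the alternative dictionary turns out to be as well, this is essentially a two-line linear-algebra argument, and I do not expect a serious obstacle.

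First I would treat the given factorization. Because $\mb A$ is nonsingular and $n\times n$, we have simultaneously $\mb Y=\mb A\mb X$ and $\mb X=\mb A^{-1}\mb Y$: the first identity exhibits every row of $\mb Y$ as a linear combination of rows of $\mb X$, and the second exhibits every row of $\mb X$ as a linear combination of rows of $\mb Y$. Hence $\mathrm{row}(\mb X)=\mathrm{row}(\mb Y)$, and this common subspace has dimension $\mathrm{rank}(\mb X)=n$.

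Next I would establish that the alternative dictionary $\mb A'$ is itself nonsingular. We are in the square case, so $\mb A'\in\Re^{n\times n}$ and $\mb X'\in\Re^{n\times p}$. From $\mb Y=\mb A'\mb X'$ we get $n=\mathrm{rank}(\mb Y)\le\min\{\mathrm{rank}(\mb A'),\mathrm{rank}(\mb X')\}\le n$, so $\mathrm{rank}(\mb A')=n$, i.e.\ $\mb A'$ is invertible (and, incidentally, $\mathrm{rank}(\mb X')=n$ too). Now the argument of the previous paragraph applies verbatim with $(\mb A',\mb X')$ in place of $(\mb A,\mb X)$: from $\mb Y=\mb A'\mb X'$ and $\mb X'=(\mb A')^{-1}\mb Y$ we conclude $\mathrm{row}(\mb X')=\mathrm{row}(\mb Y)$. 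Chaining the two equalities gives $\mathrm{row}(\mb X')=\mathrm{row}(\mb X)=\mathrm{row}(\mb Y)$, which is the claim.

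The only step that is not completely immediate is the middle one — deducing, rather than assuming, that $\mb A'$ is invertible — and this relies on the rank bound above together with the fact that in our setting the dictionary is square. (If $\mb A'$ were allowed to have more than $n$ columns one could only conclude $\mathrm{row}(\mb Y)\subseteq\mathrm{row}(\mb X')$, so squareness is genuinely used here.)
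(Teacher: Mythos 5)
Your proof is correct and follows essentially the same route as the paper's: both deduce $\mathrm{rank}(\mb A') = n$ from $\mathrm{rank}(\mb Y) = \mathrm{rank}(\mb X) = n$ and then use the nonsingularity of both $\mb A$ and $\mb A'$ to identify all three row spaces. Yours simply spells out the details that the paper's two-line argument leaves implicit.
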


We will prove that the sparsest vectors in the row-span of $\mb Y$ are the
rows of $\mb X$. As any other factorization $\mb Y = \mb A' \mb X'$ will have the same
row-span, all of the rows of $\mb X'$ will lie in the row-span of
$\mb Y$. This will tell us that they can only be sparse if they are in
fact rows of $\mb X$. This is reasonable, since if distinct rows of $\mb
X$ have nearly disjoint patterns of nonzeros, taking linear
combinations of them will increase the number of nonzero entries.

\begin{lemma}\label{lem:uniqueMore}
Let $\mb \Omega$ be an $n$-by-$p$ $\mathrm{Bernoulli}(\theta)$ matrix  with $1/n < \theta < 1/4$. For each set $S \subseteq [n]$, let $T_{S} \subseteq [p]$ be the indices of the columns of $\mb \Omega$ that have at least one non-zero entry in some row indexed by $S$.
\begin{enumerate}
\item [a.] For every set $S$ of size $2$, 
\[
  \prob{{\sizeof{T_{S}} \leq (4/3) \theta p }} \;\leq\; \exp \left(- \frac{\theta p}{108} \right).
\]

\item [b.] For every set $S$ of size $\sigma $ with $3 \leq \sigma \leq 1/\theta  $
\[
  \prob{{\sizeof{T_{S}} \leq (3 \sigma / 8) \theta p }} \;\leq\; \exp \left( - \frac{\sigma \theta p}{ 64 } \right).
\]
\item [c.] For every set $S$ of size $\sigma $ with $1/\theta  \leq \sigma$,
\[
  \prob{{\sizeof{T_{S}} \leq (1-1/e)  p /2}} \;\leq\; \exp \left(-\frac{ (1-1/e)p}{8} \right).
\]
\end{enumerate}
\end{lemma}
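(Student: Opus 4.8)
The plan is to reduce the statement to a single binomial tail bound. Fix $S\subseteq[n]$ with $|S|=\sigma$. Column $j$ of $\mb\Omega$ lies in $T_{S}$ precisely when at least one of the $\sigma$ entries $\{\Omega_{ij}:i\in S\}$ equals $1$, which happens with probability $q_{\sigma}\doteq 1-(1-\theta)^{\sigma}$; and since distinct columns of $\mb\Omega$ involve disjoint, independent blocks of entries, the events $\{j\in T_{S}\}$, $j\in[p]$, are mutually independent. Hence $\sizeof{T_{S}}\sim\mathrm{Binomial}(p,q_{\sigma})$ with mean $\mu\doteq p\,q_{\sigma}$, and the only probabilistic input needed is the standard multiplicative Chernoff lower-tail bound
\[
  \P\bigl[\,\mathrm{Binomial}(p,q)\le(1-\delta)\,pq\,\bigr]\;\le\;\exp\!\left(-\tfrac{\delta^{2}pq}{2}\right),\qquad \delta\in(0,1).
\]
Everything else is arithmetic: in each regime I would lower-bound $q_{\sigma}$ and rewrite the stated threshold as $(1-\delta)\mu$ for a suitable $\delta$.

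Using $q_{\sigma}=1-(1-\theta)^{\sigma}\ge 1-e^{-\sigma\theta}$ together with the elementary inequalities $1-e^{-x}\ge x/2$ for $0\le x\le 1$ and $1-e^{-x}\ge 1-1/e$ for $x\ge 1$, I get: (a) for $\sigma=2$ and $\theta<1/4$, $q_{2}=\theta(2-\theta)\ge \tfrac32\theta$, so the threshold $(4/3)\theta p$ is at most $\tfrac{8}{9}\mu$, giving $\delta\ge \tfrac19$; (b) for $3\le\sigma\le 1/\theta$, since then $\sigma\theta\le1$ we have $q_{\sigma}\ge\sigma\theta/2$, so $(3\sigma/8)\theta p\le\tfrac34\mu$, giving $\delta\ge\tfrac14$; (c) for $\sigma\ge 1/\theta$, $q_{\sigma}\ge 1-1/e$, so $(1-1/e)p/2\le\mu/2$, giving $\delta\ge\tfrac12$. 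Plugging these $\delta$'s and the respective lower bounds on $\mu$ into the Chernoff inequality yields exponents at least $\theta p/108$, $\sigma\theta p/64$, and $(1-1/e)p/8$, which are exactly the three claimed bounds.

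The only point requiring a little care — the closest thing to an obstacle — is that in parts (a) and (b) both $\mu$ and $\delta$ move with the free parameter ($\theta$ in (a), $\sigma$ in (b)): the threshold $t$ is fixed, so $\delta=1-t/\mu$ and the exponent $\mu\delta^{2}/2=(\mu-2t+t^{2}/\mu)/2$ must be minimized over the admissible range of $\mu$. Since this quantity is increasing in $\mu$ whenever $\mu>t$ — which holds throughout, because in every case the threshold is a strict fraction of the lower estimate for $\mu$ — the worst case sits at the smallest admissible $\mu$, namely $\mu=\tfrac32\theta p$ in (a), $\mu=\sigma\theta p/2$ in (b), and $\mu=(1-1/e)p$ in (c). Substituting those values gives the stated exponents; in (a) there is comfortable slack (the constant $108$ is far from optimal), while in (b) and (c) the elementary bounds on $q_{\sigma}$ already make the estimate tight against the claimed constants.
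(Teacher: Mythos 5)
Your proof is correct and follows essentially the same route as the paper: identify $\sizeof{T_S}$ as a $\mathrm{Binomial}(p,1-(1-\theta)^{\sigma})$ variable, lower-bound its mean by $\tfrac32\theta p$, $\tfrac12\sigma\theta p$, and $(1-1/e)p$ in the three regimes, and apply the multiplicative Chernoff lower tail; your constants check out exactly. The only (welcome) difference is that you spell out the monotonicity of the exponent $(\mu-t)^2/(2\mu)$ in $\mu$, a detail the paper leaves implicit when it substitutes a lower bound for the true mean.
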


Lemma \ref{lem:uniqueMore} says that every subset of at least two rows
of $\mb X$ is likely to be supported on many more than $\theta p$ columns,
which is larger than the expected number of nonzeros $\theta p$ in
any particular row of $\mb X$. We show that for any vector $\mb \alpha \in \Reals{n}$
  with support $S$ of size at least $2$, it is unlikely
  that $\mb \alpha^{T} \mb X$ is supported on many fewer columns than
  are in $T_{S}$. In the next lemma, we call a vector $\mb \alpha$ fully dense if all of its entries are nonzero. 

\begin{lemma}\label{lem:nnzRademacher}
For $t > 200 s$,  let $\mb \Omega\in \{0,1\}^{s \times t}$ be any binary matrix with at least
one nonzero in each column.  
Let $\mb R \in \Re^{s \times t}$ be a random matrix whose entries are iid symmetric random variables, with $\prob{R_{ij} = 0} = 0$, and let $\mb U = \mb \Omega \odot \mb R$. Then, the probability that 
  there exists a fully-dense vector $\mb \alpha$
  for which
  $\norm{\mb \alpha^{T} \mb U  }_{0} \leq t/5$
  is at most $2^{-t/25}$.

\end{lemma}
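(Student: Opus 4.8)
We want to show that, except with probability $2^{-t/25}$, no fully-dense vector $\mb\alpha$ is orthogonal to more than $t/5$ of the columns of $\mb U$. The one real difficulty is that there is a \emph{continuum} of candidate vectors $\mb\alpha$; the plan is to trade the union bound over $\mb\alpha$ for a union bound over a finite family of subspaces of $\Re^{s}$. As a preliminary normalization, since each $R_{ij}$ is symmetric and $\P[R_{ij}=0]=0$, we write $R_{ij}=\xi_{ij}|R_{ij}|$ with signs $\xi_{ij}$ i.i.d.\ uniform on $\{\pm1\}$ and independent of $(|R_{ij}|)$; conditioning on $\mb\Omega$ and $(|R_{ij}|)$, it suffices to prove the bound when $U_{ij}=A_{ij}\xi_{ij}$, where $A_{ij}=\Omega_{ij}|R_{ij}|\ge0$, every column of $\mb A$ has a positive entry, and the $\xi_{ij}$ are i.i.d.\ Rademacher. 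Write $\mb U^{j}$ for the $j$th column of $\mb U$, $S_{j}=\{i:A_{ij}>0\}$ (nonempty) for its support, and $L(B)=\mathrm{span}\{\mb U^{j}:j\in B\}$ for $B\subseteq[t]$.

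\emph{From a vector to a subspace.} Suppose some fully-dense $\mb\alpha$ has $\norm{\mb\alpha^{T}\mb U}_{0}\le t/5$, and set $Z=\{j:\mb\alpha^{T}\mb U^{j}=0\}$, so $|Z|\ge 4t/5$. If $|S_{j}|=1$ then $\mb\alpha^{T}\mb U^{j}=\pm A_{ij}\alpha_{i}\ne0$, so every $j\in Z$ has $|S_{j}|\ge2$. Choose $B\subseteq Z$ so that $\{\mb U^{j}\}_{j\in B}$ is a basis of $L:=L(Z)$. Then $|B|=\dim L\le s-1$ (because $\mb\alpha\ne0$ and $\mb\alpha\perp L$), and, crucially, $\mb e_{i}\notin L$ for \emph{every} $i\in[s]$ (otherwise $\alpha_{i}=\mb\alpha^{T}\mb e_{i}=0$, contradicting full density). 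Since $Z\subseteq\{j:\mb U^{j}\in L\}$ and $L=L(B)$, at least $|Z|-|B|\ge 4t/5-(s-1)\ge 3t/4$ of the columns $\mb U^{j}$ with $j\notin B$ lie in $L(B)$, using $s<t/200$.

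\emph{Union bound and a pivot argument.} Hence the event to be bounded is contained in $\bigcup_{B}\event_{B}$, where $B$ runs over the $\sum_{k\le s-1}\binom{t}{k}$ subsets of $[t]$ of size at most $s-1$ and $\event_{B}$ is the event that $\mb e_{i}\notin L(B)$ for all $i$ and that at least $3t/4$ of the columns $\mb U^{j}$, $j\notin B$, lie in $L(B)$. Fix $B$ and condition on the signs in the columns of $B$; this fixes $L(B)$. If $\mb e_{i}\in L(B)$ for some $i$, the conditional probability of $\event_{B}$ is $0$, so assume not. For each $j\notin B$ with $|S_{j}|\ge2$, choose $i_{j}\in S_{j}$ and a vector $\mb\gamma_{j}\perp L(B)$ with $(\gamma_{j})_{i_{j}}\ne0$ (which exists since $\mb e_{i_{j}}\notin L(B)$), then reveal all remaining signs except the ``pivots'' $\xi_{i_{j},j}$. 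Because $\mb U^{j}\in L(B)$ forces $\mb\gamma_{j}^{T}\mb U^{j}=0$, and the coefficient of $\xi_{i_{j},j}$ in $\mb\gamma_{j}^{T}\mb U^{j}$ is $(\gamma_{j})_{i_{j}}A_{i_{j},j}\ne0$, this event pins $\xi_{i_{j},j}$ to a single value, so conditionally it has probability at most $1/2$; columns with $|S_{j}|=1$ never lie in $L(B)$ (that would force some $\mb e_{i}\in L(B)$). The pivots are distinct and independent, so the number of columns $j\notin B$ with $\mb U^{j}\in L(B)$ is stochastically dominated by $\mathrm{Bin}(t,1/2)$, and therefore $\P[\event_{B}]\le\P[\mathrm{Bin}(t,1/2)\ge 3t/4]\le e^{-t/8}$ by Hoeffding's inequality.

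\emph{Finishing, and the hard part.} Summing over $B$ and using $\sum_{k\le s-1}\binom{t}{k}\le\bigl(et/(s-1)\bigr)^{s-1}\le2^{t/20}$, which holds because $s-1<t/200$, a short calculation gives $\P[\text{bad event}]\le2^{t/20}e^{-t/8}\le2^{-t/25}$; integrating out the conditioning from the preliminary step finishes the proof. The step I expect to be the main obstacle is the ``vector to subspace'' reduction: it is what replaces the uncountable union over $\mb\alpha$ by the finite union over the subspaces $L(B)$, and it works precisely because ``$\mb\alpha$ fully dense and $\mb\alpha\perp L$'' is \emph{equivalent} to ``$L$ contains no standard basis vector $\mb e_{i}$'' — which is in turn exactly the condition that gives each remaining column conditional probability at most $\tfrac12$ of landing in $L$ in the pivot argument.
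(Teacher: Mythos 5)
Your proof is correct, and it reaches the paper's conclusion by a genuinely different route. The paper reduces to Rademacher signs exactly as you do, but then argues sequentially: it fixes a subset of $b=4t/5$ columns annihilated by $\mb \alpha$, tracks the left nullspaces $N_1 \supseteq N_2 \supseteq \dots$ of growing column prefixes, uses Erd\H{o}s's Littlewood--Offord theorem to show that each new column destroys the fully-dense-null-vector property with probability at least $1/2$ unless the nullspace dimension drops, notes the dimension can drop at most $s$ times, and finally takes a union bound over the $\binom{t}{4t/5}$ choices of annihilated columns and the $\binom{b}{s}$ choices of ``non-decreasing'' positions. You instead pass directly from the fully dense $\mb \alpha$ to the span $L(Z)$ of the annihilated columns, observe that full density of $\mb \alpha$ is exactly the statement that no $\mb e_i$ lies in $L(Z)$, and take a union bound only over the $\sum_{k \le s-1}\binom{t}{k} \le 2^{t/20}$ candidate basis index sets $B$, bounding each $\P[\event_B]$ by a pivot-conditioning argument plus Hoeffding. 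Your version buys two things: it avoids Erd\H{o}s's theorem entirely (your pivot argument isolates a single Rademacher sign, so only the trivial $1/2$ bound is needed), and it replaces the paper's dominant union-bound cost $\binom{t}{4t/5}\approx 2^{0.722t}$ -- which forces the delicate constant-chasing $0.722-0.8+0.0365$ in the paper -- with the much smaller $2^{t/20}$, leaving ample slack against $e^{-t/8}$. The paper's sequential formulation is arguably more elementary to state (no conditional stochastic domination), but your accounting is cleaner and more robust to the constants.
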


Combining Lemmas \ref{lem:uniqueMore} and \ref{lem:nnzRademacher}, we prove the following.  

\begin{lemma}\label{lem:uniqueLastGR} If $\mb X = \mb \Omega \odot \mb R$ follows the Bernoulli-Subgaussian model, with $\P[ R_{ij} = 0 ] = 0$,  $1/n < \theta < 1/C$ and $p > C n \log n$, then the probability that there is a vector $\mb \alpha$ with support of size larger than $1$ for which \[\norm{\mb \alpha^{T} \mb X}_{0} \;\leq\; (11/9) \, \theta p\] is at most $\exp (-c \theta p)$. Here, $C,c$ are numerical constants. 
\end{lemma}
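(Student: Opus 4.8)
The plan is to fix the support $S = \mathrm{supp}(\mb\alpha)$, of size $\sigma$, and show that $\norm{\mb\alpha^{T}\mb X}_{0} > (11/9)\theta p$ holds simultaneously for \emph{every} $\mb\alpha$ with that support, with failure probability small enough to survive a union bound over the $\binom{n}{\sigma}$ choices of $S$ and over $\sigma = 2,\dots,n$. The reduction step is: let $\mb U$ be the submatrix of $\mb X$ given by the rows in $S$ and the columns in $T_{S}$ (notation of Lemma~\ref{lem:uniqueMore}); columns outside $T_{S}$ vanish in $\mb\alpha^{T}\mb X$, so $\norm{\mb\alpha^{T}\mb X}_{0} = \norm{\mb\alpha_{S}^{T}\mb U}_{0}$, and conditioned on $\mb\Omega$ the matrix $\mb U = \mb\Omega'\odot\mb R'$ (with $\mb\Omega',\mb R'$ the corresponding submatrices of $\mb\Omega,\mb R$) has the exact form Lemma~\ref{lem:nnzRademacher} needs --- a binary matrix with at least one nonzero per column, Hadamard-multiplied by an independent symmetric matrix with $\P[R=0]=0$. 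Everything then comes down to comparing $\norm{\mb\alpha_{S}^{T}\mb U}_{0}$ with $(11/9)\theta p$, and the behaviour is genuinely different for small and large $\sigma$.

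For large supports, $\sigma > 1/\theta$, I would use part (c) of Lemma~\ref{lem:uniqueMore}: with probability $1 - \exp(-\Omega(p))$, $\sizeof{T_{S}} > (1-1/e)p/2$. On that event $\sizeof{T_{S}} > 200\sigma$ since $p \gg n \ge \sigma$, so Lemma~\ref{lem:nnzRademacher} applies (conditionally on $\mb\Omega$), and $\sizeof{T_{S}}/5 > (1-1/e)p/10 > (11/9)\theta p$ because $\theta < 1/C$ for $C$ a large absolute constant. Hence the conditional probability that some fully-dense $\mb\alpha_{S}$ achieves $\norm{\mb\alpha_{S}^{T}\mb U}_{0} \le (11/9)\theta p$ is at most $2^{-\sizeof{T_{S}}/25} = \exp(-\Omega(p))$. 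Summing over the at most $2^{n}$ sets $S$ of a given size and over sizes, $\exp(-\Omega(p))$ easily beats $2^{n}$ since $p > Cn\log n$, and is well below $\exp(-c\theta p)$.

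The small-support range $2 \le \sigma \le 1/\theta$ is where the off-the-shelf tools are too blunt: Lemma~\ref{lem:nnzRademacher} only certifies $\norm{\cdot}_{0} > \sizeof{T_{S}}/5$, and Lemma~\ref{lem:uniqueMore}(a)--(b) only give $\sizeof{T_{S}} \gtrsim \sigma\theta p$, so their combination reaches $(11/9)\theta p$ only once $\sigma$ is a largish constant; for $\sigma = 3$ in particular even $\E\sizeof{T_{S}} \approx 3\theta p$, so no concentration can push $\sizeof{T_{S}}/5$ past $(11/9)\theta p$. Instead I would look at the columns of $\mb\Omega$ with \emph{exactly one} nonzero among the rows of $S$, whose set I call $T_{S}^{=1}$. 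For any $\mb\alpha$ supported exactly on $S$ and any $c \in T_{S}^{=1}$ with unique active row $i$, we have $(\mb\alpha^{T}\mb X)_{c} = \alpha_{i}R_{ic} \ne 0$ (using $\alpha_{i}\ne 0$ and $\P[R=0]=0$), so such a column cannot be cancelled by \emph{any} $\mb\alpha$, giving $\norm{\mb\alpha^{T}\mb X}_{0} \ge \sizeof{T_{S}^{=1}}$ for all these $\mb\alpha$ at once. Now $\sizeof{T_{S}^{=1}}$ is $\mathrm{Binomial}(p,\sigma\theta(1-\theta)^{\sigma-1})$, and since $\sigma \mapsto \sigma(1-\theta)^{\sigma-1}$ is increasing on $\{2,\dots,\floor{1/\theta}\}$ it is at least its value at $\sigma = 2$, namely $2(1-\theta) > 3/2$ (as $\theta < 1/C$), and it is also at least $\sigma/e$. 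A Chernoff bound then gives $\P[\sizeof{T_{S}^{=1}} \le (11/9)\theta p] \le \exp(-c_{0}\sigma\theta p)$; summing over the $\le n^{\sigma}$ sets of size $\sigma$ and over $\sigma$, the total is $\sum_{\sigma\ge 2}\exp(-\sigma(c_{0}\theta p - \ln n)) \le \exp(-c\theta p)$, since $\theta p > C\log n$ makes the exponent strongly negative once $C$ is chosen large.

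Combining the two regimes gives the asserted $\exp(-c\theta p)$. The one place that needs an idea rather than bookkeeping is the small-$\sigma$ case: the ``collision floor'' $O(\sigma\theta p)$ that both black-box lemmas leave behind is the same order as the target $(11/9)\theta p$ when $\sigma$ is constant, and the way out is the elementary observation that a column with a single active row in $S$ is never zeroed out, which yields the clean uniform bound $\norm{\mb\alpha^{T}\mb X}_{0} \ge \sizeof{T_{S}^{=1}}$. After that it is routine: confirm that $\sigma(1-\theta)^{\sigma-1}$ stays above $11/9$ (with Chernoff slack) for all $2 \le \sigma \le 1/\theta$, check that $p > Cn\log n$ together with $\theta \ge 1/n$ closes every union bound, and fix the absolute constant $C$ once and for all.
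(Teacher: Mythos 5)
Your proof is correct, and it uses the same toolkit as the paper's: a union bound over supports $S$, Lemma~\ref{lem:uniqueMore} to control $|T_S|$, Lemma~\ref{lem:nnzRademacher} (via the Erd\H{o}s anticoncentration bound) for large supports, and the observation that a column with exactly one nonzero in the rows of $S$ can never be cancelled by any fully dense $\mb\alpha$ for small supports. The one genuine difference is how the regimes are partitioned. The paper uses three regimes: the single-nonzero-column argument only for $2 \le \sigma \le 17$ (and only sketches it, with a condition that $\theta$ be small relative to $\sigma$), Lemma~\ref{lem:uniqueMore}(b) plus Lemma~\ref{lem:nnzRademacher} for $17 \le \sigma \le 1/\theta$, and part (c) plus Lemma~\ref{lem:nnzRademacher} for $\sigma \ge 1/\theta$. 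You correctly diagnose why the combination of the two black-box lemmas cannot reach the threshold $(11/9)\theta p$ until $\sigma$ exceeds a largish constant (the paper's cutoff $17$ comes precisely from needing $(3/40)\sigma \ge 11/9$), and you resolve it by running the single-nonzero-column argument over the entire range $2 \le \sigma \le 1/\theta$, using $\E|T_S^{=1}| = \sigma\theta(1-\theta)^{\sigma-1}p \ge \max\{2(1-\theta), \sigma/e\}\,\theta p$ to get uniform Chernoff slack above $(11/9)\theta p$. This consolidation eliminates the middle regime entirely and replaces the paper's sketched small-$\sigma$ case with a complete, uniform argument; the only cost is the mild bookkeeping around the near-monotonicity of $\sigma(1-\theta)^{\sigma-1}$, which your two lower bounds jointly cover. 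The large-support regime and all union-bound accounting match the paper's.
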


Theorem~\ref{thm:uniq} follows from Lemmas~\ref{lem:uniq_2} and ~\ref{lem:uniqueLastGR} and the observation that with high probability each of the rows of $\mb X$ has at most $(10/9) \, \theta p$ nonzeros. We give a formal proof of Theorem \ref{thm:uniq} and its  supporting lemmas in Appendix \ref{app:unique-proof}.

\section{Exact Recovery}\label{sec:alg}

Theorem \ref{thm:uniq} suggests that we can recover $\mb X$ by looking
for sparse vectors in the row space of $\mb Y$. Any vector in this
space can be generated by taking a linear combination $\mb w^T \mb Y$
of the rows of $\mb Y$ (here, $\mb w^T$ denotes the vector
transpose). We arrive at the optimization problem
$$\text{minimize} \;\| \mb w^T \mb Y \|_0 \quad \text{subject to}
\quad \mb w \ne \mb 0.$$ Theorem \ref{thm:uniq} implies that any
solution to this problem must satisfy $\mb w^T \mb Y = \lambda \mb
e_j^T \mb X$ for some $j \in [n]$, $\lambda \ne 0$. Unfortunately,
both the objective and constraint are nonconvex. 
We therefore replace
  the $\ell^0$ norm with its convex envelope, the $\ell^1$ norm, and
  prevent $\mb w$ from being the zero vector by constraining it to
 lie in an affine hyperplane $\{ \mb r^T \mb w = 1 \}$. This gives a
  linear programming problem of the form
\begin{equation} \label{eqn:main-opt}
\text{minimize} \;\| \mb w^T \mb Y \|_1 \quad \text{subject to} \quad \mb r^T \mb w = 1.
\end{equation}
We will prove that this linear program is likely to produce rows of $\mb X$
  when we choose $\mb r$ to be a column or a sum of two columns of $\mb Y$.

\subsection{Intuition} 

To gain more insight into the optimization problem
\eqref{eqn:main-opt}, we consider for analysis an equivalent problem, under the change
of variables $\mb z = \mb A^T \mb w$, $\mb b = \mb A^{-1} \mb r$:
\begin{equation} \label{eqn:main-opt-X}
\text{minimize} \; \| \mb z^T \mb X \|_1 \quad \text{subject to} \quad \mb b^T \mb z = 1.
\end{equation}
When we choose $\mb r$ to be a column of $\mb Y$,
  $\mb b$ becomes a column of $\mb X$.
While we do not know $\mb A$ or $\mb X$ and so cannot directly solve problem
  \eqref{eqn:main-opt-X}, it is equivalent to problem \eqref{eqn:main-opt}:
   \eqref{eqn:main-opt} recovers a row of $\mb X$ if and only
  if the solution to \eqref{eqn:main-opt-X} is a scaled multiple of a
  standard basis vector: $\mb z_\star = \lambda \mb e_j$, for some $j$,
  $\lambda$.

To get some insight into why this might occur, consider what would
happen if $\mb X$ exactly preserved the $\ell_{1}$ norm: i.e., if $\| \mb z^T \mb
X \|_1 = c \| \mb z \|_1$ for all $\mb z$ for some constant $c$.  
The solution to
  \eqref{eqn:main-opt-X} would just be the vector $\mb z$ of smallest $\ell^1$
  norm satisfying $\mb b^T \mb z = 1$, which would be  $\mb e_{j_\star} /
  b_{j_\star}$, where 
  $j_\star$ is the index of the  element of
  $\mb b = \mb A^{-1} \mb r$ of largest magnitude. The algorithm would simply
extract the row of $\mb X$ that is most ``preferred'' by $\mb b$!

Under the random coefficient models considered here, $\mb X$ {\em approximately} preserves the $\ell_{1}$ norm, but does not exactly preserve it \cite{Matousek08}. Our algorithm can tolerate this approximation if
  the largest element of $\mb b$
  is significantly larger than the other elements.
In this case
  we can still apply the above argument to show that
  \eqref{eqn:main-opt-X} will recover the $j_{\star}$-th row of $\mb X$.
In particular, if
  we let $|\mb b|_{(1)} \ge |\mb b|_{(2)} \ge \dots \ge |\mb b|_{(n)}$
  be the absolute values of the entries of $\mb b$ in decreasing order,
  we will require both $|\mb b|_{(2)} / |\mb b|_{(1)} < 1 - \gamma$
  and that the total number of nonzeros in $\mb b$ is at most $c / \theta$. The gap $\gamma$ determines fraction $\theta$ of nonzeros that the algorithm can tolerate. 

If the nonzero entries of $\mb X$ are Gaussian, then when we choose $\mb r$ to be a column of $\mb Y$ (and thus
  $\mb b = \mb A^{-1} \mb r$ to be a column of $\mb X$), properties of the order statistics
  of Gaussian random vectors imply that our requirements are probably met. In other coefficient models, the gap $\gamma$ may not be so prominent. For example, if the nonzeros of $\mb X$ are Rademacher (iid $\pm 1$), there is no gap whatsoever between the magnitudes of the largest  and second-largest elements. For this reason, we instead choose $\mb r$ to be the sum of two columns of $\mb Y$
  and thus $\mb b$ to be the sum of two columns of $\mb X$.
When $\theta < 1/\sqrt{n}$, there is a reasonable chance that the support of these
  two columns overlap in exactly one element, in which case we obtain a gap
  between the magnitudes of the largest two elements in the sum.
This modification also provides improvements in the Bernoulli-Gaussian model.

\subsection{The Algorithms}

Our algorithms are divided into two stages. 
In the first stage, we collect many potential rows of $\mb X$ by solving problems of the
  form \eqref{eqn:main-opt}. 
In the simpler Algorithm \textbf{ER-SpUD(SC)} (``single column''),
  we do this by using each column of $\mb Y$ as the constraint vector $\mb r$
  in the optimization.
In the slightly better Algorithm \textbf{ER-SpUD(DC)} (``double column''),
  we pair up all the columns of $\mb Y$ and then substitue the sum
  of each pair for $\mb r$.
In the second stage, we use a greedy algorithm (Algorithm \textbf{Greedy}) to select
  a subset of $n$ of the rows produced.
In particular, we choose a linearly independent subset among those
  with the fewest non-zero elements.
From the proof of the uniqueness of the decomposition, we know with high
  probability that the rows of $\mb X$ are the sparsest $n$ vectors in
  $\mathrm{row}(\mb Y)$.
Moreover, for $p \geq \Omega ( n \log  n)$, Theorems~\ref{thm:correct} and \ref{thm:correct-tc},
  along with the coupon collection phenomenon, tell us that a scaled multiple 
  of each of the rows of $\mb X$ is returned by the first phase of our algorithm, with high probability.\footnote{Preconditioning by setting $\mb Y_p=(\mb Y \mb Y^T)^{-1/2}\mb Y$ helps in simulation, while our analysis does not require $\mb A$ to be well conditioned.}

\newenvironment{fminipage}%
 {\begin{Sbox}\begin{minipage}}%
 {\end{minipage}\end{Sbox}\fbox{\TheSbox}}

\newenvironment{algbox}[0]{\vskip 0.2in
\noindent
\begin{fminipage}{6.3in}
}{
\end{fminipage}
\vskip 0.2in
}
\begin{algbox}
\noindent \textbf{ER-SpUD(SC):}\texttt{ Exact Recovery of Sparsely-Used Dictionaries using single columns of $\mb Y$ as constraint vectors.}\label{alg:spud-sc}
\begin{enumerate}
\item[] For $j=1\dots p$
\begin{enumerate}
\item []Solve
  $\min_{\mb w} \; \| \mb w^T \mb Y\|_1 \text{ subject to }  (\mb Y \mb e_j)^T \mb w = 1,$ and set $\mb  s_{j} = \mb w^{T} \mb Y$. 
\end{enumerate}
\end{enumerate}
\end{algbox}

\begin{algbox}
\noindent \textbf{ER-SpUD(DC):}\texttt{ Exact Recovery of Sparsely-Used Dictionaries using the sum of two columns of $Y$ as constraint vectors.}\label{alg:spud-dc}
\begin{enumerate}
\item Randomly pair the columns of $\mb Y$ into $p/2$ groups $g_j=\{\mb Y\mb e_{j_{1}},\mb Y\mb e_{j_{2}}\}$.
\item For $j=1\dots p/2$
\begin{enumerate}
\item [] Let $\mb r_j=\mb Y\mb e_{j_{1}}+\mb Y\mb e_{j_{2}}$, where $g_j = \setof{\mb Y\mb e_{j_{1}}, \mb Y \mb e_{j_{2}}}  $.
\item [] Solve $\min_{\mb w} \; \| \mb w^T \mb Y\|_1  \text{ subject to }  \mb r_j^T \mb w = 1,$
   and set $\mb  s_{j} = \mb w^{T} \mb Y$. 
\end{enumerate}
\end{enumerate}
\end{algbox}
\vspace{-0.2in}
\begin{algbox}
\noindent \textbf{Greedy:}\texttt{ A Greedy Algorithm to Reconstruct $\mb X$ and $\mb A$.}\label{alg:spud-greedy}
\begin{enumerate}
\item \textbf{REQUIRE:} $\mc S = \{ \mb s_1, \dots, \mb s_T \} \subset \Re^p$. 
\item For $i=1\dots n$
\begin{enumerate}
\item [] REPEAT
\begin{enumerate}
\item[] $l \gets \arg\min_{\mb s_l \in \mc S}\| \mb s_l \|_0$, breaking ties arbitrarily
\item[] $\mb x_i=\mb s_l$
\item[] $\mc S=\mc S \backslash \{\mb s_l\}$
\end{enumerate}
\item[] \textbf{UNTIL} \texttt{rank([$\mb x_1,\dots, \mb x_i$])$=i$}
\end{enumerate}
\item Set $\mb X=[\mb x_1,\dots, \mb x_n]^T$, and $\mb A = \mb Y \mb Y^T(\mb X \mb Y^T)^{-1}$.
\end{enumerate}
\end{algbox}

\paragraph{Comparison to Previous Work.} The idea of seeking the rows of $\mb X$ sequentially, by looking for sparse vectors in $\mathrm{row}(\mb Y)$, is not new {\em per se}. For example, in \cite{Zibulevsky2001-NC}, Zibulevsky and Pearlmutter suggested solving a sequence of optimization problems of the form
$$\text{minimize} \;\; \| \mb w^T \mb Y \|_1 \quad \text{subject to} \quad \| \mb w \|_2^2 \ge 1.$$
However, the non-convex constraint in this problem makes it difficult to solve.
In more recent work, Gottlieb and Neylon \cite{Gottlieb10} suggested using linear constraints as in \eqref{eqn:main-opt}, but choosing $\mb r$ from the standard basis vectors $\mb e_1 \dots \mb e_n$. 

The difference between our algorithm and that of Gottlieb and Neylon---the 
  use of columns of the sample
  matrix $\mb Y$ as linear constraints instead of elementary unit vectors,
  is crucial to the functioning of our algorithm
(simulations of their Sparsest Independent Vector algorithm are reported below).
In fact, there are simple examples of orthonormal matrices
   $\mb A$ for which the algorithm of
\cite{Gottlieb10} provably fails, whereas Algorithm \textbf{ER-SpUD(SC)}
succeeds with high probability. One concrete example of this is a
Hadamard matrix: in this case, the entries of $\mb b = \mb A^{-1} \mb
e_j$ all have exactly the same magnitude, and \cite{Gottlieb10} fails
because the gap between $|\mb b|_{(1)}$ and $|\mb b|_{(2)}$ is
zero when $\mb r$ is chosen to be an elementary unit vector.
In this situation, Algorithm \textbf{ER-SpUD(DC)} still succeeds
with high probability.

\section{Main Theoretical Results}\label{sec:main}

The intuitive explanations in the previous section can be made rigorous. In particular, under our random models, we can prove that when the number of samples is reasonably large compared to the dimension, (say $p \sim n^2 \log^2 n$), with high probability in $\mb X$ the algorithm will succeed.
We conjecture it is possible to decrease the dependency on $p$ to $O (n \log n)$.

\begin{theorem}[Correct recovery (single-column)] \label{thm:correct}
Suppose $\mb X$ is $\mathrm{Bernoulli}(\theta)\mathrm{-Gaussian}$. 
Then provided $p > c_{1} n^2 \log^2 n$, and 
\begin{equation}
\frac{2}{n}\;\leq\; \theta \;\le\; \frac{\alpha }{\sqrt{n \log n}},
\end{equation}
with probability at least $1-c_f p^{-10}$,
the Algorithm \textbf{ER-SpUD(SC)} recovers all $n$ rows of $\mb X$. That is, all $n$ rows of $\mb X$ are included in the $p$ potential vectors $\mb w_1^T \mb Y,\dots, \mb w_p^T \mb Y$. Above, $c_1$, $\alpha$ and $c_f$ are positive numerical constants. 
\end{theorem}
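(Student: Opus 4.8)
The plan is to analyze \textbf{ER-SpUD(SC)} through its equivalent form \eqref{eqn:main-opt-X}: when $\mb r = \mb Y\mb e_j$, the variable $\mb b = \mb A^{-1}\mb r$ is the $j$-th column of $\mb X$, and the $j$-th program returns $b_i^{-1}\mb e_i^T\mb X$ exactly when the minimizer of $\min_{\mb z}\{\norm{\mb z^T\mb X}_1 : \mb b^T\mb z = 1\}$ equals $\mb z_\star = \mb e_i/b_i$. So it suffices to show that, with probability $\ge 1 - c_f p^{-10}$, (1) for every $i\in[n]$ some column $j$ is \emph{favorable for $i$}, and (2) every such column yields $\mb z_\star = \mb e_i/b_i$ as the \emph{unique} minimizer. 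Call column $j$ favorable for $i$ if, with $\mb b = \mb X\mb e_j$: (a) $X_{ij}\neq 0$; (b) $|X_{ij}| = |\mb b|_{(1)}\ge 1/10$; (c) $|\mb b|_{(2)}/|\mb b|_{(1)} < 1-\gamma$ with $\gamma := c_3/\log n$; (d) $\norm{\mb b}_0\le 2\theta n$. These depend on column $j$ only, and the columns are i.i.d., so I would lower bound the probability $q$ that one column is favorable for $i$ by conditioning on the support of $\mb b$ (of size $s\le 2\theta n$ with constant probability, by a Chernoff bound and $\theta n\ge 2$): the nonzeros are i.i.d.\ $\gaussian{0}{1}$, so $|X_{ij}|$ is the largest in magnitude with probability $\Theta(1/s)$, and, given that and $|\mb b|_{(1)}\ge 1/10$, order-statistic estimates for the top two of $s$ Gaussians give relative gap $> c_3/\log s\ge\gamma$ with probability bounded below by a constant. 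Hence $q = \Omega(\theta/s) = \Omega(1/n)$, the chance no column is favorable for $i$ is $\le(1-q)^p\le e^{-cp/n}$, and a union bound over $i$ is harmless since $p/n\gg\log n$.

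For (2) I would isolate a regularity event $\event$ on $\mb X$ on which the program is analyzed deterministically. Write $J_i := \{\ell : X_{i\ell}\neq 0\}$, $\mb M_i := \mb P_{[n]\setminus i}\,\mb X^{[p]\setminus J_i}$ (rows $\neq i$, columns where row $i$ vanishes), and $\mb g_i := \sum_{\ell\in J_i}\mathrm{sign}(X_{i\ell})\,\mb P_{[n]\setminus i}\mb X\mb e_\ell$; let the subscript $-i$ denote $\mb P_{[n]\setminus i}(\cdot)$. Then $\event$ asserts: $\mathrm{rank}(\mb X)=n$; $\norm{\mb e_k^T\mb X}_1\in(1\pm\eps)\theta p\mu$ for all $k$ with $\eps$ tiny; $\norm{\mb g_i}_\infty\le C\theta\sqrt{p\log(np)}$ and $\norm{\mb g_i}_2\le C\theta\sqrt{np\log(np)}$ for all $i$; a \emph{weak} bound $\norm{\mb u^T\mb M_i}_1\ge c_2\sqrt\theta\,p\,\norm{\mb u}_2$ for all $\mb u$ and $i$; and a \emph{strong} bound $\norm{\mb u^T\mb M_i}_1\ge(1-\gamma/4)\theta p\mu\,\norm{\mb u}_1$ for all $\mb u$ with $\norm{\mb u}_0\le\rho/\theta$ and all $i$, where $\rho$ is a small absolute constant. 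Each clause is a routine concentration statement (Bernstein / bounded differences over columns, plus covering arguments for the last two). The \textbf{strong bound is the binding one}: for a fixed support $T$ with $|T|\le\rho/\theta$ one lower-bounds $\norm{\mb u^T\mb M_i}_1$ by the contribution of ``singleton'' columns (hitting exactly one coordinate of $T$), whose count must concentrate to relative accuracy $\asymp\gamma\asymp 1/\log n$, costing $\exp(-\Omega(\gamma^2\theta p))$ per support; weighed against the $\binom{n}{\le\rho/\theta}$ supports and taken at the worst case $\theta\asymp 1/n$, this forces $p\gtrsim n^2\log^2 n$. (Smallness of $\rho$ is also needed for the bound to be \emph{true} --- Bernoulli--Gaussian matrices do not preserve $\ell^1$ norm for directions spread over $\asymp 1/\theta$ coordinates, which is why those are relegated to the weak $\ell^2$ bound.)

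The deterministic core of (2): fix $i$ and a column $j$ favorable for $i$, with $b_i = |\mb b|_{(1)} > 0$. The point $\mb z_\star = \mb e_i/b_i$ is feasible with objective $\norm{\mb e_i^T\mb X}_1/b_i$; a feasible perturbation is $\mb z_\star + t\mb h$ with $\mb b^T\mb h = 0$, and since $X_{i\ell}=0$ on $[p]\setminus J_i$ the one-sided directional derivative of the convex, piecewise-linear objective at $\mb z_\star$ is
\[ g(\mb h) \;=\; \norm{\mb h_{-i}^T\mb M_i}_1 \;+\; \mb h_{-i}^T\mb g_i \;-\; \frac{\norm{\mb e_i^T\mb X}_1}{b_i}\,\mb b_{-i}^T\mb h_{-i}. \]
It suffices to show $g(\mb h) > 0$ whenever $\mb h_{-i}\neq\mb 0$ (else $\mb h = \mb 0$), so by convexity $\mb z_\star$ is the unique minimizer. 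Split on $\mb h_{-i}$: if $(\norm{\mb h_{-i}}_1/\norm{\mb h_{-i}}_2)^2\le\rho/\theta$ (after a short peeling, reduce to $\norm{\mb h_{-i}}_0\le\rho/\theta$) use the strong $\ell^1$ bound with $|\mb b_{-i}^T\mb h_{-i}|\le|\mb b|_{(2)}\norm{\mb h_{-i}}_1 < (1-\gamma)b_i\norm{\mb h_{-i}}_1$ and $|\mb h_{-i}^T\mb g_i|\le\norm{\mb g_i}_\infty\norm{\mb h_{-i}}_1$; the $(1-\gamma/4)$ versus $(1-\gamma)$ margin swamps $\eps$ and the $\mb g_i$ term under the sample hypothesis, giving $g(\mb h)\gtrsim\gamma\,\theta p\mu\,\norm{\mb h_{-i}}_1 > 0$. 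Otherwise $\mb h_{-i}$ is spread: use the weak $\ell^2$ bound with $|\mb b_{-i}^T\mb h_{-i}|\le\norm{\mb b_{-i}}_2\norm{\mb h_{-i}}_2$ and $|\mb h_{-i}^T\mb g_i|\le\norm{\mb g_i}_2\norm{\mb h_{-i}}_2$; on $\event$, using $\norm{\mb b}_0\le 2\theta n$ so $\norm{\mb b_{-i}}_2 = O(\sqrt{\theta n})$, dividing by $\sqrt\theta\,p\,\norm{\mb h_{-i}}_2$ leaves competing terms $O(\theta\sqrt n/b_i) = O(\alpha/(b_i\sqrt{\log n}))$ and $o(1)$, both below $c_2$ since $b_i\ge 1/10$, $\alpha$ is small, and $n$ is large. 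Either way $g(\mb h) > 0$, so the $j$-th program returns $b_i^{-1}\mb e_i^T\mb X$.

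Putting it together: on $\event$ and on the coupon-collector event from (1) --- together failing with probability $\le c_f p^{-10}$ --- every row of $\mb X$ appears among $\mb w_1^T\mb Y,\dots,\mb w_p^T\mb Y$. The main obstacle is the strong uniform $\ell^1$-growth estimate for the $\mb M_i$: because Bernoulli--Gaussian matrices only approximately preserve $\ell^1$ norm and genuinely fail for dense directions, one must split directions by effective sparsity, and it is the combinatorial cost of the net over the $\binom{n}{\lesssim 1/\theta}$ support patterns set against the $\asymp 1/\log n$-accuracy concentration that pins the sample requirement at $p = \Omega(n^2\log^2 n)$. A secondary delicate point is calibrating the single gap $\gamma\asymp 1/\log n$ so that favorable columns remain common (step 1) while still leaving slack in the directional-derivative inequality.
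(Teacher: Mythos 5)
Your high-level architecture --- passing to the equivalent problem in $\mb z$, defining favorable columns, coupon-collecting over the $p$ independent columns, and certifying optimality of $\mb e_i/b_i$ deterministically on a regularity event with a $\gamma \asymp 1/\log n$ gap --- matches the paper's. The gap is in the deterministic core: both norm-preservation clauses of your event $\event$ are false as stated, and the dichotomy by effective sparsity of $\mb h_{-i}$ leaves an uncovered class of directions.

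Concretely: the strong bound $\norm{\mb u^T\mb M_i}_1\ge(1-\gamma/4)\theta p\mu\norm{\mb u}_1$ fails already in expectation for supports of size $\rho/\theta$ with $\rho$ an absolute constant. For $\mb u$ uniform on $m$ coordinates, a single column contributes $\mu\,\E\sqrt{K}/m$ per unit of $\norm{\mb u}_1$ with $K\sim\mathrm{Binomial}(m,\theta)$, and $\E\sqrt{K}\le\theta m\,(1-c\,\theta m)$ because multi-hit columns contribute $\sqrt k<k$; so the achievable constant is $1-c\rho$, and a constant deficit swamps the $\gamma/4=c_3/(4\log n)$ margin you need against the competitor $(1-\gamma)(1+\eps)\mu\theta p\norm{\mb u}_1$. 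The bound requires $\theta\norm{\mb u}_0\le c\gamma$ --- exactly the hypothesis $\theta s<\gamma/8$ of the paper's Lemma~\ref{lem:extreme-sparse-new}. The weak bound $\norm{\mb u^T\mb M_i}_1\ge c_2\sqrt\theta\,p\,\norm{\mb u}_2$ ``for all $\mb u$'' is also false ($\mb u=\mb e_k$ gives $\approx\mu\theta p\ll\sqrt{\theta}\,p$), and even restricted to $(\norm{\mb u}_1/\norm{\mb u}_2)^2\ge\rho/\theta$ it fails for a vector with one dominant coordinate plus a widely spread small tail, where $\E|\mb u^T\mb x|\approx\mu\theta\norm{\mb u}_2$. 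Once the thresholds are corrected, a direction $\mb h_{-i}$ that puts most of its $\ell^2$ mass on $\mathrm{supp}(\mb b)\setminus\{i\}$ aligned with $\mb b_{-i}$ (so the competitor term is $\approx(1-\gamma)\mu\theta p\norm{\mb h_{-i}}_1$) while carrying a moderately spread tail of support size $\asymp\rho/\theta$ off $\mathrm{supp}(\mb b)$ falls between your two cases, and neither bound certifies $g(\mb h)>0$. The missing idea is the paper's Lemma~\ref{lem:reduction-new}: split the \emph{columns} of $\mb X$ into the set $S$ of those hitting $\mathrm{supp}(\mb b)$ (of size at most $p/4$) and the rest, and compare $\mb z_\star$ to its restriction $\mb P_{\mathrm{supp}(\mb b)}\mb z_\star$ rather than to $\mb e_i/b_i$; since the off-support component $\mb z_1$ satisfies $\mb b^T\mb z_1=0$ automatically, the $\mb b$-correlation term disappears and the inequality $\norm{\mb z_1^T\mb X}_1-2\norm{\mb z_1^T\mb X^S}_1>0$ forces $\mathrm{supp}(\mb z_\star)\subseteq\mathrm{supp}(\mb b)$. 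Only then is the gap/singleton argument run, on directions that are automatically $2\theta n\le\gamma/(8\theta)$-sparse under $\theta\le\alpha/\sqrt{n\log n}$. (Relatedly, the requirement $p>c_1n^2\log^2 n$ is driven by that support-localization step --- an $\eps$-net over all of $\Re^n$ of size $\exp(\Theta(n\log n))$ against a per-point deviation of $\exp(-cp/(n\sqrt{\log p}))$ --- not by the union over supports in the strong bound, as your accounting suggests.)
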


The upper bound of $\alpha  / \sqrt{n \log n}$ on $\theta $ has two sources:
  an upper bound of $\alpha / \sqrt{n}$ is imposed by the requirement that $\mb b$ be sparse.
An additional factor of $\sqrt{\log n}$ comes from the need for a gap between $|\mb b|_{(1)}$ and $|\mb b|_{(2)}$ of the $k$ i.i.d.\ Gaussian random variables.  
On the other hand, using the sum of two columns of $\mb Y$ as $\mb r$ can save the factor of $\log n$ in the requirement on $\theta$ since the ``collision'' of non-zero entries in the two columns of $\mb X$ creates a larger gap between $|\mb b|_{(1)}$ and $|\mb b|_{(2)}$. More importantly, the resulting algorithm is less dependent on the magnitudes of the nonzero elements in $\mb X$. The  algorithm using a single column exploited the fact that there exists a reasonable gap between $|\mb b|_{(1)}$ and $|\mb b|_{(2)}$, whereas the two-column variant \textbf{ER-SpUD(DC)} succeeds even if the nonzeros all have the same magnitude. 

\begin{theorem}[Correct recovery (two-column)] \label{thm:correct-tc}
Suppose $\mb X$ follows the Bernoulli-Subgaussian model. 
Then provided $p > c_{1} n^2 \log^2 n$, and 
\begin{equation}
\frac{2}{n}\;\leq\; \theta \;\le\; \frac{\alpha }{\sqrt{n}},
\end{equation}
with probability at least $1-c_f p^{-10}$,
the Algorithm \textbf{ER-SpUD(SC)} recovers all $n$ rows of $\mb X$. That is, all $n$ rows of $\mb X$ are included in the $p$ potential vectors $\mb w_1^T \mb Y,\dots, \mb w_p^T \mb Y$. Above, $c_1$, $\alpha$ and $c_f$ are positive numerical constants. 
\end{theorem}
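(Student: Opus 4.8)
The plan is to follow the intuition of Section~\ref{sec:alg} and reduce the analysis of Algorithm \textbf{ER-SpUD(DC)} to the canonical linear program \eqref{eqn:main-opt-X}. For the pair of columns $(\mb Y\mb e_{j_1},\mb Y\mb e_{j_2})$, the change of variables $\mb z=\mb A^T\mb w$, $\mb b=\mb A^{-1}(\mb Y\mb e_{j_1}+\mb Y\mb e_{j_2})=\mb x_{j_1}+\mb x_{j_2}$ turns the program the algorithm actually solves into \eqref{eqn:main-opt-X}, and this pair contributes a scaled copy of the $k$-th row of $\mb X$ exactly when the unique minimizer of \eqref{eqn:main-opt-X} is $\mb z_\star=\mb e_k/b_k$. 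So it suffices to establish two things: (a) with probability $1-O(p^{-10})$ over $\mb X$ and the random pairing, every coordinate $k\in[n]$ is the ``collision coordinate'' of at least one pair that falls in a good event $\mc G_k$ described below; and (b) conditioned on $\mc G_k$, the minimizer of \eqref{eqn:main-opt-X} is indeed $\mb e_k/b_k$.

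\emph{Step 1: the good event and coupon collection.} For a pair $(j_1,j_2)$ I would let $\mc G_k$ be the event that (i) $\mathrm{supp}(\mb x_{j_1})\cap\mathrm{supp}(\mb x_{j_2})=\{k\}$ --- the two columns collide in exactly one coordinate; (ii) $\norm{\mb b}_0=|\mathrm{supp}(\mb x_{j_1})\cup\mathrm{supp}(\mb x_{j_2})|\le c/\theta$; and (iii) the colliding entry dominates: $|b_k|=|R_{k,j_1}+R_{k,j_2}|\ge(1+\gamma)\max_{i\ne k}|b_i|$, for a small absolute constant $\gamma$. Given (i), condition (ii) holds with high probability because $\norm{\mb x_j}_0\sim\mathrm{Bin}(n,\theta)$ concentrates below $2\theta n\le 2\alpha\sqrt n\le c/\theta$ once $\alpha$ is a small enough constant --- this is precisely where the hypothesis $\theta\le\alpha/\sqrt n$ enters. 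Event (i) has probability $\theta^2(1-\theta^2)^{n-1}=\Omega(\theta^2)$ (using $n\theta^2\le\alpha^2$), and, given (i)--(ii), (iii) has a probability bounded below by a quantity that is $\Omega(1)$ when $\theta=\Theta(1/n)$ (only $O(1)$ competing entries) and degrades to an inverse polynomial in $n$ when $\theta=\Theta(1/\sqrt n)$ ($O(\sqrt n)$ competing entries); the lower bound only needs symmetry of $\mb R$ and the moment estimates \eqref{eqn:moments}, which is why this theorem covers the whole Bernoulli-Subgaussian class rather than just the Gaussian case. In all regimes $\P[\mc G_k]=\Omega(\theta^2)$, with minimum $\Omega(1/n^2)$ near $\theta=1/n$. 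Since the $p/2$ pairs form a random matching, the events $\{\text{pair }g\text{ satisfies }\mc G_k\}$ are independent over $g$; a Chernoff bound shows each fixed $k$ is covered, and a union bound over $k\in[n]$ gives that all $n$ coordinates are covered except with probability $n(1-\P[\mc G_k])^{p/2}\le n\exp(-\Omega(p/n^2))$, which is $O(p^{-10})$ once $p\ge c_1 n^2\log^2 n$ (the $n^2$ pays for $1/\P[\mc G_k]$, one $\log n$ collects a collision at each of the $n$ coordinates, and the second $\log n$ leaves room for the concentration estimates of Step~2).

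\emph{Step 2: the linear program returns $\mb e_k/b_k$ on $\mc G_k$.} Fix a pair in $\mc G_k$, set $S=\mathrm{supp}(\mb b)$, and condition on $\mb\Omega$. As in the proof of Theorem~\ref{thm:uniq}, any minimizer of \eqref{eqn:main-opt-X} may be taken supported on $S$, since coordinates outside $S$ do not relax the constraint $\mb b^T\mb z=1$ and can only increase $\norm{\mb z^T\mb X}_1$ --- a statement the $\ell^1$ lower bounds of Lemmas~\ref{lem:nnzRademacher} and~\ref{lem:uniqueLastGR} make quantitative. For $\mb z$ supported on $S$, I would split $\norm{\mb z^T\mb X}_1=\sum_j\magnitude{\innerprod{\mb z}{\mb x_j}}$ according to the value of $|\mathrm{supp}(\mb x_j)\cap S|$. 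Columns meeting $S$ in exactly one index act as a scaled $\ell^1$ isometry on $\mb z_S$; together with the gap (iii) they yield, for every feasible $\mb z$,
\[
\norm{\mb z^T\mb X}_1\;\ge\;(1-o(1))\,\mu\theta p\Bigl(\,|z_k|+\gamma\!\!\sum_{i\in S\setminus\{k\}}\!\!|z_i|\Bigr)\;\ge\;\frac{\norm{\mb e_k^T\mb X}_1}{|b_k|},
\]
with equality only at $\mb z=\mb e_k/b_k$: the first inequality uses $\mb b^T\mb z=1$ together with $|b_i|\le(1-\gamma)|b_k|$, and the second uses $\norm{\mb e_k^T\mb X}_1\approx\mu\theta p$. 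The columns meeting $S$ in two or more indices do create genuine cancellation in the objective, but there are only $O(|S|^2\theta^2 p)=O(\alpha^2\cdot\theta p|S|)$ of them --- a factor $O(\alpha^2)$ fewer than the isometry columns --- so for $\alpha$ small their total contribution is swallowed by the $\gamma$-slack above. All the $o(1)$ terms and error bounds follow from the subgaussian concentration inequality \eqref{eqn:sg-conc}, applied along a net over $\mb z\in\mathrm{span}(\mb e_i:i\in S)$; this fails for a fixed pair with probability only $\exp(-\Omega(\theta p))$, and since $\theta p=\Omega(n\log^2 n)$ a union bound over all $p/2$ pairs costs a further $O(p^{-10})$.

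\emph{Conclusion and main obstacle.} A final union bound over Steps~1 and~2 shows that with probability $1-O(p^{-10})$ every row of $\mb X$ appears, up to scale, among the vectors $\mb s_1,\dots,\mb s_{p/2}$ produced by \textbf{ER-SpUD(DC)}, which is the claim. (The reference to \textbf{ER-SpUD(SC)} in the statement should read \textbf{ER-SpUD(DC)}.) The hard part is Step~2: showing that $\mb e_k/b_k$ is the \emph{unique} minimizer of \eqref{eqn:main-opt-X}, uniformly over all feasible directions. The delicate point is that $|S|$ can be as large as $\Theta(\sqrt n)$, so the columns of $\mb X$ meeting $S$ in two or more coordinates produce real cancellation in $\norm{\mb z^T\mb X}_1$, and this must be overcome using only a constant-factor gap $\gamma$ between $|b_k|$ and the rest of $\mb b$; making the interacting constants ($\gamma$, $\alpha$, the $\ell^1$-isometry distortion, the net granularity) fit together so the bound closes for \emph{every} feasible $\mb z$ is where essentially all of the technical work lives, and is also what forces the restriction $\theta\le\alpha/\sqrt n$ (equivalently $\norm{\mb b}_0\le c/\theta$).
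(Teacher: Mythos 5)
Your proposal follows essentially the same route as the paper: reduce to the canonical problem \eqref{eqn:main-opt-X}, restrict the minimizer to $\mathrm{supp}(\mb b)$ using $\|\mb b\|_0 \le c/\theta$ (the paper's Lemma \ref{lem:reduction-new}, not Lemmas \ref{lem:nnzRademacher}--\ref{lem:uniqueLastGR}, which concern the $\ell^0$ uniqueness argument), then show $1$-sparsity of the restricted minimizer by splitting columns according to how many times they meet $\mathrm{supp}(\mb b)$ (the paper's Lemma \ref{lem:extreme-sparse-new}), and finally coupon-collect over the $p/2$ disjoint pairs with per-pair success probability $\Omega(1/n^2)$. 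Two points in your write-up are loose. First, your claim that $\P[\mc G_k]=\Omega(\theta^2)$ ``in all regimes'' contradicts your own observation that the dominance probability degrades to $\approx 1/(\theta n)^2$; the correct uniform bound is $\Omega(\theta^2 \cdot (\theta n)^{-2})=\Omega(1/n^2)$, which is what you in fact use, so this is only a bookkeeping slip. Second, your dominance event (iii) is asserted to have the right probability ``by symmetry and the moment estimates,'' but for a general symmetric subgaussian law this needs an actual mechanism: the paper requires the two colliding entries $X_{ij},X_{ik}$ to be the two largest entries of the stacked vector $\mb h_{jk}$ \emph{with matching signs}, which by exchangeability of the iid nonzeros has conditional probability at least $1/(2\binom{s}{2})$ and forces $|\mb b|_{(2)}/|\mb b|_{(1)}\le 1/2$ deterministically --- this is the distribution-free device your sketch is missing, though it slots directly into your Step 1.
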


Hence, as we choose $p$ to grow faster than $n^2\log^2 n$, the algorithm will succeed with probability approaching one. That the algorithm succeeds is interesting, perhaps even unexpected. There is potentially a great deal of symmetry in the problem -- all of the rows of $\mb X$ might have similar $\ell^1$-norm. The vectors $\mb r$ break this symmetry, preferring one particular solution at each step, at least in the regime where $\mb X$ is sparse. To be precise, the expected number of nonzero entries in each column must  be bounded by $\sqrt{ n\log n}$.

It is natural to wonder whether this is an artifact of the analysis, or whether such a bound is necessary. We can prove that for Algorithm \textbf{ER-SpUD(SC)}, the sparsity demands in Theorem \ref{thm:correct-tc} cannot be improved by more than a factor of $\sqrt{ \log n}$. Consider the optimization problem \eqref{eqn:main-opt-X}. One can show that for each $i$, $\| \mb e_i^T \mb X \|_1 \approx \theta p$. Hence, if we set $\mb z = \mb e_{j_\star} / b_{j_\star}$, where $j_\star$ is the index of the largest element of $\mb b$ in magnitude, then $$\| \mb z^T \mb X \|_1 \;=\; \frac{\| \mb e_{j_\star}^T \mb X \|_1}{\| \mb b \|_\infty} \;\approx\; C\frac{\theta p}{\sqrt{\log n}}.$$
If we consider the alternative solution $\mb v = \mathrm{sign}(\mb b) / \| \mb b \|_1$, a calculation shows that 
$$\| \mb v^T \mb X \|_1 \;\approx\; C' p /\sqrt{ n}.$$
Hence, if $\theta > c \sqrt{\log n / n}$ for sufficiently large $c$, the second solution will have smaller objective function. These calculations are carried through rigorously in the full version, giving:

\begin{theorem} \label{thm:ub}
If $\mb X$ follows the Bernoulli-Subgaussian model with
$$\theta \ge \sqrt{\frac{\beta\log n}{n}} ,$$
with $\beta \ge \beta_0$, and the number of samples $p>c n \log n$, then the probability that solving the optimization problem
\begin{equation}
\text{minimize} \;\| \mb w^T \mb Y \|_1 \quad \text{subject to} \quad \mb r^T \mb w = 1
\end{equation}
with $\mb r = \mb Y \mb e_j$ recovers one of the rows of $\mb X$ is at most
\begin{equation}
\exp\left(- c p \right) + \exp\left( - 3 \beta \sqrt{n \log n} \right) + 4 \exp\left( - c' \theta p + \log n \right)  + 2 n^{1-5\beta}
\end{equation}
above, $\beta_0, c, c'$ are positive numerical constants. 
\end{theorem}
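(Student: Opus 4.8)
The plan is to pass to the equivalent problem \eqref{eqn:main-opt-X} under the substitution $\mb z = \mb A^T\mb w$, $\mb b = \mb A^{-1}\mb r$, which (using only that $\mb A$ is invertible) preserves the objective value. Since $\mb r = \mb Y\mb e_j = \mb A\mb X\mb e_j$, the vector $\mb b = \mb X\mb e_j$ is simply the $j$-th column of $\mb X$, and the linear program ``recovers a row of $\mb X$'' precisely when its minimizer $\mb z_\star$ satisfies $\mb z_\star^T\mb X = \lambda\,\mb e_i^T\mb X$ for some $i$, $\lambda$. Because $\mb b = \mb X\mb e_j$, the constraint $\mb b^T\mb z_\star = 1$ then reads $\lambda b_i = 1$, forcing $b_i\ne 0$ and $\lambda = 1/b_i$, so in that case the optimal value equals $\norm{\mb e_i^T\mb X}_1/\magnitude{b_i}$, which is at least $\min_{\,i'\,:\,b_{i'}\ne 0}\norm{\mb e_{i'}^T\mb X}_1/\magnitude{b_{i'}}$. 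Hence it suffices to exhibit one feasible point with strictly smaller objective value; I take $\mb v = \mathrm{sign}(\mb b)/\norm{\mb b}_1$, which is feasible because $\mb b^T\mb v = \norm{\mb b}_1/\norm{\mb b}_1 = 1$, and I aim to show that, except on an event whose probability is bounded as in the theorem,
\begin{equation}\label{eqn:ub-target}
\norm{\mb v^T\mb X}_1 \;<\; \min_{\,i\,:\,b_i\ne 0}\;\frac{\norm{\mb e_i^T\mb X}_1}{\magnitude{b_i}}.
\end{equation}

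To prove \eqref{eqn:ub-target} I would condition throughout on the single column $\mb b = \mb X\mb e_j$, so that $\mb v$, the support $S = \mathrm{supp}(\mb b)$, and the norms $\norm{\mb b}_0,\norm{\mb b}_1,\norm{\mb b}_\infty$ become fixed while the remaining $p-1$ columns of $\mb X$ stay independent of $\mb b$, and then combine three ingredients. \emph{(i) A lower bound on the right side of \eqref{eqn:ub-target}.} For each $i$, $\norm{\mb e_i^T\mb X}_1 = \sum_k\Omega_{ik}\magnitude{R_{ik}}$ has mean $\theta p\mu\ge\theta p/10$; a Bernstein bound for this sum of bounded-variance ``thinned subgaussians'', followed by a union bound over the $n$ rows, gives $\norm{\mb e_i^T\mb X}_1 \ge \theta p\mu/2$ for all $i$ except with probability $2n\exp(-c\theta p)$. \emph{(ii) Control of $\mb b$.} The tail bound \eqref{eqn:tail} with a union bound over the $\le n$ entries of $\mb b$, together with the concentration of $\norm{\mb b}_0\le\sum_i\Omega_{ij}$ and $\norm{\mb b}_1 = \sum_i\Omega_{ij}\magnitude{R_{ij}}$ about their means, shows that, off a small event, $\norm{\mb b}_\infty$ is of order $\sqrt{\log n}$ and $\norm{\mb b}_0,\norm{\mb b}_1$ are of order $\theta n$; since $\theta\sqrt n\ge\sqrt{\beta\log n}$ with $\beta\ge\beta_0$ a large absolute constant, $\norm{\mb b}_\infty$ is therefore much smaller than $\theta\sqrt n$. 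Combining (i) and (ii), the right side of \eqref{eqn:ub-target} is at least $\theta p\mu/(2\norm{\mb b}_\infty)$, which is of order at least $\theta p/\sqrt{\log n}$. \emph{(iii) An upper bound on the left side of \eqref{eqn:ub-target}.} Conditioned on $\mb b$, each coordinate $(\mb v^T\mb X)_k$ with $k\ne j$ is mean zero with $\condexp{(\mb v^T\mb X)_k^2}{\mb b} = \expect{R_{ij}^2}\,\theta\,\norm{\mb v}_2^2 \le \theta\,\norm{\mb b}_0/\norm{\mb b}_1^2$, which is of order $1/n$, so $\condexp{\magnitude{(\mb v^T\mb X)_k}}{\mb b}$ is of order $1/\sqrt n$; as the $p$ coordinates are i.i.d.\ given $\mb b$, a sub-exponential concentration bound for their sum gives $\norm{\mb v^T\mb X}_1 = O(p/\sqrt n)$ except with probability $\exp(-cp)$.

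Putting the three pieces together, \eqref{eqn:ub-target} holds whenever the left-side bound $O(p/\sqrt n)$ is smaller than the right-side bound of order $\theta p/\sqrt{\log n}$, i.e.\ whenever $\theta$ exceeds a fixed constant multiple of $\sqrt{\log n/n}$ --- which is guaranteed by the hypothesis $\theta\ge\sqrt{\beta\log n/n}$, $\beta\ge\beta_0$, for $\beta_0$ a large enough absolute constant --- and a union bound over the exceptional events of (i)--(iii), together with the event $\mb b = \mb 0$, produces the probability bound asserted in the theorem. The step I expect to be the main obstacle is ingredient (iii): the comparison vector $\mb v$ is itself a function of the random column $\mb b$, so $\norm{\mb v^T\mb X}_1$ is a sum of independent terms only after conditioning on $\mb b$, and even then one must (a) carry out the second-moment computation carefully through the sign pattern $\mathrm{sign}(\mb b)$ and the random support $S$, using the $\ell^1$ lower bound that $\norm{\mb b}_1$ is of order $\theta n$, and (b) upgrade the conditional mean to a genuine $1-\exp(-cp)$ concentration bound, which requires the per-coordinate variance rather than only a crude subgaussian estimate. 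A secondary, purely bookkeeping difficulty is obtaining the Bernstein rate $\exp(-c\theta p)$ rather than $\exp(-c\theta^2 p)$ in (i) and matching the precise exponents in the $2n^{1-5\beta}$ and $\exp(-3\beta\sqrt{n\log n})$ terms.
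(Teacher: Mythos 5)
Your proposal is correct and follows essentially the same route as the paper's proof: the same comparison vector $\mb v = \mathrm{sign}(\mb b)/\norm{\mb b}_1$, the same conditional-on-$\mb b$ second-moment computation giving $\norm{\mb v^T\mb X}_1 = O(p/\sqrt{n})$, and the same lower bound $\theta p \mu/(2\norm{\mb b}_\infty) = \Omega(\theta p/\sqrt{\log n})$ for every feasible one-sparse point, with the exceptional events matching the four terms in the stated probability. The obstacle you flag in ingredient (iii) is handled in the paper exactly as you anticipate, by truncating the entries of $\mb X$ at $\tau = \sqrt{24\log p}$ so that Bernstein's inequality applies to the conditionally independent summands.
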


This implies that the result in Theorem \ref{thm:correct} is nearly the best possible for this algorithm, at least in terms of its demands on $\theta$. A nearly identical result can be proved with $\mb r = \mb Y \mb e_i + \mb Y \mb e_j$ the sum of two rows, implying that similar limitations apply to the two-column version of the algorithm. 

\section{Sketch of the Analysis} \label{sec:sketch}

In this section, we sketch the arguments used to prove Theorem
\ref{thm:correct}. The proof of Theorem \ref{thm:correct-tc} is
similar. The arguments for both of these results are carried through rigorously in 
Appendix \ref{sec:recovery-proof}. At a high level, our argument follows the
intuition of Section \ref{sec:alg}, using the order statistics and the
sparsity property of $\mb b$ to argue that the solution must recover a
row of $\mb X$. 
We say that a vector is $k$-sparse if it has at most $k$ non-zero entries.
Our goal is to show that $\mb z_{\star}$
 is $1$-sparse.
We find it convenient to do this in two steps.

We first argue that the solution $\mb z_\star$ to \eqref{eqn:main-opt-X} must be supported on indices that are non-zero in $\mb b$, so $\mb z$ is at least as sparse as $\mb b$, say $\sqrt{n}$-sparse in our case. Using this result, we restrict our attention to a submatrix of $\sqrt{n}$ rows of $\mb X$, and prove that for this restricted problem, when the gap $1-|\mb b|_{(2)} / |\mb b |_{(1)}$ is large enough, the solution $\mb z_\star$ is in fact $1$-sparse, and we recover a row of $\mb X$. 

\paragraph{Proof solution is sparse.}  
We first
show that with high probability, the solution $\mb z_\star$ to \eqref{eqn:main-opt-X} is
 supported only on the non-zero indices in $\mb b$. 
Let $J$ denote the
indices of the $s$ non-zero entries of $|\mb b|$, and let $S = \{ j
\mid \mb X_{J,j} \ne \mb 0 \} \subset [p]$, i.e., the indices of the
nonzero columns in $X_J$. Let $\mb z_0 = \mb P_J \mb z_\star$ be the restriction of $\mb z_\star$ to those coordinates indexed by $J$, 
and $\mb z_1 = \mb z_\star - \mb z_0$. By definition, $\mb z_0$ is supported
on $J$ and $\mb z_1$ on $J^c$. 
Moreover, $\mb z_{0}$ is feasible for Problem~\eqref{eqn:main-opt-X}.  
We will show that it has at least as  low an objective function value as
  $\mb z_{\star}$, and thus conclude that $\mb z_{1}$ must be zero.
Write
\begin{align}
\| \mb z_\star^T \mb X \|_1 \;&=\; \| \mb z_\star^T \mb X^{S} \|_1 + \| \mb z_\star^T \mb X^{S^c} \|_1 \;\ge\; \| \mb z_0^T \mb X^{S} \|_1 - \| \mb z_1^T \mb X^{S} \|_1 + \| \mb z_1^T \mb X^{S^c} \|_1\nonumber\\
&=\| \mb z_0^T \mb X \|_1 - 2\| \mb z_1^T \mb X^{S} \|_1 + \| \mb z_1^T \mb X\|_1,
\end{align}
where we have used the triangle inequality and the fact that $\mb z_0^T \mb X^{S^c} = \mb 0$. 
In expectation we have that
\begin{equation} \label{eqn:z-opt-bound}
\| \mb z_\star^T \mb X \|_1 \;\ge\; \| \mb z_0^T \mb X \|_1 +(p-2|S|)\mathbb{E}[\|\mb z_1^T \mb X \|_1] \;\ge\; \| \mb z_0^T \mb X \|_1 +c(p-2|S|)\sqrt{\theta /n}\|\mb z_1\|_1,
\end{equation}
where the last inequality requires $\theta n\geq 2$.

So as long as $p-2|S|>0$, $\mb z_0$ has
lower expected objective value.
To prove that this happens with high probability, we first upper bound $|S|$ by
the number of nonzeros in $\mb X_J$, which in expectation is $\theta s
p$. As long as $p-2(1+\delta)\theta s p=p(1-c'\theta s)>0$, or
equivalently $s<c_s/\theta$ for some constant $c_s$, we have $\| \mb
z_\star^T \mb X \|_1>\| \mb z_0^T \mb X \|_1$.  In the following
lemma, we make this argument formal by proving concentration around
the expectation.

\begin{lemma}\label{lem:reduction-new} Suppose that $\mb X$ satisfies the Bernoulli-Subgaussian model. There exists a numerical constant $C > 0$, such that if 
$\theta n \ge 2$ and 
\begin{equation}
p > C n^2\log^2 n
\end{equation}
then with probability at least $1 - 3 p^{-10}$, the random matrix $\mb X$ has the following property:
 \begin{quote}
 {\bf (P1)} 
For every $\mb b$ satisfying $\|\mb b\|_0\leq 1/8 \theta$,
 any solution $\mb z_\star$ to the optimization problem 
 \begin{equation}
 \textrm{minimize} \;\| \mb z^T \mb X \|_1 \quad \textrm{subject to} \quad \mb b^T \mb z = 1
 \end{equation}
has $\mathrm{supp}(\mb z_\star) \subseteq \mathrm{supp}(\mb b)$.
 \end{quote}
\end{lemma}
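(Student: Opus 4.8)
The plan is to make rigorous the expectation computation sketched just before the statement. Fix any $\mb b$ with $\|\mb b\|_0 \le 1/(8\theta)$, let $J = \mathrm{supp}(\mb b)$, $s = |J|$, and $S = \{\, j \mid \mb X_{J,j} \ne \mb 0 \,\} \subseteq [p]$, and given a putative optimum $\mb z_\star$ write $\mb z_0 = \mb P_J \mb z_\star$, $\mb z_1 = \mb z_\star - \mb z_0$. Since $\mb b$ is supported on $J$, the vector $\mb z_0$ is feasible ($\mb b^T \mb z_0 = \mb b^T \mb z_\star = 1$), and the triangle-inequality chain in the sketch gives the deterministic bound
\[
  \| \mb z_\star^T \mb X \|_1 \;\ge\; \| \mb z_0^T \mb X \|_1 + \left( \| \mb z_1^T \mb X^{S^c} \|_1 - \| \mb z_1^T \mb X^{S} \|_1 \right).
\]
Thus (P1) will follow once we show that, with probability at least $1 - 3 p^{-10}$, simultaneously for every index set $J$ with $|J| \le 1/(8\theta)$ and every nonzero $\mb z_1$ supported on $J^c$ we have $\| \mb z_1^T \mb X^{S} \|_1 < \| \mb z_1^T \mb X^{S^c} \|_1$ --- for then optimality of $\mb z_\star$ forces $\mb z_1 = \mb 0$, i.e.\ $\mathrm{supp}(\mb z_\star) \subseteq \mathrm{supp}(\mb b)$. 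It suffices to union bound over the at most $2^n$ choices of $J$, since $S$, and the displayed event, depend on $\mb b$ only through $J$.

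The structural fact driving the argument is that, because $\P[R_{ij} = 0] = 0$, a column lies in $S$ precisely when $\mb \Omega$ has a nonzero entry in that column within the rows indexed by $J$; hence $S$ is a function of $\mb \Omega_J$ alone, and is in particular independent of the submatrix $\mb X_{J^c}$ whose rows generate $\mb z_1^T \mb X$. I would exploit this in two ways. First, $|S|$ is stochastically dominated by $\mathrm{Binomial}(sp,\theta)$, which has mean $\theta s p \le p/8$; a Chernoff bound and the union over the $2^n$ sets $J$ show that, off an event of probability at most $2^n e^{-c p} \le p^{-10}$ (using $p > C n^2 \log^2 n \gg n$), one has $|S| \le p/4$ for all $J$ at once. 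Second, conditioning on $\mb \Omega$ fixes every $S$, and $\| \mb z_1^T \mb X^{S^c} \|_1 - \| \mb z_1^T \mb X^{S} \|_1$ is then a sum of $p$ independent terms with mean $(p - 2\,\E|S|)\,\E|\mb z_1^T \mb X_1| \ge \frac{3}{4}\, p \cdot \E|\mb z_1^T \mb X_1|$. The estimate needed here is the lower bound $\E|\mb z_1^T \mb X_1| \ge c \sqrt{\theta}\,\|\mb z_1\|_2 \ge c \sqrt{\theta/n}\,\|\mb z_1\|_1$ (valid for $\theta n \ge 2$) --- the quantitative statement that $\mb X$ does not contract the $\ell^1$ norm by much; I would prove it by conditioning on $\mb \Omega$ and applying a one-sided Khintchine / small-ball inequality for the symmetric variables $R_{ij}$, using $\E|R_{ij}| \ge 1/10$.

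It remains to promote this one-point estimate to a bound uniform over $\mb z_1$, which is the heart of the proof. For a fixed $\mb z_1$ with $\|\mb z_1\|_2 = 1$, the quantity $\| \mb z_1^T \mb X^{S^c} \|_1 - \| \mb z_1^T \mb X^{S} \|_1$ is, conditionally on $\mb \Omega$, a sum of $p$ independent, symmetric, subexponential summands of scale at most a constant times $\|\mb z_1\|_2$ whose mean is at least a constant times $p\sqrt{\theta/n}\,\|\mb z_1\|_1$; a Bernstein bound gives deviation probability at most $\exp(-c\, p\theta)$ for a constant-factor relative deviation (a routine bounded-differences bound handles the mild dependence of the conditional mean on $\mb \Omega$). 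Since $\mb z_1 \mapsto \| \mb z_1^T \mb X \|_1$ is Lipschitz with a polynomial-in-$p$ constant on a high-probability event (through $\| \mb X \|_{2 \to 1}$), I would union bound over a net of the relevant unit sphere at scale $\eps / \mathrm{poly}(n)$, of size $e^{O(n \log n)}$, and over the $2^n$ choices of $J$; everything closes provided $p\theta$ exceeds a constant times $n \log n$, which holds comfortably since $p\theta \ge (C n^2 \log^2 n)(2/n) = 2Cn\log^2 n$. This is exactly where the hypothesis $p > C n^2 \log^2 n$ is used, and calibrating constants yields the failure probability $3 p^{-10}$.

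\textbf{Main obstacle.} The delicate step is the uniform tail bound just described: one must produce a single-vector deviation probability small enough ($e^{-c p\theta}$ with $p\theta$ well above $n \log n$) to absorb union bounds that are exponential in $n$. This rests on two estimates that require care because of the Bernoulli thinning: the matching two-sided bound $\E|\mb z_1^T \mb X_1| \asymp \sqrt{\theta}\,\|\mb z_1\|_2$ (the near-preservation of the $\ell^1$ norm by $\mb X$, cf.\ \cite{Matousek08}), and uniform control of the \emph{subexponential} --- not subgaussian --- fluctuations of $|\mb z_1^T \mb X_j|$: when $\theta\|\mb z_1\|_0$ is small, $\mb z_1^T \mb X_j$ has genuinely heavier-than-Gaussian tails, so the Bernstein/truncation bookkeeping there, together with the logarithmic slack it forces, is the part that must be done with care.
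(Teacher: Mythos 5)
Your proposal follows essentially the same route as the paper's proof: the same split $\mb z_\star = \mb P_J \mb z_\star + \mb z_1$ with $\mb z_0$ feasible, the same Chernoff bound giving $|S| \le p/4$, the same Khintchine-based lower bound $\E|\mb z_1^T \mb X_1| \ge c\sqrt{\theta/n}\,\|\mb z_1\|_1$ (the paper's Lemma on the average lower bound), and the same truncation-at-$\tau \asymp \sqrt{\log p}$ plus Bernstein plus $\eps$-net argument, uniformized by a union bound over the supports $J$. The one quantitative quibble is your quoted single-vector deviation rate $e^{-c p \theta}$: with per-column variance at most $\theta$ and target deviation $\asymp p\sqrt{\theta/n}$, Bernstein actually yields roughly $\exp(-c p/(n\sqrt{\log p}))$, but this still dominates the $e^{O(n \log n)}$ cost of the net and the support union bound under $p > C n^2 \log^2 n$, so the argument closes exactly as the paper's does.
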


Note in problem (\ref{eqn:main-opt-X}), $\mb b=\mb A^{-1} \mb r$. If
we choose $\mb r=\mb Y \mb e_i$, then $\mb b=\mb A^{-1} \mb Y \mb e_i= \mb X \mb e_i$, and
$\mathbb{E} [\| \mb b\|_0]=\theta n$. A Chernoff bound then tells us that with high probability
$\mb z_\star$ is supported on no more than $2\theta n$ entries, i.e.,
$s<2\theta n$. Thus as long as $2\theta n<c_{2}/\theta$, i.e.,
$\theta<c_\theta/\sqrt{n}$, we have $\|\mb z_\star\|_0<2\theta
n=c_\theta\sqrt{n}$.

\paragraph{The solution in $\mb X_J$:} If we restrict
our attention to the induced $s$-by-$p$ submatrix $\mb X_J$, we observe
that $\mb X_J$ is incredibly sparse -- most of the columns have at most one
nonzero entry. Arguing as we did in the first step, let
$j^\star$ denote the index of the largest entry of $|\mb b_J|$, and
let $S = \{ j \mid \mb X_J(j^\star,j) \ne \mb 0 \} \subset [p]$, i.e.,
the indices of the nonzero entries in the $j^\star$-th row of
$X_J$. Without loss of generality, let's assume $b_{j^\star}=1$. For
any $\mb z$, write $\mb z_0 = \mb P_{j^\star} \mb z$ and $\mb z_1 =
\mb z - \mb z_0$. Clearly $\mb z_0$ is supported on the $j^\star$-th
entry and $\mb z_1$ on the rest. As in the first step,
\begin{align} \label{eqn:reduced-split}
\| \mb z^T \mb X_J \|_1 &\geq \| \mb z_0^T \mb X_J \|_1 - 2\| \mb z_1^T \mb X_{J}^{S} \|_1 + \| \mb z_1^T \mb X_J\|_1.
\end{align}
By restricting our attention to $1$-sparse columns of $\mb X_J$,
 we prove that with high probability
\[
\| \mb z_1^T \mb X_J\|_1\geq \mu \theta p( 1 - s \theta )(1-\eps)^2\|\mb z_1\|_1.
\]
We prove that with high probability the second term of \eqref{eqn:reduced-split} satisfies
$$\| \mb z_1^T \mb X_{J,S} \|_1\leq (1+\epsilon)\mu \theta^2 p\|\mb z_1\|_1.$$ 
For the first term, we show 
$$\| \mb z_0^T \mb X_J \|_1\geq \|\mb e_{j^\star}^T\mb X_{J}\|_1-|\mb b_J^T\mb z_1|\|\mb X_{J}\|_1\geq \|\mb e_{j^\star}^T\mb X_{J}\|_1-|\mb b_J^T\mb z_1|(1+\epsilon)\mu \theta p.$$  
If $|\mb b|_{(2)} / |\mb b|_{(1)} < 1 - \gamma$, then $|\mb b_J^T \mb z_1|\leq ( 1 - \gamma )\|\mb z_1\|_1$.

In Lemma \ref{lem:extreme-sparse-new}, we combine these inequalities to show that if 
 if $\theta \le c \gamma / s$, then
\begin{align}
\|(\mb z_0+\mb z_1)^T \mb X_J\|_1&\geq  \|\mb e_{j^\star}^T\mb X_J\|_1+\mu\theta p \left(1- \gamma \right)\|\mb z_1\|_1.
\end{align}
Since $e_{j^\star}$ is a feasible solution to Problem \ref{eqn:main-opt-X} with a lower objective value as long as $\mb z_1\neq 0$, we know $e_{j^\star}$ is the only optimal solution. The following lemma makes this precise.

\begin{lemma}\label{lem:extreme-sparse-new} Suppose that $\mb X$ follows the Bernoulli-Subgaussian model. There exist positive numerical constants $c_1, c_2$ such that the following holds. For any $\gamma > 0$ and $s \in \mathbb{Z}_+$ such that $\theta s < \gamma/8$ and $p$ is sufficiently large:
$$p \;\ge\; \max\left\{\frac{c_1 s \log n}{\theta \gamma^2},n\right\} \quad \text{and} \quad \frac{p}{\log p} \;\ge\; \frac{c_2}{\theta \gamma^2},$$
then with probability at least $1 - 4 p^{-10}$, the random matrix $\mb X$ has the following property:
\begin{quote}
{\bf (P2)} For every $J \in \binom{[n]}{s}$ and every $\mb b \in \Re^s$ satisfying $|\mb b|_{(2)}/|\mb b|_{(1)} \;\le\; 1- \gamma$,
the solution to the restricted problem, 
\begin{align}
\textrm{minimize} \quad \|\mb z^T \mb X_{J,\ast} \|_1 \quad \textrm{subject to} \quad \mb b^T  \mb z = 1,\label{eqn:red-opt}
\end{align}
is unique, $1$-sparse, and is supported on the index of the largest entry of $\mb b$. 
\end{quote}
\end{lemma}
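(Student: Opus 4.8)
The plan is to convert the decomposition sketched above into a single uniform high-probability inequality. Fix $J \in \binom{[n]}{s}$, let $j^\star$ index the largest-magnitude entry of $\mb b$, and rescale $\mb b$ so that $b_{j^\star} = 1$ (this only rescales the optimizer, so it is harmless). For a feasible $\mb z$ write $\mb z = \mb z_0 + \mb z_1$ with $\mb z_0 = z_{j^\star}\mb e_{j^\star}$ and $\mathrm{supp}(\mb z_1) \subseteq J\setminus\{j^\star\}$. The constraint $\mb b^T\mb z = 1$ forces $z_{j^\star} = 1 - \mb b^T\mb z_1$, so $\|\mb z_0^T\mb X_J\|_1 = |1-\mb b^T\mb z_1|\,\|\mb e_{j^\star}^T\mb X_J\|_1 \ge (1-|\mb b^T\mb z_1|)\,\|\mb e_{j^\star}^T\mb X_J\|_1$. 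The goal is to prove that, on an event of probability at least $1-4p^{-10}$,
\[
\|\mb z^T\mb X_J\|_1 \;\ge\; \|\mb e_{j^\star}^T\mb X_J\|_1 \;+\; c\,\gamma\,\mu\theta p\,\|\mb z_1\|_1 \qquad\text{for every feasible }\mb z,
\]
which immediately forces $\mb z_1 = \mb 0$ at any optimum, so that $\mb z_\star = \mb e_{j^\star}$ is the unique minimizer, $1$-sparse and supported on the index of $|\mb b|_{(1)}$.

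By \eqref{eqn:reduced-split} it is enough, with $\epsilon$ a sufficiently small constant multiple of $\gamma$, to establish three estimates holding uniformly over all unit-$\ell^1$ vectors $\mb z_1$ supported on $J\setminus\{j^\star\}$ and over all choices of $J$ and $j^\star$: the spreading lower bound $\|\mb z_1^T\mb X_J\|_1 \ge (1-\epsilon)^2(1-s\theta)\,\mu\theta p\,\|\mb z_1\|_1$; the collision upper bound $\|\mb z_1^T\mb X_J^S\|_1 \le (1+\epsilon)\,\mu\theta^2 p\,\|\mb z_1\|_1$, where $S \subseteq [p]$ is the support of the $j^\star$-th row of $\mb X_J$; and $\|\mb e_{j^\star}^T\mb X_J\|_1 \le (1+\epsilon)\,\mu\theta p$, together with $\|\mb X_J\|_\infty \le (1+\epsilon)\mu\theta p$ (a max-row-sum bound, used only as a Lipschitz constant below). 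For the spreading bound, the columns of $\mb X_J$ with exactly one nonzero dominate — there are $\approx p\,s\theta(1-\theta)^{s-1}$ of them, far more than the $O(p(s\theta)^2)$ columns with two or more nonzeros since $s\theta$ is small — and on such a column $\mb z_1^T\mb X_J$ equals $\pm|R|\,|z_{1,k}|$ for one symmetric $R$; thus $\|\mb z_1^T\mb X_J\|_1$ is a sum of $p$ independent nonnegative terms, most of them zero, and a Bernstein-type bound (a column is nonzero on $\mathrm{supp}(\mb z_1)$ with probability only $\approx s\theta$, so the sum has variance $O(p\theta)$; combine with a truncation of the $|R_{ij}|$ at $O(\sqrt{\log p})$) yields a deviation probability $\exp(-c\,\gamma^2\theta p)$ for a fixed $(\mb z_1,J)$. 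For the collision term one conditions first on the $j^\star$-th row of $\mb X_J$, which fixes $S$ with $|S| \le (1+\epsilon)\theta p$ by a Chernoff bound; the remaining $s-1$ rows restricted to columns $S$ are an independent Bernoulli--subgaussian matrix, so $\|\mb z_1^T\mb X_J^S\|_1$ concentrates around $\mu\theta|S|\,\|\mb z_1\|_1 \le (1+\epsilon)\mu\theta^2 p\,\|\mb z_1\|_1$, again with per-instance failure probability of order $\exp(-c\,\gamma^2\theta p)$.

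To make the estimates uniform in $\mb b$ I would cover the $\ell^1$-sphere of $\Re^{s-1}$ by an $\epsilon$-net of cardinality $(C/\gamma)^{s}$, use the Lipschitz bound $|\,\|\mb z_1^T\mb X_J\|_1 - \|\mb z_1'^T\mb X_J\|_1\,| \le \|\mb X_J\|_\infty\,\|\mb z_1-\mb z_1'\|_1$ to transfer from the net to all $\mb z_1$, and union-bound over the net, over the $\le s$ choices of $j^\star$, and over the $\binom{n}{s} \le n^{s}$ choices of $J$. The combinatorial factor is $\exp(O(s\log n + s\log(1/\gamma)))$; balancing it against the per-instance rate $\exp(-c\,\gamma^2\theta p)$ and the truncation loss is exactly where the hypotheses $p \ge c_1 s\log n/(\theta\gamma^2)$, $p/\log p \ge c_2/(\theta\gamma^2)$, and $p\ge n$ enter, and they keep the overall failure probability below $4p^{-10}$. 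Finally, feeding the three bounds and the gap inequality $|\mb b^T\mb z_1| \le (1-\gamma)\|\mb z_1\|_1$ (immediate from $|b_j| \le |\mb b|_{(2)} \le 1-\gamma$ for $j\ne j^\star$) into \eqref{eqn:reduced-split} makes the coefficient of $\|\mb z_1\|_1$ equal to $\mu\theta p\big[(1-\epsilon)^2(1-s\theta) - 2(1+\epsilon)\theta - (1-\gamma)(1+\epsilon)\big]$, which is at least $c\gamma\mu\theta p$ once $s\theta$ and $\epsilon$ are small multiples of $\gamma$ — the point of the hypothesis $\theta s < \gamma/8$ — completing the argument.

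The step I expect to be the main obstacle is securing uniformity over \emph{all} admissible $\mb b$ at the same time as the union bound over the $\binom{[n]}{s}$ subsets $J$: each is routine in isolation, but together they force $p$ up to order $s\log n/(\theta\gamma^2)$, and the net must be organized so as not to collide with the conditioning used to handle the random support $S$ in the collision term — the clean fix is to condition on the $j^\star$-th rows of all relevant submatrices first and run the net plus union bound on the remaining independent entries. A secondary technical point is that the Lipschitz constant above is $\|\mb X_J\|_\infty$, which must be controlled by the small quantity $O(\theta p)$ rather than the trivial $O(p)$ on the same good event, via a short max-row-sum concentration estimate.
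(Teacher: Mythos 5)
Your argument is correct in outline but follows a genuinely different route from the paper's. You obtain the key lower bound on $\|\mb z_1^T \mb X_J\|_1$ by Bernstein's inequality for each fixed $\mb z_1$ and then pass to all $\mb z_1$ via an $\eps$-net of the $\ell^1$-sphere of $\Re^{s-1}$, union-bounding over the net, over $j^\star$, and over $J$ --- essentially the same machinery the paper deploys for Lemma \ref{lem:reduction-new} via Lemma \ref{lem: ext-pos}. The paper's proof of the present lemma avoids any net: for each $i \in J$ it isolates the set $\Omega_{J,i}$ of columns whose support within $J$ is exactly $\{i\}$; on those columns one has the exact identity $\|\mb z_1^T \mb X_{J,\Omega_{J,i}}\|_1 = |z_1(i)|\,\|\mb X_{i,\Omega_{J,i}}\|_1$, and since the $\Omega_{J,i}$, $i \ne j^\star$, are disjoint subsets of $T_{j^\star}$, the bound $\|\mb z_1^T \mb X_{J,T_{j^\star}}\|_1 \ge \beta \mu \theta p \|\mb z_1\|_1$ holds \emph{deterministically and simultaneously for all} $\mb z_1$ once the $s\binom{n}{s}$ row-wise events $\|\mb X_{i,\Omega_{J,i}}\|_1 \ge \beta\mu\theta p$ hold; your ``collision'' and row-norm terms are likewise handled as operator-norm bounds coming from row-$\ell^1$ concentration (Lemma \ref{lem:M-norm-bound-bg}). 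What the paper's decoupling buys is, first, a union bound of size only $s\binom{n}{s}$ rather than $(C/\gamma)^{s}\cdot s\binom{n}{s}$ --- your net contributes an extra $s\log(1/\gamma)$ to the entropy, which the stated hypothesis $p \ge c_1 s\log n/(\theta\gamma^2)$ does not obviously cover when $\gamma$ is very small --- and, second, no truncation of the $R_{ij}$, hence no $\sqrt{\log p}$ degradation of the Bernstein exponent. Both of your losses are repairable (the lower tail of a sum of nonnegative variables needs only a second-moment bound, not a truncation, and $\log(1/\gamma) = O(\log n)$ in the regime where the lemma is actually invoked), but as written your route establishes the lemma under slightly stronger hypotheses on $p$ than the ones stated.
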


Once we know 
  that a column of $\mb Y$ provides us with a constant probability of recovering one row of $\mb X$,
 we know that we need only use $O (n \log n)$ columns to recover all the rows of $\mb X$
  with high probability. We give proofs of the above lemmas in Section \ref{sec:recovery-proof}. Section \ref{sec:recovery-proof-sc} shows how to put them together to prove Theorem \ref{thm:correct}.

\section{Simulations}\label{sec:exp}

\begin{algbox}
\noindent \textbf{ER-SpUD(proj):}\texttt{ Exact Recovery of Sparsely-Used Dictionaries with Iterative Projections.}\label{alg:spud-proj}
\begin{enumerate}
\item[] $\mc S \gets \{ 0 \} \subset \Re^n$.
\item[] For $i = 1 \dots n$
\begin{enumerate}
\item[] For $j=1\dots p$
\begin{enumerate}
\item []Find $\mb w_{ij} \;\in\; \arg\min_{\mb w} \; \| \mb w^* \mb Y\|_1 \quad \text{subject to} \quad \mb (Ye_j)^T \mb P_{\mc S^\perp} \mb w = 1.$ 
\end{enumerate}
\item[] $\mb w_i \gets \arg\min_{\mb w = \mb w_{i1}, \dots, \mb w_{iT}} \| \mb w^* \mb Y \|_0$, breaking ties arbitrarily.
\item[] $\mc S \gets \mc S \oplus \mathrm{span}(\mb w_i)$.
\end{enumerate}
\item[] $\mb X \gets \mb W \mb Y$.
\item[] $\mb A \gets \mb Y \mb Y^*( \mb X \mb Y^*)^{-1}$
\end{enumerate}
\end{algbox}

In this section we systematically evaluate our algorithm, and compare
it with the state-of-the-art dictionary learning algorithms, including
K-SVD \cite{Aharon2006-TSP}, online dictionary learning
\cite{Mairal09}, SIV \cite{Gottlieb10}, and the relative Newton method
for source separation \cite{Zibulevsky03}. The first two methods are
not limited to square dictionaries, while the final two methods, like
ours, exploit properties of the square case. The method of
\cite{Zibulevsky03} is similar in provenance to the incremental
nonconvex approach of \cite{Zibulevsky2001-NC}, but seeks to recover
all of the rows of $\mb X$ simultaneously, by seeking a local minimum
of a larger nonconvex problem. We found in the experiments that a slight variant of the greedy ER-SPUD algorithm, we call the ER-SPUD(proj), works even better than the greedy scheme.\footnote{Again, preconditioning by setting $\mb Y_p=(\mb Y \mb Y^T)^{-1/2}\mb Y$ helps in simulation.} And thus we also add its result to the comparison list. As our emphasis in this paper is mostly
on correctness of the solution, we modify the default settings of
these packages to obtain more accurate results (and hence a fairer
comparison). For K-SVD, we use high accuracy mode, and switch the
number of iterations from 10 to 30. Similarly, for relative Newton, we
allow 1,000 iterations. For online dictionary learning, we allow
1,000. We observed diminishing returns beyond these numbers. Since
K-SVD and online dictionary learning tend to get stuck at local optimum, for each trial
we restart K-SVD and Online learning algorithm 5 times with randomized
initializations and report the best performance. We measure accuracy
in terms of the relative error, after permutation-scale ambiguity has
been removed:
$$\tilde{\mathrm{re}}(\hat{\mb A},\mb A) \;\doteq\; \min_{\mb \Pi, \mb \Lambda} \|\hat{\mb A}\mb \Lambda \mb \Pi - \mb A \|_F/\|\mb A\|_F.\vspace{-1mm}$$

\paragraph{Phase transition graph.} 
In our experiments we have chosen $\mb A$ to be a an $n$-by-$n$
  matrix of independent 
  Gaussian random variables.
The coefficient matrix $\mb X$ is $n$-by-$p$, where $p = 5 n \log_e n$.
Each column of $\mb X$ has $k$ randomly chosen non-zero entries.
In our experiments we have varied $n$ between $10$ and $60$
  and $k$ between $1$ and $10$.
Figure \ref{fig:PhaseTrans}
  shows the results for each method,
  with the average relative error reported in greyscale.
White means zero error and black is $1$. The best performing algorithm is ER-SpUD with iterative projections, which solves almost all the cases except when $n=10$  and $k\geq 6$. For the other algorithm,
When $n$ is small, the relative
Newton method appears to be able to handle a denser $\mb X$, while as
$n$ grows large, the greedy ER-SpUD is more precise. In fact, empirically the
phase transition between success and failure for ER-SpUD is quite
sharp -- problems below the boundary are solved to high numerical
accuracy, while beyond the boundary the algorithm breaks down. In
contrast, both online dictionary learning and relative Newton exhibit
neither the same accuracy, nor the same sharp transition to failure --
even in the black region of the graph, they still return solutions
that are not completely wrong. The breakdown boundary of K-SVD is
clear compared to online learning and relative Newton. As an active
set algorithm, when it reaches a correct solution, the numerical
accuracy is quite high. However, in our simulations we observe that
both K-SVD and online learning may be trapped into a local optimum
even for relatively sparse problems.  \vspace{-.1in}

\begin{figure*}[t]
\begin{center}
\subfigure[ER-SpUD(SC)]{\includegraphics[height=3.8cm]{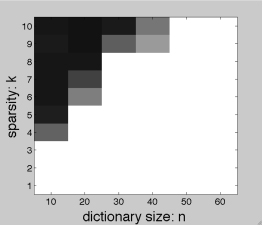}}
\subfigure[ER-SpUD(proj)]{\includegraphics[height=3.8cm]{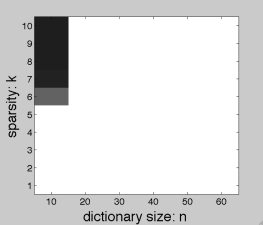}}
\subfigure[SIV]{\includegraphics[height=3.8cm]{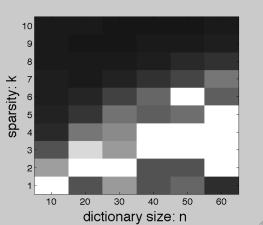}}\\ 
\subfigure[K-SVD ]{\includegraphics[height=3.8cm]{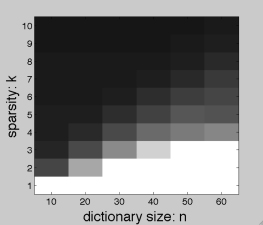}}
\subfigure[Online]{\includegraphics[height=3.8cm]{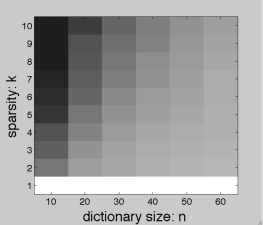}}
\subfigure[Rel.\ Newton]{\includegraphics[height=3.8cm]{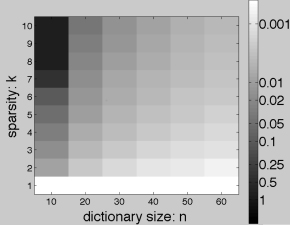}}
\end{center}
 \caption{\small Mean relative errors over 10 trials, with varying support $k$ (y-axis, increase from bottom to top) and basis size $n$(x-axis, increase from left to right). Here, $p=5 n\log_e n$. Our algorithm using a column of $Y$ as $r$ (ER-SpUD(SC)), our algorithm with iterative projections (ER-SpUD(proj)), SIV \cite{Gottlieb10}, K-SVD \cite{Aharon2006-TSP}, online dictionary learning \cite{Mairal09},  and the relative Newton method for source separation \cite{Zibulevsky03}.}
\label{fig:PhaseTrans}
\end{figure*}

\section{Discussion}\label{sec:conclu}

The main contribution of this work is a dictionary learning algorithm
with provable performance guarantees under a random coefficient
model. To our knowledge, this result is the first of its
kind. However, it has two clear limitations: the algorithm
requires that the reconstruction be exact, i.e., $\mb Y=\mb A\mb X$
and it requires $\mb A$ to be square.
It would be interesting to address both of these issues
  (see also \cite{Bach2008-TR} for
  investigation in this direction).
Finally, while our results pertain to
a specific coefficient model, our analysis generalizes to other
distributions. Seeking meaningful, deterministic assumptions on $\mb
X$ that will allow correct recovery is another interesting direction
for future work.

\acks{This material is based in part upon work supported by the National Science Foundation under Grant No. 0915487. JW also acknowledges support from Columbia University.}

\bibliography{RADL}

\begin{thebibliography}{24}
\providecommand{\natexlab}[1]{#1}
\providecommand{\url}[1]{\texttt{#1}}
\expandafter\ifx\csname urlstyle\endcsname\relax
  \providecommand{\doi}[1]{doi: #1}\else
  \providecommand{\doi}{doi: \begingroup \urlstyle{rm}\Url}\fi

\bibitem[Aharon et~al.(2006)Aharon, Elad, and Bruckstein]{Aharon2006-TSP}
M.~Aharon, M.~Elad, and A.~Bruckstein.
\newblock The {K-SVD}: An algorithm for designing overcomplete dictionaries for
  sparse representation.
\newblock \emph{{IEEE} Transactions on Signal Processing}, 54\penalty0
  (11):\penalty0 4311--4322, 2006.

\bibitem[Bach et~al.(2008)Bach, Mairal, and Ponce]{Bach2008-TR}
F.~Bach, J.~Mairal, and J.~Ponce.
\newblock Convex sparse matrix factorizations.
\newblock Technical report, Technical report HAL-00345747,
  \url{http://hal.archives-ouvertes.fr/hal-00354771/fr/}, 2008.

\bibitem[Bruckstein et~al.(2009)Bruckstein, Donoho, and
  Elad]{Bruckstein2009-SIAM}
A.~M. Bruckstein, D.~L. Donoho, and M.~Elad.
\newblock From sparse solutions of systems of equations to sparse modeling of
  signals and images.
\newblock \emph{SIAM Review}, 51\penalty0 (1):\penalty0 34--81, 2009.

\bibitem[Comon(1994)]{Comon94-SP}
P.~Comon.
\newblock Independent component analysis: A new concept?
\newblock \emph{Signal Processing}, 36:\penalty0 287--314, 1994.

\bibitem[Engan et~al.(1999)Engan, Aase, and Hakon-Husoy]{Engan1999-ICASSP}
K.~Engan, S.~Aase, and J.~Hakon-Husoy.
\newblock Method of optimal directions for frame design.
\newblock In \emph{{ICASSP}}, volume~5, pages 2443--2446, 1999.

\bibitem[Erd{\"o}s(1945)]{Erdos45}
P.~Erd{\"o}s.
\newblock On a lemma of {Littlewood} and {Offord}.
\newblock \emph{Bulletin of the American Mathematical Society}, 51:\penalty0
  898--902, 1945.

\bibitem[Feller(1968)]{Feller}
William Feller.
\newblock \emph{An Introduction to Probability Theory and its Applications},
  volume~1 of \emph{Wiley Series in Probability and Mathematical Statistics}.
\newblock John Wiley {\&} Sons, Inc., New York, 3 edition, 1968.

\bibitem[Geng and Wright(2011)]{Wright11}
Q.~Geng and J.~Wright.
\newblock On the local correctness of $\ell^1$ minimization for dictionary
  learning.
\newblock \emph{CoRR}, 2011.

\bibitem[Georgiev et~al.(2005)Georgiev, Theis, and Cichocki]{Georgiev2005-TNN}
P.~Georgiev, F.~Theis, and A.~Cichocki.
\newblock Sparse component analysis and blind source separation of
  underdetermined mixtures.
\newblock \emph{{IEEE} Transactions on Neural Networks}, 16\penalty0 (4), 2005.

\bibitem[Gottlieb and Neylon(2010)]{Gottlieb10}
L-A. Gottlieb and T.~Neylon.
\newblock Matrix sparsiﬁcation and the sparse null space problem.
\newblock \emph{APPROX and RANDOM}, 6302:\penalty0 205--218, 2010.

\bibitem[Gribonval and Schnass(2010)]{Gribonval10}
R.~Gribonval and K.~Schnass.
\newblock Dictionary identification-sparse matrix-factorisation via
  $l_1$-minimisation.
\newblock \emph{IEEE Transactions on Information Theory}, 56(7):\penalty0
  3523--3539, 2010.

\bibitem[Jaillet et~al.(2010)Jaillet, Gribonval, Plumbley, and
  Zayyani]{Jaillet2010-ICASSP}
F.~Jaillet, R.~Gribonval, M.~Plumbley, and H.~Zayyani.
\newblock An l1 criterion for dictionary learning by subspace identification.
\newblock In \emph{{IEEE} Conference on Acoustics, Speech and Signal Processing
  ({ICASSP})}, pages 5482--5485, 2010.

\bibitem[Kreutz-Delgado et~al.(2003)Kreutz-Delgado, Murray, Rao, Engan, Lee,
  and Sejnowski]{Kruetz-Delgado2003-NC}
K.~Kreutz-Delgado, J.~Murray, B.~Rao, K.~Engan, T.~Lee, and T.~Sejnowski.
\newblock Dictionary learning algorithms for sparse representation.
\newblock \emph{Neural Computation}, 15\penalty0 (20):\penalty0 349--396, 2003.

\bibitem[M.~Aharon and Bruckstein(2006)]{Aharon06}
M.~Elad M.~Aharon and A.~Bruckstein.
\newblock On the uniqueness of overcomplete dictionaries, and a practical way
  to retrieve them.
\newblock \emph{Linear Algebra and its Applications}, 416:\penalty0 48--67,
  2006.

\bibitem[Mairal et~al.(2009)Mairal, Bach, Ponce, and Sapiro]{Mairal09}
J.~Mairal, F.~Bach, J.~Ponce, and G.~Sapiro.
\newblock Online dictionary learning for sparse coding.
\newblock \emph{Proceedings of the 26th Annual International Conference on
  Machine Learning}, pages 689--696, 2009.

\bibitem[Matousek()]{Matousek08}
Jiri Matousek.
\newblock On variants of the johnson-lindenstrauss lemma.
\newblock \emph{Wiley InterScience (www.interscience.wiley.com)}.

\bibitem[Olshausen and Field(1996)]{Olshausen1996-Nature}
B.~Olshausen and D.~Field.
\newblock Emergence of simple-cell receptive field properties by learning a
  sparse code for natural images.
\newblock \emph{Nature}, 381\penalty0 (6538):\penalty0 607--609, 1996.

\bibitem[Plumbley(2007)]{Plumbley2007-ICA}
M.~Plumbley.
\newblock Dictionary learning for $\ell^1$-exact sparse coding.
\newblock In \emph{Independent Component Analysis and Signal Separation}, pages
  406--413, 2007.

\bibitem[Rubinstein et~al.(2010)Rubinstein, Bruckstein, and
  Elad]{Rubenstein2010-IEEE}
R.~Rubinstein, A.~Bruckstein, and M.~Elad.
\newblock Dictionaries for sparse representation modeling.
\newblock \emph{Proceedings of the {IEEE}}, 98\penalty0 (6):\penalty0
  1045--1057, 2010.

\bibitem[Stanley(1986)]{Stanley}
Richard~P. Stanley.
\newblock \emph{Enumerative Combinatorics}, volume~1.
\newblock Wadsworth \& Brooks, 1986.

\bibitem[Vainsencher et~al.(2011)Vainsencher, Mannor, and
  Bruckstein]{Vainsencher2011-COLT}
D.~Vainsencher, S.~Mannor, and A.~Bruckstein.
\newblock The sample complexity of dictionary learning.
\newblock In \emph{Proc.\ Conference on Learning Theory}, 2011.

\bibitem[Yang et~al.(2010)Yang, Wright, Huang, and Ma]{Yang2010-TIP}
J.~Yang, J.~Wright, T.~Huang, and Y.~Ma.
\newblock Image super-resolution via sparse representation.
\newblock \emph{{IEEE} Transactions on Image Processing}, 19\penalty0
  (11):\penalty0 2861--2873, 2010.

\bibitem[Zibulevsky(2003)]{Zibulevsky03}
M.~Zibulevsky.
\newblock Blind source separation with relative newton method.
\newblock \emph{Proceedings ICA}, pages 897--902, 2003.

\bibitem[Zibulevsky and Pearlmutter(2001)]{Zibulevsky2001-NC}
M.~Zibulevsky and B.~Pearlmutter.
\newblock Blind source separation by sparse decomposition.
\newblock \emph{Neural Computation}, 13\penalty0 (4), 2001.

\end{thebibliography}

\appendix

\section{Proof of Uniqueness} \label{app:unique-proof}
In this section we prove our upper bound on the number of samples for which
  the decomposition of $\mb Y$ into $\mb A \mb X$ with sparse $\mb X$ is unique up to scaling 
  and permutation. We begin by recording the proofs of several lemmas from Section \ref{sec:uniq}. 


\subsection{Proof of Lemma \ref{lem:uniq_2}}
\begin{proof}
Since $\mathrm{rank}(\mb X)=n$, we know $\mathrm{rank}(\mb A')\geq \mathrm{rank}(\mb Y)= \mathrm{rank}(\mb A)=n$.
Since both $\mb A$ and $\mb A'$ are nonsingular, the row spaces of $\mb X'$ and $\mb X$ are the same as that of $\mb Y$. 
\end{proof}

\subsection{Proof of Lemma \ref{lem:uniqueMore}}
\begin{proof}
First consider sets $S$ of two rows.
The expected number of columns that have non-zero entries
  in at least one of these two rows is
\[
  p (1 - (1-\theta)^{2}) = p (2 \theta - \theta^{2}) \geq (3/2) p \theta ,
\]
for $\theta \leq 1/2$.
Part $a$ now follows from a Chernoff bound.

For part b, is $\sigma \ge 3$ and $\sigma \theta < 1$, we observe that for every $S$
\begin{align*}
\E \sizeof{T_{S}} \quad=\quad p-(1-\theta)^\sigma p \quad\geq\quad \left(\sigma \theta-(_2^\sigma)\theta^2 \right) p \quad=\quad \left( 1-\frac{\sigma -1}{2}\theta \right) \sigma\theta p \quad\geq\quad \frac{\sigma \theta p}{2},
\end{align*}
where the inequalities follow from $\sigma \theta\le1$.
Part $b$ now follows from a Chernoff bound.

For part c, if $\sigma \theta > 1$,
  for every $S$ of size $\sigma$ we have
\begin{align*}
\E \sizeof{T_{S}}& \geq (1-e^{-\sigma\theta }) p
 \geq (1-e^{-1})p.
\end{align*}
As before, the result follows from a Chernoff bound.
\end{proof}

\begin{definition}[fully dense vector] We call a vector $\mb \alpha\in \mathbb{R}^n$ fully dense if for all $i\in [n],~\alpha_i\neq 0$.
\end{definition}

\subsection{Proof of Lemma \ref{lem:uniqueLastGR}}

We use the following  theorem of Erd{\"o}s.
\begin{theorem}[\cite{Erdos45}]\label{thm:erdos}
For every $k \geq 2$ and nonzero real numbers $z_{1}, \dots , z_{k}$,
\[
  \prob{\sum_{i} z_{i} r_{i} = 0}
\;\leq\; 2^{-k} \binom{k}{\floor{k/2}} \;\leq\; 1/2,
\]
where each $r_{i}$ is chosen independently from $\pm 1$,
\end{theorem}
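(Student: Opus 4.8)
The plan is to derive this via the classical Littlewood--Offord argument, reducing it to Sperner's theorem on antichains in the Boolean lattice. First I would note that for each fixed $i$, negating $z_i$ does not change the probability in question: the coordinate flip $r_i \mapsto -r_i$ is a measure-preserving bijection of $\setof{\pm 1}^k$ that leaves $\sum_j z_j r_j$ invariant once $z_i$ has been negated. Iterating over $i$, we may assume without loss of generality that every $z_i > 0$.

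Next, identify each sign pattern $\mb r \in \setof{\pm 1}^k$ with the set $S = \setof{i \in [k] : r_i = +1}$; this is a bijection, and
\[
  \sum_i z_i r_i \;=\; \sum_{i \in S} z_i - \sum_{i \notin S} z_i \;=\; 2 \sum_{i \in S} z_i - \sum_i z_i .
\]
Hence $\sum_i z_i r_i = 0$ precisely when $S$ lies in the level family $\mc A = \setof{S \subseteq [k] : \sum_{i \in S} z_i = c}$, where $c = \tfrac12 \sum_i z_i$. The crucial structural observation is that $\mc A$ is an antichain: if $S \subsetneq T$ were both in $\mc A$, then $\sum_{i \in T \setminus S} z_i = 0$ with $T \setminus S$ nonempty, contradicting $z_i > 0$. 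Sperner's theorem bounds the size of any antichain in $2^{[k]}$ by the middle binomial coefficient, so $\sizeof{\mc A} \le \binom{k}{\floor{k/2}}$, and since the $2^k$ sign patterns are equiprobable,
\[
  \prob{\sum_i z_i r_i = 0} \;=\; 2^{-k} \sizeof{\mc A} \;\le\; 2^{-k} \binom{k}{\floor{k/2}} .
\]
The last inequality $2^{-k}\binom{k}{\floor{k/2}} \le 1/2$ (for $k \ge 2$) then follows from the elementary bound $\binom{k}{\floor{k/2}} \le 2^{k-1}$ on the central binomial coefficient.

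I do not expect a serious obstacle here: this is essentially the textbook Littlewood--Offord--Erd\H{o}s proof. The only step that needs a moment's care is recognizing the antichain structure once the signs have been normalized; everything else is bookkeeping plus the appeal to Sperner's theorem (which itself can be proved via a symmetric chain decomposition of the cube, or via the LYM inequality, should a self-contained treatment be wanted).
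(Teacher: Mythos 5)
Your proof is correct. The paper does not prove this statement itself --- it is quoted directly from \cite{Erdos45} --- and your argument (symmetrize so that every $z_i>0$, identify zero sums with the level family $\setof{S \subseteq [k] : \sum_{i\in S} z_i = c}$, observe that positivity of the $z_i$ makes this family an antichain, and invoke Sperner's theorem, finishing with $\binom{k}{\floor{k/2}} \le 2^{k-1}$ for $k \ge 2$) is exactly Erd\H{o}s's original proof of the Littlewood--Offord lemma, so there is nothing to correct or to contrast with the paper's treatment.
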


\begin{lemma}\label{lem:uniq_2.5R}
For $b > s$,
  let $\mb H \in \Re^{s \times b}$ be any matrix with at least
one nonzero in each column.  
Let $\mb R$ be an $s$-by-$b$ matrix with Rademacher random entries,
and let $\mb U=\mb H \odot \mb \Sigma$.
Then, the probability that the left nullspace of $\mb U$ contains a fully dense
  vector is at most
\[
  2^{-b + s \log ( e^{2} b / s)}
\]
\end{lemma}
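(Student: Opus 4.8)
The plan is to reveal the columns $\mb u_{1},\dots,\mb u_{b}$ of $\mb U$ one at a time and track the shrinking left nullspace $W_{j} := \{\mb w\in\Re^{s} : \mb w^{T}\mb u_{1}=\cdots=\mb w^{T}\mb u_{j}=0\}$, so that $W_{0}=\Re^{s}$ and $W_{b}$ is the left nullspace of $\mb U$. If $W_{b}$ contains a fully dense vector, then so does every $W_{j}$. At step $j$ the dimension either drops by one or stays the same; call column $j$ \emph{redundant} in the latter case, which occurs precisely when $\mb u_{j}\perp W_{j-1}$, i.e.\ when $W_{j}=W_{j-1}$. Since $\dim W_{0}=s$ and all of the decrease in dimension comes from non-redundant columns, if $\dim W_{b}\ge 1$ there are at most $s-1$ non-redundant columns and hence at least $b-s+1$ redundant ones.

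The probabilistic heart of the argument is that a column is unlikely to be redundant as long as a fully dense null vector still exists. Fix any measurable rule that, whenever $W_{j-1}$ contains a fully dense vector, selects one such vector $\mb w_{j-1}$; crucially $\mb w_{j-1}$ is a function of $\mb u_{1},\dots,\mb u_{j-1}$ only, hence independent of the fresh Rademacher signs in column $j$. If $W_{j-1}$ has a fully dense vector and column $j$ is redundant, then in particular $\mb u_{j}^{T}\mb w_{j-1}=0$. But $\mb u_{j}^{T}\mb w_{j-1}=\sum_{i:\,H_{ij}\ne 0} H_{ij}(w_{j-1})_{i}\,R_{ij}$ is a $\pm 1$--weighted sum of $k_{j}:=\norm{\mb H^{j}}_{0}\ge 1$ nonzero reals; by Erd\H{o}s's Theorem~\ref{thm:erdos} it vanishes with probability at most $1/2$ when $k_{j}\ge 2$, and with probability $0$ when $k_{j}=1$ (a single nonzero sign cannot cancel). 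Thus, writing $B_{j}$ for the event ``$W_{j-1}$ contains a fully dense vector and column $j$ is redundant,'' we have $\condprob{B_{j}}{\mb u_{1},\dots,\mb u_{j-1}}\le 1/2$.

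Combining the two observations: on the event that $W_{b}$ contains a fully dense vector, $B_{j}$ holds for every redundant $j$, so at least $b-s+1$ of the events $B_{1},\dots,B_{b}$ occur. Union bounding over which $b-s+1$ columns are redundant, and peeling off the conditional probabilities in increasing order of index (each factor at most $1/2$), gives
\[
  \binom{b}{b-s+1}\,2^{-(b-s+1)} \;=\; \binom{b}{s-1}\,2^{-(b-s+1)} \;\le\; \left(\frac{eb}{s-1}\right)^{s-1}2^{-(b-s+1)} \;\le\; 2^{-b+s\log(e^{2}b/s)},
\]
the last step being a routine binomial estimate using $b>s$ (the case $s=1$ is vacuous, since then the left nullspace is $\{\mb 0\}$). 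This is the claimed bound.

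I expect the only real subtlety to be that the most naive approach --- union bounding over the $\binom{b}{s-1}$ candidate null vectors determined by the $(s-1)$-subsets of columns --- breaks down exactly when the left nullspace has dimension $\ge 2$, since then no single $(s-1)$-subset pins down a candidate vector. The sequential/filtration formulation above avoids committing to a null vector in advance: it only ever tests the past-measurable selection $\mb w_{j-1}$, and the $b-s+1$ ``wasted'' columns are extracted after the fact from the dimension count. The other point requiring a little care is precisely the independence of $\mb w_{j-1}$ from column $j$, which is why the selection must be made from $W_{j-1}$ rather than, say, from the final nullspace $W_{b}$.
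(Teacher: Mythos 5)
Your proposal is correct and follows essentially the same route as the paper's proof: reveal columns sequentially, use Erd\H{o}s's theorem (plus the trivial single-nonzero case) to show each column kills a past-measurable fully dense null vector with probability at least $1/2$, count the columns at which the nullspace dimension fails to drop, and union bound over which columns these are before applying the standard binomial estimate. The only cosmetic difference is that you count $b-s+1$ redundant columns where the paper settles for $b-s$; both yield the stated bound.
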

\begin{proof}
As in the preceding lemma, we let
$\mb U=[\mb u_1|\dots|\mb u_b]$ denote the columns of $\mb U$ and 
  for each $j\in [b]$, we let $N_j$ be the left nullspace of
  $[\mb u_1|\dots|\mb u_j]$.
We will show that it is very unlikely that $N_{b}$ contains
  a fully dense vector.

To this end, we show that if $N_{j-1}$ contains a fully dense vector,
  then with probability at least $1/2$ the dimension of $N_{j}$
  is less than the dimension of $N_{j-1}$.
To be concrete, assume that the first $j-1$ columns of $\mb \Sigma$ have been
  fixed and that $N_{j-1}$ contains a fully dense vector.
Let $\mb \alpha$ be any such vector.
If $\mb u_{j}$ contains only one non-zero entry, then
  $\mb \alpha^{T} \mb u_{j} \not = 0$ and so the dimension of
  $N_{j}$ is less than the dimension of $N_{j-1}$.
If $\mb u_{j}$ contains more than one non-zero entry, each of its
  non-zero entries are random Rademacher random variables.
So, Theorem~\ref{thm:erdos} implies that the probability over the
  choice of entries in the $j$th column of $\mb \Sigma$ that  
  $\mb \alpha^{T} \mb u_{j} = 0$ is at most one-half.
So, with probability at least $1/2$ the dimension of $N_{j}$
  is less than the dimension of $N_{j-1}$.

To finish the proof, we observe that the dimension of the nullspaces
  cannot decrease more than $s$ times.
In particular, for $N_{b}$ to contain a fully dense vector,
  there must be at least $b - s $ columns for which the dimension
  of the nullspace does not decrease.
Let $F \subset [b]$ have size $b - s $.
The probability that for each $j \in F$ that
  $N_{j-1}$ contains a fully dense vector and
  that the dimension of $N_{j}$ equals the dimension
  of $N_{j-1}$ is at most
  $2^{-b+s-1}$.
Taking a union bound over the choices for $F$,
  we see that the probability that $N_{b}$
  contains a fully dense vector is at most
\[
  \binom{b}{b -s} 2^{-b + s}
=
  \binom{b}{s} 2^{-b + s}
\leq 
  \left( \frac{e b}{s} \right)^{s}
  2^{-b + s}
\leq
  2^{-b + s + s \log (e b / s)}
\leq
  2^{-b + s \log (e^{2} b / s)}.
\]
\end{proof}

\begin{proof}[Proof of Lemma~\ref{lem:nnzRademacher}]
Notice that if $\mb X = \mb \Omega \odot \mb R$ is Bernoulli-Subgaussian, then because the entries of $\mb R$ are symmetric random variables, $\mb X$ is equal in distribution to 
$\mb \Omega \odot \mb R \odot \mb \Sigma$, where $\mb \Sigma$ is an independent iid Rademacher matrix. We will apply Lemma \ref{lem:uniq_2.5R} with $\mb H = \mb \Omega \odot \mb R$. 

If there is a fully-dense vector $\mb \alpha $
  for which $\norm{\mb \alpha^{T} \mb U}_{0} \leq t/5$,
  then there is a subset of at least $b = 4 t /5$
  columns of $\mb U$ for which
  $\mb \alpha$ is in the nullspace of the restriction of $\mb U$
  to those columns.
By Lemma~\ref{lem:uniq_2.5R}, the probability that this happens
  for any particular subset of $b$ columns is at most
\[
  2^{-b + s \log e^{2} b / s} \leq 
  2^{-4t/5 + s \log ( e^{2} t / s)}.
\]
Taking a union  bound over the subsets of $b$ columns,
  we see that the probability that this can happen is at most
\[
\binom{t}{4t/5}
  2^{-4t/5 + s \log e^{2} t / s}
\leq 
  2^{0.722 t}
  2^{-t (4/5 - (s/t) \log (e^{2} t / s))}
\leq 
  2^{t (0.722 - 0.8 + 0.0365)}
\leq 
  2^{-t/25},
\]
where in the first inequality we bound the binomial coefficient using
  the exponential of the corresponding binary entropy function, and in the
  second inequality we exploit $s/t < 1/200$.
\end{proof}

\subsection{Proof of Lemma \ref{lem:uniqueLastGR}}

\begin{proof}
Rather than considering vectors, we will consider the sets on
  which they are supported.
So, let $S \subseteq [n]$ and let $\sigma = \sizeof{S}$.
We first consider the case when $17 \leq \sigma  \leq 1/\theta $.
Let $T$ be the set of columns of $\mb X$ that have non-zero entries
  in the rows indexed by $S$.
Let $t = \sizeof{T}$.
By Lemma~\ref{lem:uniqueMore},
\[
  \prob{t < (3/8) \sigma \theta p} \leq \exp (- \sigma \theta p / 64).
\]
Given that $t \geq (3/8) \sigma \theta p$,
  Lemma~\ref{lem:nnzRademacher} tells us that the probability that
  there is a vector $\mb \alpha$ with support exactly $S$ for which
\[
  \norm{\mb \alpha^{T} \mb X}_{0} < (11/9) \theta p \leq  (3/40) \sigma \theta p
\]
is at most
\[
   \exp (- (3/200 )\sigma \theta p). 
\]  
Taking a union bound over all sets $S$ of size $\sigma$, we see that
  the probability that there vector $\mb \alpha$ of support size $\sigma$
  such that $ \norm{\mb \alpha^{T} \mb X}_{0} < (11/9) \theta p$
  is at most
\[
\binom{n}{\sigma }
  \left(
 \exp (- (3/200 )\sigma \theta p) + \exp (- \sigma \theta p / 64)
 \right)
\leq 
\exp (- c \sigma \theta p),
\]
for some constant $c$ given that $p > C n \log n$ for a sufficiently large $C$.

For $\sigma \geq 1/\theta$,
we may follow a similar argument to show that the probability that
  there is a vector $\mb \alpha$ with support size $\sigma$ for which
  $ \norm{\mb \alpha^{T} \mb X}_{0} < (11/9) \theta p$
  is at most
\[
  \exp (- c p),
\]
for some other constant $c$.
Summing these bounds over all $\sigma$ between $17$ and $n$,
  we see that the probability
  that there exists a vector $\mb \alpha$ with support of size at least
  $17$ such that
  such that $ \norm{\mb \alpha^{T} \mb X}_{0} < (11/9) \theta p$
  is at most
\[
\exp (- c \theta p),
\]
  for some constant $c$.

To finish, we sketch a proof of how we handle the sets of support between
 $2$ and $17$.
For $\sigma$ this small and for $\theta$ sufficiently small relative to $\sigma$
  (that is smaller than some constant depending on $\sigma$),
  each of the columns in $T$ probably has exactly one non-zero entry.
Again applying a Chernoff bound and a union bound over the choices of $S$,
  we can show that with probability $1 - \exp (- c \theta p)$
  for every vector $\mb \alpha$ with support of size between $2$ and $17$,
  $ \norm{\mb \alpha^{T} \mb X}_{0} \geq  (5/4) \theta p$.
\end{proof}

\subsection{Proof of Theorem \ref{thm:uniq}}

\begin{proof}
From Lemma~\ref{lem:uniqueLastGR} we know that with probability at most $\exp(- c \theta p)$, any dense linear combination of two or more rows of $\mb X$ has at least $(11/9) \theta p$ nonzeros. Hence, the $n$ rows of $\mb X$ are the sparsest directions in the row space of $\mb Y$. 

A Chernoff bound shows that the probability that
  any row of $\mb X$ has more than
\[
  (10/9) \theta p
\]
 non-zero entries is at most
\[
n \exp\left(-\frac{\theta p}{243}\right).
\]
Hence, with the stated probability, the rows of $\mb X$ are the $n$ sparsest vectors in $\mathrm{row}(\mb X)$. 

On the aforementioned event of probability at least $1-\exp(-c \theta p)$, $\mb X$ has no left null vectors with more than one nonzero entry. So, as long as all of the rows of $\mb X$ are nonzero, $\mb X$ will have no nonzero vectors in its left nullspace. With probability at least $1- n (1-\theta)^p \ge 1 - n \exp( -cp ) $, all of the rows of $\mb X$ are nonzero, and so $\mathrm{row}(\mb X) = \mathrm{row}(\mb Y) = \mathrm{row}(\mb X')$. 

This, together with our previous observations implies that every vector in $\mathrm{row}(\mb X')$ is a scalar multiple of a row of $\mb X$, from which uniqueness follows. Summing failure probabilities gives the quoted bound. 
\end{proof}

\section{Proof of Correct Recovery} \label{sec:recovery-proof}

Our analysis will proceed under the following probabilistic assumption on the coefficient matrix $\mb X$:

\paragraph{Notation.} Below, we will let $\| \mb M \|_{r1} \doteq \max_i \| \mb e_i^T \mb M \|_1$, where the $\mb e_i$ are the standard basis vectors. That is to say, $\| \cdot \|_{r1}$ is the maximum row $\ell^1$ norm. This is equal to the $\ell^1 \to \ell^1$ operator norm of $\mb M^T$. In particular, for all $\mb v$, $\mb M$, $\| \mb v^T \mb M \|_1 \le \| \mb v \|_1 \| \mb M \|_{r1}$. 

\subsection{Proof of Lemma \ref{lem:reduction-new}}\label{ssec:reduction}

\newcommand{\trunc}[1]{\mc T_{#1}}

\begin{proof} We will invoke a technical lemma (Lemma \ref{lem: ext-pos}) which applies to Bernoulli-Subgaussian matrices whose elements are bounded almost surely. For this, we define a truncation operator $\trunc{\tau} : \Re^{n \times p}\to\Re^{n\times p}$ via 
$$(\trunc{\tau} [ \mb M ])_{ij} = \left\{ \begin{array}{cc} M_{ij} & | M_{ij} | \le \tau \\ 0 & \text{else} \end{array} \right.$$
That is, $\trunc{\tau}$ simply sets to zero all elements that are larger than $\tau$ in magnitude. We will choose $\tau =\sqrt{24 \log p}$ and set
\begin{equation}
\mb X' \;=\; \trunc{\tau}[ \mb X ] \;=\; \mb \Omega \odot \trunc{\tau}[ \mb R ] \;\doteq\; \mb \Omega \odot \mb R'. 
\end{equation}
The elements of $\mb R'$ are iid symmetric random variables. They are bounded by $\tau$ almost surely, and have variance at most $1$. Moreover, 
\begin{eqnarray*}
\mu' &\doteq& \E\left[ \, | R'_{ij} | \, \right] \;=\; \int_{t= 0}^\infty \P[ | R'_{ij} | \ge t ] \,dt \quad=\quad \int_{t = 0}^\infty \P[ | R_{ij} | \ge t ] \, dt - \int_{t = \tau}^\infty \P[ |R_{ij} | \ge t ] \, dt \\
&\ge& \mu - 2 \int_{t = \tau}^\infty \exp\left( - \frac{t^2}{2} \right) dt \quad\ge\quad \mu - 2 p^{-12} \quad > \quad 1/20.
\end{eqnarray*}
The final bound follows from provided the constant $C$ in the statement of the lemma is sufficiently large. Of course, since $\mu \le 1$, we also have $\mu' \le 1$. 

The random matrix $\mb X'$ is equal to $\mb X$ with very high probability. Let
\begin{equation}
\event_X = \textbf{event}\{ \mb X = \mb X' \}.
\end{equation}
We have that 
\begin{equation}
\prob{\event_X^c} \;=\;\prob{ \exists \,(i,j) \mid |X_{ij}| > \tau } \;\le\; 2 np \, \exp\left( - \frac{\tau^2}{2} \right).
\end{equation}
Ensuring that $C$ is a large constant (say, $C \ge 2$ suffices), we have $p > n$. Since $\tau = \sqrt{24 \log p}$, $\prob{\event_X^c} \le 2p^{-10}$. 

For each $I \subseteq [n]$ of size at most $s = 1 / 8\theta$, we introduce two ``good'' events, $\event_{S}(I)$ and $\event_{N}(I)$. We will show that on $\event_X \cap \event_{S}(I) \cap \event_{N}(I)$, for any $\mb b$ supported on $I$, and any optimal solution $\mb z_\star$, $\mathrm{supp}(\mb z_\star) \subseteq I$. Hence, the desired property will hold for all sparse $\mb b$ on 
\begin{equation}
\event_{\text{good}} = \event_X \cap \bigcap_{|I| \le s} \event_S(I) \cap \event_N(I).
\end{equation}

For fixed $I$, write $T(I) = \{ j \mid X_{i,j} = 0 \; \forall \; i \in I \}$, and set $S(I) = [p] \setminus T(I)$. That is to say, $T$ is the set of indices of columns of $\mb X$ whose support is contained in $I^c$, and $S$ is its complement (indices of those columns that have a nonzero somewhere in $I$). The event $\event_S(I)$ will be the event that $S(I)$ is not too large:
\begin{equation}
\event_S(I) = \textbf{event}\{|S(I)| < p/4 \}. 
\end{equation}
The event $\event_N(I)$ will be one on which the following holds:
\begin{eqnarray*}\label{eqn:subset-lb}
\forall \mb v \in \Re^{n-|I|}, \quad \|\mb v^T \mb X'_{I^c,\ast}\|_1-2\|\mb v^T \mb X_{I^c,S(I)}' \|_1 \;>\; c_{1} \, p\,\mu' \sqrt{\frac{\theta}{n}} \; \|\mb v\|_1.
\end{eqnarray*}
Since $\mu' > 1/20$, this implies
\begin{eqnarray}\label{eqn:subset-lb-2}
\forall \mb v \in \Re^{n-|I|}, \quad \|\mb v^T \mb X'_{I^c,\ast}\|_1-2\|\mb v^T \mb X_{I^c,S(I)}' \|_1 \;>\; c_2 \, p \, \sqrt{\frac{\theta}{n}} \; \| \mb v \|_1.
\end{eqnarray}
Obviously, on $\event_X \cap \event_N(I)$, the same bound holds with $\mb X'$ replaced by $\mb X$. Lemma \ref{lem: ext-pos} shows that provided $S$ is not too large, $\event_N(I)$ is likely to occur: $\P[ \event_N(I) \mid \event_S(I) ]$ is large. 

On $\event_X \cap \event_S(I) \cap \event_N(I)$, we have inequality \eqref{eqn:subset-lb-2}. We show that this implies that if $\mb b$ is sparse, for any solution $\mb z_\star$, $\mathrm{supp}(\mb z_\star) \subseteq \mathrm{supp}(\mb b)$. Consider any $\mb b$ with $\mathrm{supp}(\mb b) \subseteq I$, and any putative solution $\mb z_\star$ to the optimization problem \eqref{eqn:main-opt-X}. If $\mathrm{supp}(\mb z_\star) \subseteq I$, we are done. If not, let $\mb z_0 \in \Re^n$ such that 
\begin{equation}
[z_0]_i = \left\{ \begin{array}{cc} [z_\star]_i & i \in I \\ 0 & \text{else} \end{array} \right.,
\end{equation}
and set $\mb z_1 = \mb z_\star - \mb z_0$. Notice that since $\mb b^T \mb z_1 = 0$, $\mb z_0$ is also feasible for \eqref{eqn:main-opt-X}. We prove that under the stated hypotheses $\mb z_1 = \mb 0$. 

Form a matrix $\mb X_S' \in \Re^{n \times p}$ via
\begin{equation}
[X_S']_{ij} = \left\{ \begin{array}{cc} X_{ij}' & j \in S \\ 0 & \text{else} \end{array} \right.,
\end{equation}
and set $\mb X_T' = \mb X' - \mb X_S'$. We use the following two facts: First, since $S$ and $T$ are disjoint, for any vector $\mb q$, $\| \mb q \|_1 = \| \mb q_S \|_1 + \| \mb q_T\|_1$. Second, by construction of $T$, $\mb z_0^T \mb X_T = \mb 0$. Hence, we can bound the objective function at $\mb z_\star = \mb z_0 + \mb z_1$ below, as 
\begin{align}
\|(\mb z_0+\mb z_1)^T \mb X\|_1&=\|(\mb z_0+\mb z_1)^T\mb X_S \|_1 + \| (\mb z_0 + \mb z_1)^T \mb X_T\|_1\nonumber\\
&\ge \| \mb z_0^T \mb X_S \|_1  - \| \mb z_1^T \mb X_S \|_1 + \| \mb z_1^T \mb X_T \|_1.
\end{align}
Since $\mb z_0^T \mb X = \mb z_0^T (\mb X_S + \mb X_T) = \mb z_0^T \mb X_S$, this bound is equivalent to 
\begin{equation} \label{eqn:norm-final}
\|(\mb z_0+\mb z_1)^T \mb X\|_1 \ge \| \mb z_0^T \mb X \|_1 + \| \mb z_1^T \mb X \|_1 - 2 \| \mb z_1^T \mb X_S \|_1
\end{equation}
Noting that $\mb z_1$ is supported on $I^c$, \eqref{eqn:subset-lb} implies that if $\mb z_1 \ne \mb 0$, $\|\mb z_0^T \mb X \|_1$ is strictly smaller than $\|\mb z_\star^T \mb X \|_1 = \| (\mb z_0+\mb z_1)^T \mb X \|_1$. Hence, $\mb z_0$ is a feasible solution with objective strictly smaller than that of $\mb z_\star$, contradicting optimality of $\mb z_\star$. To complete the proof, we will show that $\P[ \event_{\text{good}} ]$ is large. 

\paragraph{Probability.} 

The subset $S$ is a random variable, which depends only on the rows of $\mb X_{I,\ast}$ of $\mb X$ indexed by $I$. For any fixed $I$, $|S| \sim \mathrm{Binomial}(p,\lambda)$, with $\lambda \doteq 1-(1-\theta)^{|I|}$. We have $\theta \le \lambda \le |I|\theta$, where the upper bound uses convexity of $(1-\theta)^{|I|}$. From our assumption on $s$, $\lambda \le |I| \theta \le 1/8$. Applying a Chernoff bound, we have  
\begin{equation} \label{eqn:S-bound-lower}
\P\left[ \, |S| \ge p/4 \,\right] \;\le\; \P\left[ \, |S|\geq 2\lambda p \, \right] \;\leq\;  \exp\left( - \frac{\lambda p}{3} \right) \;\leq\; \exp\left(- \frac{\theta p}{3} \right).  
\end{equation}
Hence, with probability at least $1-\exp\left(-\frac{\theta p}{3} \right)$, we have $|S| < p/4$. 

Since $\mb X'$ is iid and $S$ depends only on $\mb X'_{I,\ast}$, conditioned on $S$, $\mb X_{I^c,\ast}'$ is still iid Bernoulli-Subgaussian. Applying Lemma \ref{lem: ext-pos} to $\mb X_{I^c,\ast}'$, conditioned on $S$ gives
\begin{eqnarray} \label{eqn:eNc-bound}
\prob{\event_N(I)^c \mid \event_S(I)} \;\le\; \exp\left( -\frac{c p}{n\sqrt{\log p}} + n \log \left(C n \sqrt{\log p} \right) \right). 
\end{eqnarray}
In this bound, we have used that $\mu' \in [1/20,1]$, $\tau = \sqrt{24 \log p}$, and the fact that the bound in Lemma \ref{lem: ext-pos} is monotonically increasing in $n$ to simplify the failure probability. Moreover, we have 
\begin{eqnarray*}
\prob{\event_S(I) \cap \event_{N}(I)} &=& 1 - \P[ \event_N(I)^c \mid \event_S(I) ] \P[\event_S(I) ]  \;-\; \P[ \event_S(I)^c ]\\
 &\ge& 1 - \P[\event_N(I)^c \mid \event_S(I) ] \;-\; \P[\event_S(I)^c] \\
 &\ge& 1 - \exp\left( -\frac{c p}{n\sqrt{\log p}} + n \log \left(C n \sqrt{\log p} \right) \right) - \exp\left(- \frac{\theta p}{3} \right).
\end{eqnarray*}
Let $s_{\max} = \left\lfloor\frac{1}{8\theta} \right\rfloor$ denote the largest value of $|I|$ allowed by the conditions of the lemma. 
\begin{eqnarray}
\P[\event_{\text{good}}^c] &\le& \P[\event_X^c] + \sum_{I \subset [n],\; |I| \le s_{max}} \P[ (\event_S(I) \cap \event_N(I))^c ]  \nonumber  \\ &\le& \P[\event_X^c] + s_{\max} \binom{n}{s_{\max}} \left\{ \exp\left(- \frac{\theta p}{3} \right) + \exp\left( -\frac{c p}{n\sqrt{\log p}} + n \log \left(C n \sqrt{\log p} \right) \right) \right\} \nonumber \\
&\le& 2p^{-10} + s_{\max} \binom{n}{s_{\max}} \left\{ \exp\left(- \frac{2 p}{3n} \right) +  \exp\left( -\frac{c p}{n\sqrt{\log p}} + n \log \left(C n \sqrt{\log p} \right) \right) \right\}\nonumber\\
&\le& 2p^{-10} +  \exp\left( -\frac{cp}{n\sqrt{\log p}} + C'' n \log \left(n\log p\right) \nonumber \right),
\end{eqnarray}
for appropriate constant $C''$. Under the conditions of the lemma, provided $C$ is large enough, the final term can be bounded by $p^{-10}$, giving the result. 
\end{proof}

\begin{lemma} \label{lem:avg_lower_bound} Suppose $\mb x = \mb \Omega \odot \mb R \in \Re^n$ with $\mb \Omega$  iid $\mathrm{Bernoulli}(\theta)$, $\mb R$ an independent random vector with iid symmetric entries, $n\theta\geq 2$, and $\mu \doteq \E[ \; |R_{ij}| \; ] < + \infty$. Then for all $\mb v \in \Re^n$,
\begin{align}
\E\left[ \left|\mb v^T \mb x \right| \right] \;\geq\;   \frac{ \mu }{4} \sqrt{\frac{\theta}{n}}\|\mb v\|_1.
\end{align}
\end{lemma}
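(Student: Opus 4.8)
The plan is to avoid any appeal to concentration or a central-limit heuristic --- which is not available here, since no moment of $R_{ij}$ beyond $\mu$ is controlled and $\E R_{ij}^2$ may be infinite --- and instead to extract the bound from the Khintchine inequality together with two applications of Cauchy--Schwarz, conditioning first on the Bernoulli mask $\mb\Omega$ and then on the magnitudes $|R_i|$.

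\emph{Step one.} Condition on $\mb\Omega$ and set $b_i = v_i\Omega_i$, so that $\mb v^T\mb x = \sum_i b_i R_i$ and $\|\mb b\|_2^2 = \sum_i v_i^2\Omega_i$. By symmetry of the $R_i$ I may write $R_i = \rho_i\eps_i$ with $\rho_i = |R_i|$ and $\eps_i$ an independent Rademacher sign (this matches the joint distribution even if $R_i$ has an atom at $0$). Conditioning further on $(\rho_i)$, the lower Khintchine inequality gives $\E_\eps\bigl|\sum_i b_i\rho_i\eps_i\bigr| \ge \tfrac{1}{\sqrt2}\bigl(\sum_i b_i^2\rho_i^2\bigr)^{1/2}$ --- one may use the sharp constant, or, to stay elementary, the constant $1/\sqrt3$ coming from $\E_\eps\bigl(\sum_i b_i\rho_i\eps_i\bigr)^4 \le 3\bigl(\sum_i b_i^2\rho_i^2\bigr)^2$. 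Then I remove the $\rho_i$ by the Cauchy--Schwarz step $\sum_i b_i^2\rho_i = \sum_i |b_i|\,(|b_i|\rho_i) \le \|\mb b\|_2\bigl(\sum_i b_i^2\rho_i^2\bigr)^{1/2}$, which rearranges to $\bigl(\sum_i b_i^2\rho_i^2\bigr)^{1/2}\ge \|\mb b\|_2^{-1}\sum_i b_i^2\rho_i$; taking expectations in $(\rho_i)$ and using $\E\rho_i=\mu$ yields $\E\bigl|\sum_i b_iR_i\bigr|\ge \tfrac{\mu}{\sqrt2}\|\mb b\|_2 = \tfrac{\mu}{\sqrt2}\bigl(\sum_i v_i^2\Omega_i\bigr)^{1/2}$. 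No hypothesis on $R_i$ beyond finiteness of $\mu$ has been used.

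\emph{Step two.} Now take the remaining expectation over $\mb\Omega$ and show $\E_{\mb\Omega}\bigl(\sum_i v_i^2\Omega_i\bigr)^{1/2}\ge \tfrac{1}{\sqrt2}\sqrt{\theta/n}\,\|\mb v\|_1$. Writing $I=\{i:\Omega_i=1\}$, Cauchy--Schwarz gives $\sum_{i\in I}|v_i| \le |I|^{1/2}\bigl(\sum_{i\in I}v_i^2\bigr)^{1/2}$, so $\bigl(\sum_i v_i^2\Omega_i\bigr)^{1/2}\ge |I|^{-1/2}\sum_i |v_i|\Omega_i$. For fixed $i$, conditioned on $\Omega_i=1$ we have $|I| = 1+B$ with $B\sim\mathrm{Binomial}(n-1,\theta)$, and convexity of $t\mapsto(1+t)^{-1/2}$ gives $\E[\Omega_i|I|^{-1/2}] = \theta\,\E[(1+B)^{-1/2}] \ge \theta/\sqrt{1+(n-1)\theta}$. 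Summing over $i$ gives $\E_{\mb\Omega}\bigl(\sum_i v_i^2\Omega_i\bigr)^{1/2}\ge \theta\|\mb v\|_1/\sqrt{1+(n-1)\theta}$, and the hypothesis $n\theta\ge 2$ (indeed $n\theta\ge1$ would suffice) is exactly what makes $2n\theta\ge 1+(n-1)\theta$, i.e.\ $\theta/\sqrt{1+(n-1)\theta}\ge \sqrt{\theta/2n}$. Chaining the two steps gives $\E|\mb v^T\mb x| \ge \tfrac{\mu}{2}\sqrt{\theta/n}\,\|\mb v\|_1$, which is in fact stronger than the stated bound, leaving slack for the looser Khintchine constant if one prefers it.

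I do not anticipate a real obstacle once the argument is arranged this way; the one thing to get right is the \emph{order} of conditioning. Folding the mask into the coefficients and applying the symmetric-sum bound to $\sum_i v_i(\Omega_iR_i)$ directly produces $\tfrac{\mu\theta}{\sqrt2}\|\mb v\|_2$, which is too weak for concentrated $\mb v$ --- it already fails for $\mb v=\mb e_1$ when $\theta$ is small. Keeping $\mb\Omega$ outside, so that the Bernoulli structure is still visible in $\sum_i v_i^2\Omega_i$ when the final Cauchy--Schwarz/Jensen step is applied, is precisely what makes the $\|\mb v\|_1/\sqrt n$ scaling emerge.
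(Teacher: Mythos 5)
Your proof is correct, and after the shared first move it takes a genuinely different route from the paper's. Both arguments symmetrize ($R_i = |R_i|\eps_i$) and apply the lower Khintchine inequality conditionally, but they diverge in how they convert the resulting $\ell^2$-type quantity into the $\sqrt{\theta/n}\,\|\mb v\|_1$ bound. The paper first reduces the general $\mb v$ to the uniform vector $(1/n,\dots,1/n)$, using that $\mb\zeta \mapsto \E\bigl|\sum_j \Omega_j R_j \zeta_j\bigr|$ is convex and permutation-invariant on the $\ell^1$-sphere; after that reduction everything depends only on $T = \#\{j:\Omega_j = 1\}$, and the $\sqrt{\theta/n}$ factor comes from the fact that the median of $\mathrm{Binomial}(n,\theta)$ is at least $\lfloor n\theta\rfloor$. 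You instead keep $\mb v$ general throughout: your Cauchy--Schwarz step $\sum_i b_i^2\rho_i \le \|\mb b\|_2\bigl(\sum_i b_i^2\rho_i^2\bigr)^{1/2}$ extracts $\E\bigl(\sum_i b_i^2\rho_i^2\bigr)^{1/2} \ge \mu\|\mb b\|_2$ using only the first moment of $|R|$ (the same economy the paper achieves by passing through $\|\cdot\|_1/\sqrt{T}$), and then the weighted Cauchy--Schwarz plus Jensen applied to $(1+B)^{-1/2}$, $B\sim\mathrm{Binomial}(n-1,\theta)$, replaces the binomial-median fact. Your route buys a cleaner constant ($\mu/2$, or $\mu/\sqrt{6}$ with the elementary fourth-moment Khintchine constant, versus the paper's $\mu/4$) and avoids the symmetrization-to-the-centroid reduction entirely; the paper's reduction is arguably slicker as a one-line dimension reduction but is what costs it the extra factor of $2$. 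Your closing remark about the order of conditioning is exactly the right diagnosis of why the naive application fails.
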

\begin{proof}
Let $\mb z= \mb v/\| \mb v\|_1$, and write
\begin{eqnarray*}
\xi &=& \E \Bigl[ \bigl| \sum_{j=1}^n x_{j} z_j \bigr| \Bigr] = \E \Bigl[ \bigl| \sum_{j=1}^n \Omega_j R_j z_j \bigr| \Bigr], \\ 
&\ge& \inf_{\| \mb \zeta \|_1 = 1} \E \Bigl[ \bigl| \sum_{j=1}^n \Omega_j R_j \zeta_j \bigr| \Bigr]
\;=\; \inf_{\mb \zeta \ge 0, \sum_j \zeta_j = 1} \E\Bigl[ \bigl| \sum_j \Omega_j R_j \zeta_j \bigr| \Bigr],
\end{eqnarray*}
where the final equality holds due to the fact that $\mathrm{sign}(\zeta_j) R_j$ is equal to $R_j$ in distribution. Notice that $\E\left[ | \sum_j \Omega_j R_j \zeta_j | \right]$ is a convex function of $( \zeta_j )$, and is invariant to permutations. Hence, this function is minimized at the point $\zeta_1 = \zeta_2 = \dots = \zeta_n = 1/n$, and 
\begin{equation}
\xi \;\ge\; n^{-1} \E\Bigl[ \bigl| \sum_j \Omega_j R_j \bigr| \Bigr] \;=\; n^{-1} \E_\Omega \E_R \Bigl[ \bigl| \sum_j \Omega_j R_j \bigr| \Bigr].
\end{equation}
Let $T = \#\{ j \mid \Omega_j = 1 \}$. For fixed $\Omega$, $\sum_j \Omega_j R_j$ is a sum of symmetric random variables. Thus
\begin{equation}
\E_R \Bigl[ \bigl| \sum_j \Omega_j R_j \bigr| \Bigr] \;=\; \E_R \E_\varepsilon  \Bigl[ \bigl| \sum_j \Omega_j |R_j| \varepsilon_j \bigr| \Bigr],
\end{equation}
where $(\varepsilon)$ is an independent sequence of Rademacher (iid $\pm 1$) random variables. By the Khintchine inequality, 
\begin{equation}
\E_R \E_\varepsilon  \Bigl[ \bigl| \sum_j \Omega_j |R_j| \varepsilon_j \bigr| \Bigr] \ge \frac{1}{\sqrt{2}} \E_R \| \mb \Omega \odot \mb R \|_2 \ge \frac{1}{\sqrt{2T}} \E_R \| \mb \Omega \odot \mb R \|_1  = \mu \sqrt{T/2}. 
\end{equation}
Hence
\begin{equation} \label{eqn:mu-sl}
\xi \;\ge\; \frac{\mu}{n} \sum_{s = 0}^n \P[ T = s ] \sqrt{\frac{s}{2}}.
\end{equation}
Notice that $T \sim \mathrm{Binomial}(n,\theta)$, and hence, with probability at least $1/2$, $T \ge \lfloor n \theta \rfloor \ge n \theta / 2$, where the last inequality holds due to the assumption $n \theta \ge 2$. 
Plugging in to \eqref{eqn:mu-sl}, we obtain that
\begin{equation}
\xi \;\ge\; \frac{\mu}{4} \sqrt{\frac{\theta}{n}}.
\end{equation}
\end{proof}

\begin{lemma} \label{lem: ext-pos} Suppose that $\mb X$ follows the Bernoulli-Subgaussian model, and further that for each $(i,j)$, $| X_{ij} | \le \tau$ almost surely. Let $\mb X_S$ be a column submatrix indexed by a fixed set $S\subset [p]$ of size $|S| < \frac{p}{4}$. There exist positive numerical constants $c$, $C$ such that the following:
\begin{align}
\|\mb v^T \mb X\|_1-2\|\mb v^T\mb X_S\|_1 \;>\; \frac{ \mu p}{32} \sqrt{\frac{\theta}{n}}\,\|\mb v\|_1\label{eqn:Mz-sum}
\end{align}
holds simultaneously for all $\mb v \in \Re^n$, on an event  $\event_N$ with 
$$\prob{\event_N^c} \;\le\;  \exp\left( - \frac{c \mu^2 p}{n (1 + \tau \mu)} + n \log\left( C \max\left\{ \frac{n \tau}{\mu}, 1 \right\} \right) \right).$$
\end{lemma}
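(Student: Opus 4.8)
The plan is to control the random quantity $f(\mb v) \doteq \norm{\mb v^T \mb X}_1 - 2\norm{\mb v^T \mb X_S}_1$ from below, uniformly over $\mb v$, by combining a pointwise concentration estimate with a covering-number (net) argument. Since both $f$ and the target $\frac{\mu p}{32}\sqrt{\theta/n}\norm{\mb v}_1$ are positively homogeneous of degree one in $\mb v$, I would first reduce to the $\ell^1$-unit sphere $\mc B = \setof{\mb v \in \Re^n : \norm{\mb v}_1 = 1}$. Writing $\mb x_1,\dots,\mb x_p$ for the columns of $\mb X$, which are independent, and setting $\epsilon_j = +1$ for $j \notin S$ and $\epsilon_j = -1$ for $j \in S$, we have $f(\mb v) = \sum_{j=1}^p \epsilon_j \magnitude{\mb v^T \mb x_j}$, a sum of $p$ independent terms.

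For the pointwise estimate, fix $\mb v \in \mc B$. Lemma~\ref{lem:avg_lower_bound} gives $\E\magnitude{\mb v^T \mb x_j} \ge \frac{\mu}{4}\sqrt{\theta/n}$, so using $\magnitude{S} < p/4$,
\[
  \E f(\mb v) \;=\; (p - 2\magnitude{S})\,\E\magnitude{\mb v^T \mb x_1} \;>\; \frac{\mu p}{8}\sqrt{\frac{\theta}{n}}.
\]
Each summand $\epsilon_j \magnitude{\mb v^T \mb x_j}$ lies in $[-\tau,\tau]$ (since $\norm{\mb v}_1 = 1$ and $\magnitude{X_{ij}} \le \tau$ almost surely) and has variance at most $\E[(\mb v^T \mb x_j)^2] = \theta \sum_i v_i^2 \E[R_{ij}^2] \le \theta$, where I used independence and symmetry of the entries within a column together with $\norm{\mb v}_2 \le \norm{\mb v}_1$. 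I would then apply Bernstein's inequality with deviation $t = \frac{\mu p}{16}\sqrt{\theta/n}$ to obtain
\[
  \prob{f(\mb v) \le \E f(\mb v) - t} \;\le\; \exp\!\left(-\frac{t^2}{2(p\theta + \tau t/3)}\right) \;\le\; \exp\!\left(-\frac{c\,\mu^2 p}{n(1 + \tau\mu)}\right),
\]
the last step using $n\theta \ge 2$ to absorb the Bernstein correction term $\tau t$ into the factor $1 + \tau\mu$; in particular $f(\mb v) > \frac{\mu p}{16}\sqrt{\theta/n}$ off this event.

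To make the bound uniform, I would exploit that $f$ is Lipschitz on $\mc B$ with respect to $\norm{\cdot}_1$: from $\magnitude{X_{ij}} \le \tau$ one gets $\magnitude{f(\mb v) - f(\mb w)} \le (p + 2\magnitude{S})\,\tau\,\norm{\mb v - \mb w}_1 \le \frac{3}{2} p\tau\,\norm{\mb v - \mb w}_1$. Choosing an $\eps$-net $\mc N$ of $\mc B$ with $\eps$ a small constant multiple of $\mu\sqrt{\theta}/(\tau\sqrt{n})$, arranged so that the Lipschitz error $\frac{3}{2}p\tau\eps$ is at most $\frac{\mu p}{32}\sqrt{\theta/n}$, one can take $\magnitude{\mc N} \le (3/\eps)^n$. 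A union bound over $\mc N$ of the pointwise estimate shows that, off an event of probability at most $\magnitude{\mc N}\,\exp(-c\mu^2 p/(n(1+\tau\mu)))$, every point of $\mc N$ satisfies $f \ge \frac{\mu p}{16}\sqrt{\theta/n}$; the Lipschitz bound then propagates this to every $\mb v \in \mc B$, now with the constant $\frac{\mu p}{32}\sqrt{\theta/n}$, which is \eqref{eqn:Mz-sum}. Finally $\log\magnitude{\mc N} \le n\log(3/\eps) = n\log\!\big(C\tau\sqrt{n/\theta}/\mu\big)$, and $\theta \ge 2/n$ forces $\sqrt{n/\theta} \le n/\sqrt{2}$, so $\log\magnitude{\mc N} \le n\log\!\big(C\max\setof{n\tau/\mu,1}\big)$ (the maximum with $1$ also covers the degenerate case where the net is a single point). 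Collecting the two contributions yields the claimed bound on $\prob{\event_N^c}$.

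The step I expect to be delicate is the Bernstein bookkeeping: the deviation level $t$ and the use of $n\theta \ge 2$ must be arranged so that the exponent is genuinely of order $\mu^2 p/(n(1+\tau\mu))$ and does not lose a $\sqrt{n\theta}$ factor, and one then has to check that the net term $\log\magnitude{\mc N} \sim n\log(\tau\sqrt{n/\theta}/\mu)$ is dominated by it under the running hypotheses. The other ingredients are routine concentration and covering estimates.
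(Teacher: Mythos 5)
Your proposal is correct and follows essentially the same route as the paper's proof: the same decomposition of $\|\mb v^T\mb X\|_1 - 2\|\mb v^T\mb X_S\|_1$ into a signed sum of $p$ independent bounded terms, the same expectation lower bound via Lemma~\ref{lem:avg_lower_bound} combined with $|S|<p/4$, the same Bernstein application at deviation level $t = \frac{\mu p}{16}\sqrt{\theta/n}$ using $\theta n\ge 2$ to absorb the correction term, and the same $\eps$-net on the $\ell^1$-sphere at resolution $\eps \sim \frac{\mu}{\tau}\sqrt{\theta/n}$ with Lipschitz propagation costing a factor of two in the constant. The bookkeeping you flagged as delicate goes through exactly as you anticipate.
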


\begin{proof} Consider a fixed vector $\mb z$ of unit $\ell^1$ norm. Note that 
\begin{equation}
\| \mb z^T \mb X \|_1 - 2 \| \mb z^T \mb X_S \|_1 \;=\; \sum_{i \in S^c} |\mb z^T \mb X_i| - \sum_{i\in S} | \mb z^T \mb X_i| 
\end{equation}
is a sum of $p$ independent random variables. Each has absolute value bounded by 
$$|\mb z^T \mb X_i| \;\leq\; \|\mb z\|_1\|\mb X\|_\infty \;\leq\; \tau,$$ 
and second moment 
$$\text{Var}(|\mb z^T \mb X_i|)\;\leq\; \E[(\mb z^T \mb X_i)^2] \;\leq\; \theta \|\mb z\|_2^2 \;\leq\; \theta.$$ 
There are $p$ such random variables, so the overall variance is bounded by $p \theta$. The expectation of is 
\begin{eqnarray*}
\E\left[ \| \mb z^T \mb X \|_1 - 2 \| \mb z^T \mb X_S \|_1\right] = (p - 2 |S|) \E | \mb z^T \mb X_1 | = (p - 2|S|) \tilde{\mu} \ge \frac{\tilde{\mu} p}{2},
\end{eqnarray*}
where $\tilde{\mu} = \E | \mb z^T \mb X_1 |$. We apply Bernstein's inequality to bound the deviation below the expectation: 
\begin{equation} \label{eqn:Hoeffding}
\P\left[ \,\| \mb z^T \mb X \|_1-2\|\mb z^T \mb X_S\|_1 \;\le\; \frac{\tilde{\mu}p}{2} - t \, \right] \;\le\; \exp\left(-\frac{t^2}{2 \theta p + 2 \tau t /3}  \right).
\end{equation}
We will set $t \;=\; \frac{\mu}{16} \sqrt{ \frac{\theta}{n} } p$, and notice that by Lemma \ref{lem:avg_lower_bound}, $\tilde{\mu} \ge \frac{\mu}{4} \sqrt{\frac{\theta}{n}}$. Hence, we obtain 
\begin{equation}
\P\left[ \,\| \mb z^T \mb X \|_1-2\|\mb z^T \mb X_S\|_1 \;\le\; \frac{\mu}{16} \sqrt{\frac{\theta}{n}} \, p \; \right]  \;\le\; \exp\left(-\frac{c_1 \mu^2 \theta p^2 /n}{2 \theta p + c_2 p \mu \tau \sqrt{\theta/n} }  \right).
\end{equation}
Using that $\theta/n \le \theta^2/2$ (from the assumption $\theta n \ge 2$), $\mu \le 1$, and simplifying, we can obtain a more appealing form:
\begin{eqnarray}
\P\left[ \,\| \mb z^T \mb X \|_1-2\|\mb z^T \mb X_S\|_1 \le \frac{\mu}{16} \sqrt{\frac{\theta}{n}} \, p \, \right] \;\le\; \exp\left(-\frac{c\mu^2 p}{n (1 + \tau \mu) }  \right),
\end{eqnarray}
This bound holds for any fixed vector of unit $\ell^1$ norm. We need a bound that holds simultaneously for {\em all} such $\mb z$. For this, we employ a discretization argument. This argument requires two probabilistic ingredients: bounds for each $\mb z$ in a particular net, and a bound on the overall $\ell^1$ operator norm of $\mb X^T$, which allows us to move from a discrete set of $\mb z$ to the set of all $\mb z$. We provide the second ingredient first: 

Now, let $N$ be an $\eps$-net for the unit ``1-sphere'' $\Gamma \doteq \{ \mb x \mid \| \mb x \|_1 = 1 \}$.
That is, for all $\norm{\mb x}_{1} = 1$, there is a $\mb z \in N$ such that $\norm{\mb x- \mb z}_{1} \leq \epsilon$.
For example, we could take
\[
  N = \setof{\mb y / \ceil{n/\epsilon } : \mb y \in \mathbb Z^{n} , \norm{\mb y}_{1} = \ceil{n/\epsilon} }.
\]
With foresight, we choose $\eps = \frac{\mu}{32 \tau} \sqrt{\frac{\theta}{ n}} \;\ge\; \frac{\mu \sqrt{2}}{32 \tau n}$. The inequality follows from $\theta n \ge 2$. Using standard arguments (see, e.g. \cite[Page 15]{Stanley}), one can show
\[
  \sizeof{N} \leq 2^{n} \binom{\ceil{n/ \epsilon} + n - 1}{n}
\leq 
  (2 e (\ceil{1/\epsilon} + 1))^{n}.
\]
So,
\begin{equation} \label{eqn:msv-N}
\log |N| \;\leq\;  n \log\left( C \max\left\{ \frac{n \tau}{\mu}, 1\right\} \right),
\end{equation}
for appropriate constant $C$.
 
 For each $\mb z \in N$, set
 \begin{equation}
 \event_{\mb z} = \textbf{event}\left\{ \| \mb z^T \mb X \|_1 - 2 \| \mb z^T \mb X_S \|_1 > \frac{\mu}{16} \sqrt{\frac{\theta}{n}} \, p \right\}.
 \end{equation}
 Our previous efforts show that for each $\mb z$, 
 \begin{equation}
 \prob{ \event_{\mb z}^c } \;\le\; \exp\left(-\frac{c\mu^2 p}{n (1 + \tau \mu) }  \right). 
 \end{equation}
  Set 
 \begin{equation}
  \event_N =  \bigcap_{\mb z \in N} \event_{\mb z}.
 \end{equation}
 On $\event_N$, consider any $\mb v \in \Gamma$, choose $\mb z \in N$ with $\| \mb v - \mb z \|_1 \le \epsilon$, and set $\mb \Delta = \mb v - \mb z$, then 
\begin{align}\label{eqn:net-expansion}
\|\mb v^T \mb X\|_1-2\|\mb v^T\mb X_S\|_1 \; &= \; \|(\mb z+\mb \Delta)^T\mb X_{S^c}\|_1-\|(\mb z+\mb \Delta)^T \mb X_S\|_1 \nonumber \\
&\ge \; \| \mb z^T \mb X_{S^c} \|_1 - \| \mb z^T \mb X_S \|_1 - \| \mb \Delta^T \mb X_{S^c}\|_1 -  \| \mb \Delta^T\mb  X_S\|_1 \nonumber \\
&= \; \| \mb z^T \mb X \|_1 - 2 \| \mb z^T \mb X_S \|_1 - \| \mb \Delta^T \mb X' \|_1 \nonumber \\
&\ge \; \| \mb z^T \mb X \|_1 - 2 \| \mb z^T \mb X_S \|_1 - \| \mb \Delta \|_1 \| {\mb X}^T \|_1 \nonumber \\ 
&\ge \; \frac{\mu}{16} p \sqrt{\frac{\theta}{n}}- \eps p \tau \nonumber \\
&= \; \left(\frac{\mu}{16} - \eps \tau \sqrt{\frac{n}{\theta}} \right) \sqrt{\frac{\theta}{n}} \, p \nonumber \\
&= \; \frac{\mu}{32} \sqrt{\frac{\theta}{n}} \, p.
\end{align} 
This bound holds all $\mb v \in \Gamma$ (above, we have used that for $\mb v \in \Gamma$, $\| \mb v \|_1 = 1$). By homogeneity, whenever this inequality holds over $\Gamma$, it holds over all of $\Re^n$ we have 
\begin{equation}
\|\mb v^T \mb X\|_1-2\|\mb v^T\mb X_S\|_1 \;\ge\; \frac{\mu}{32} \sqrt{\frac{\theta}{n}} \, p \, \| \mb v \|_1. 
\end{equation}
To complete the proof, note that the failure probability is bounded as 
\begin{eqnarray*}
\prob{ \event_N^c } &\le& \sum_{\mb z \in N} \prob{ \event_{\mb z}^c } \quad\le\quad |N| \times \exp\left( - \frac{c \mu^2 p}{n (1 +  \tau \mu)} \right) \\
&\le& \exp\left( - \frac{c \mu^2 p}{n (1 +  \tau \mu)} + n \log\left( C \max \left\{ \frac{ n \tau}{\mu}, 1\right\} \right) \right).
\end{eqnarray*}
\end{proof}

\subsection{Proof of Lemma \ref{lem:extreme-sparse-new}}

\vspace{.1in}

\begin{proof} For each $j \in [n]$, set $T_j = \{ i \mid X_{ji} = 0 \}$. Set $$\Omega_{J,j} = \left\{ \ell \mid X_{j,\ell} \ne 0, \;\text{and} \; X_{j',\ell} = 0, \;\forall j' \in J \setminus\{j\} \right\}.$$ 
Consider the following conditions: 
\begin{eqnarray}
&& \| \mb X \|_{r1} \le (1+\eps) \mu \theta p,  \label{eqn:ub-cond-1} \\
\forall \,j\in [n], &&\|\mb X_{[n] \setminus \{j\}, T_j^c} \|_{r1} \;\le\; \alpha \mu \theta p,  \label{eqn:ub-cond-2} \\
\forall \,J \in \binom{[n]}{s}, j \in J, && \| \mb X_{j,\Omega_{J,j}} \|_1 \;\ge\; \beta \mu \theta p. \label{eqn:ub-cond-3}
\end{eqnarray}
We show that when these three conditions hold for appropriate $\eps,\alpha,\beta > 0$, the property {\bf (P2)} will be satisfied, and any solution to a restricted subproblem will be $1$-sparse. Indeed, fix $J$ of size $s$ and nonzero $\mb b \in \Re^n$, with $\mathrm{supp}(\mb b) \subseteq J$. Let $j^\star \in J$ denote the index of the largest element of $\mb b$ in magnitude. 

Consider any $\mb v$ whose support is contained in $J$, and which satisfies $\mb b^T \mb v = 1$. Then we can write $\mb v = \mb v_0 + \mb v_1$, with $\mathrm{supp}(\mb v_0) \subseteq \{ j^\star \}$ and $\mathrm{supp}(\mb v_1) \subseteq J \setminus \{j^\star\}$. We have 
\begin{eqnarray}
\| \mb v^T \mb X_J \|_1 &=& \|\mb v_0^T \mb X_J + \mb v_1^T \mb X_J  \|_1 \nonumber \\
 &\ge& \| \mb v_0^T \mb X_J \|_1 + \| \mb v_1^T \mb X_{J, T_{j^\star}} \|_1 - \|\mb v_1^T \mb X_{J, T_{j^\star}^c} \|_1. \label{eqn:triangle-after}
\end{eqnarray}
Above, we have used that $\mb v_0$ is supported only on $j^\star$, $\mathrm{supp}(\mb v_0^T \mb X) = T_{j^\star}^c$, and applied the triangle inequality. 

\newcommand{\vt}{\tilde{\mb v}}

First without loss of generality, we assume by normalization $b_{j^\star}=1$. We know that $\mb v_0=(1-\mb b^T \mb v_1) \mb e_{j^\star}$, and the vector $\mb e_{j^\star}$ is well-defined and feasible. We will show that in fact, this vector is optimal. Indeed, from \eqref{eqn:triangle-after}, we have 
\begin{eqnarray*}
\| \mb v^T \mb X_J \|_1 &\geq& \| \mb (1- \mb b^T \mb v_1) \mb e_{j^\star}^T \mb X_J \|_1  + \| \mb v_1^T \mb X_{J,T_{j^\star}} \|_1 - \| \mb v_1^T \mb X_{J,T_{j^\star}^c} \|_1 \nonumber \\
&\ge& \| \mb e_{j^\star}^T \mb X_J \|_1 -  \|\mb X_J \|_{r1} \| \mb b_{J \setminus \{ j^\star \}} \|_\infty \| \mb v_1 \|_1 + \| \mb v_1^T \mb X_{J,T_{j^\star}} \|_1 - \| \mb v_1 \|_1 \| \mb X_{J,T_{j^\star}^c} \|_{r1} \nonumber \\
&\ge& \| \mb e_{j^\star}^T \mb X_J \|_1 + \| \mb v_1^T \mb X_{J,T_{j^\star}} \|_1 - \left( \| \mb X_J \|_{r1} \times \left(1-\gamma \right) + \| \mb X_{J,T_{j^\star}^c} \|_{r1} \right) \| \mb v_1 \|_1 
\end{eqnarray*}
In the last simplification, we have used that $|b|_{(2)}/|b|_{(1)} \leq 1-\gamma$. We can lower bound $\| \mb v_1^T \mb X_{J,T_{j^\star}} \|_1$ as follows: for each $i \in J$, consider those columns indexed by $j \in \Omega_{J,i}$. For such $j$, 
$$\mathrm{supp}(\mb v_1) \cap \mathrm{supp}(\mb X_{J,j}) = \{ i \},$$ 
and so 
$$\| \mb v_1^T \mb X_{J,\Omega_{J,i}} \|_1 = |v_1(i)| \| \mb X_{i,\Omega_{J,i}} \|_1.$$
Notice that  $i \ne i'$, $\Omega_{J,i} \cap \Omega_{J,i'} = \emptyset$, and that for $j \in J$, and $i \ne j$, $\Omega_{J,i} \subseteq T_j$. So, finally, we obtain 
\begin{equation}
\| \mb v_1^T \mb X_{T_{j^\star}} \|_1 \;\ge\; \sum_{i\in J} |v_1(i)| \| \mb X_{i,\Omega_{J,i}} \|_1.
\end{equation}
Plugging in the above bounds, we have that 
\begin{equation}
\| \mb v^T \mb X\|_1 \;\ge\; \|  \mb e_{j^\star}^T \mb X \|_1 + \left(\beta + \gamma - \alpha - \eps - 1 \right) \| \mb v_1 \|_1 \mu \theta p.
\end{equation}
Hence, provided $\beta - \alpha - \eps > 1 - \gamma$, $\mb e_{j^\star}$ achieves a strictly smaller objective function than $\mb v$. 

\paragraph{Probability.}

We will show that the desired events hold with high probability, with $\beta = 1 - \gamma/4$, and $\alpha = \eps = \gamma / 4$. Using Lemma \ref{lem:M-norm-bound-bg}, \eqref{eqn:ub-cond-1} holds with probability at least 
$$1 - 4 \exp\left( -\frac{c \gamma^2 \theta p}{16} + \log n \right).$$
For the second condition \eqref{eqn:ub-cond-2}, fix any $j$. Notice that for any $(j',i)$ the events $\mb X_{j',i} \ne 0$ and $i \in T_j^c$ are independent, and have probability $\theta^2$. Moreover, for fixed $j'$, the collection of all such events (for varying column $i$) is mutually independent. Therefore, for each $j \in [n]$, $\| \mb X_{[n] \setminus\{ j\}, T_j^c } \|_{r1}$ is equal in distribution to the $r1$ norm of an $(n-1) \times p$ matrix whose rows are iid $\theta^2$-Bernoulli-Subgaussian.\footnote{Distinct rows are not independent, since they depend on common events $i \in T_j^c$, but this will cause no problem in the argument.} Hence, by Lemma \ref{lem:M-norm-bound-bg}, we have 
\begin{align}
\P\left[ \| \mb X_{[n] \setminus\{j\},T_j^c} \|_{r1} > (1+\delta) \mu \theta^2 p \right] \le 4 \exp\left( - c \delta^2 \theta^2 p + \log (n -1) \right).\label{eqn:extreme-term2}
\end{align}
To realize our choice of $\alpha$, we need $\theta ( 1 + \delta ) = \gamma / 4$; we therefore set $\delta = \theta^{-1} \left( \gamma/ 4 - \theta \right) \ge \theta^{-1} \gamma / 8$. Plugging in, and taking a union bound over $j$ shows that \eqref{eqn:ub-cond-2} holds with probability at least 
\begin{equation}
1 - 4 \exp\left(- \frac{c \gamma^2 \theta p}{64} + 2\log n \right). 
\end{equation}
Finally, consider \eqref{eqn:ub-cond-3}. Fix $j$ and $J$. Notice for $i \in [p]$, the events $i \in \Omega_{J,j}$ are independent, and occur with probability $\theta'  = \theta (1-\theta)^{s-1} \ge \theta ( 1 - \theta s )$. Hence $\| \mb X_{j,\Omega_{J,j}} \|_1$ is distributed as the $\ell^1$ norm of a $1 \times p$ iid $\theta'$-Bernoulli-Gaussian vector. Again using Lemma \ref{lem:M-norm-bound-bg}, we have 
\begin{equation}
\P\left[  \| \mb X_{j,\Omega_{J,j}} \|_1 \le (1-\delta) \mu \theta' p \right] \;\le\; 4 \exp\left( - c \delta^2 \theta' p \right). \label{eqn:extreme-term3}
\end{equation}
To achieve our desired bound, we require $(1-\delta) \theta' \ge \beta \theta$; a sufficient condition for this is $(1-\delta)(1-\theta s) \ge 1 - \gamma/4$, or equivalently, 
$$\delta (1 - \theta s) \le \gamma / 4 - \theta s.$$
Using the bound $\theta s \le \gamma / 8$, we find that it suffices to set $\delta = \gamma / 8$. We therefore obtain 
\begin{equation}
\P\left[ \| \mb X_{j,\Omega_{J,j}} \|_1 \le \beta \mu \theta p \right] \;\le\; 4 \exp\left( -\frac{c \gamma^2 \theta' p}{64} \right).
\end{equation}
Finally, using $\theta' \ge \theta ( 1 - \theta s ) \ge \theta / 2$, we obtain
\begin{equation}
\P\left[ \| \mb X_{j,\Omega_{J,j}} \|_1 \le \beta \mu \theta p \right] \;\le\; 4 \exp\left( -\frac{c \gamma^2 \theta p}{128} \right).
\end{equation}
Taking a union bound over the $s \binom{n}{s}$ pairs $j,J$ and summing failure probabilities, we see that the desired property holds on the complement of an event of probability at most
\begin{eqnarray*}\label{eqn:1sparse-all-probs}
\lefteqn{ 4 s \binom{n}{s} \exp\left( -\frac{c \gamma^2 \theta p}{128} \right) + 4 \exp\left(- \frac{c \gamma^2 \theta p}{64} + 2 \log n \right) +  4 \exp\left( -\frac{c \gamma^2 \theta p}{16} + \log n \right) } \\
 &\le& 4 \left( s \binom{n}{s} + n^2 + n \right) \times \exp\left( -\frac{c \gamma^2 \theta p}{128} \right). \hspace{4in}
\end{eqnarray*}
Using that $\log \binom{n}{s} < s \log n$ and $\theta \ge 2 / n$, it is not difficult to show that under our hypotheses, the failure probability is bounded by $4 p^{-10}$.
\end{proof}

\begin{lemma} \label{lem:M-norm-bound-bg} Let $\mb X$ be an $n \times p$ random matrix, such that the marginal distributions of the rows $\mb X_i$ follow the Bernoulli-Subgaussian model. Then for any $ 0<\delta<1$ we have 
\begin{equation} \label{eqn:M-norm-bound}
(1-\delta) \mu \, \theta p\leq \| \mb X \|_{r1}\leq (1+\delta) \mu \, \theta p
\end{equation}
with probability at least
$$1-4\exp\left(-c \delta^2 \theta p + \log n\right).$$
where $c$ is a positive numerical constant. 
\end{lemma}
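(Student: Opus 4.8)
The plan is to control the maximum row $\ell^1$-norm $\|\mb X\|_{r1} = \max_i \|\mb e_i^T \mb X\|_1$ by first proving a concentration inequality for a single row and then taking a union bound over the $n$ rows; since the hypothesis only controls the marginal law of each row, this is all that is available, and it is enough. Fix one row and write its entries as $x_1,\dots,x_p$ with $x_j = \Omega_j R_j$, where the $\Omega_j$ are iid $\bernoulli{\theta}$ and the $R_j$ are iid symmetric with $\E|R_j| = \mu$, independent of the $\Omega_j$. Then $\E|x_j| = \theta \mu$, so $\E\|\mb e_i^T \mb X\|_1 = \E\sum_j |x_j| = \mu\theta p$, and it remains to show $\P[\,|\sum_j |x_j| - \mu\theta p| > \delta \mu\theta p\,] \le 4\exp(-c\delta^2\theta p)$.

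For the single-row bound I would condition on the Bernoulli support size $T = \#\{ j : \Omega_j = 1 \} \sim \mathrm{Binomial}(p,\theta)$. Conditioned on $T = m$, the nonzero entries of the row are exactly $m$ iid copies of $|R|$, so $\sum_j |x_j| = \sum_{k=1}^m |R_k|$, a sum with conditional mean $m\mu$. A multiplicative Chernoff bound gives $|T - \theta p| \le \tfrac{\delta}{4}\theta p$ off an event of probability at most $2\exp(-c\delta^2\theta p)$, and on that event $T \le 2\theta p$. Because $|R|$ satisfies a tail inequality of the form \eqref{eqn:tail}, the concentration estimate \eqref{eqn:sg-conc} applied to $|R_1|,\dots,|R_m|$ gives, for $m \le 2\theta p$, $\P[\,|\sum_{k=1}^m |R_k| - m\mu| > \tfrac{\delta}{4}\mu\theta p \mid T = m\,] \le 2\exp\!\big( -(\delta\mu\theta p)^2/(32m) \big) \le 2\exp(-\delta^2\mu^2\theta p/64)$. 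On the intersection of the two good events, $|\sum_j |x_j| - \mu\theta p| \le \mu|T - \theta p| + |\sum_{k=1}^T |R_k| - T\mu| \le \tfrac{\delta}{2}\mu\theta p$, which lies strictly inside the target interval; using $\mu \ge 1/10$ and $\delta < 1$ to merge the two exponents, the per-row failure probability is at most $4\exp(-c'\delta^2\theta p)$. A union bound over the $n$ rows then yields $\P[\,\|\mb X\|_{r1} \notin [(1-\delta)\mu\theta p, (1+\delta)\mu\theta p]\,] \le 4n\exp(-c'\delta^2\theta p) = 4\exp(-c'\delta^2\theta p + \log n)$, which is the claim.

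The only delicate point, and the closest thing to an obstacle, is producing an exponent that scales like $\theta p$ rather than $\theta^2 p$ or $\theta p / \sqrt{\log p}$. Applying Bernstein's inequality directly to the unbounded summands $|x_j|$ would force a truncation at a level $\tau \sim \sqrt{\log p}$ and cost a factor of $\sqrt{\log p}$ in the exponent, while bounding $\mathrm{Var}(|x_j|)$ by an absolute constant instead of by $\theta$ would cost a factor of $\theta$; conditioning on $T$ and invoking \eqref{eqn:sg-conc} for a sum of exactly $T$ copies of $|R|$ sidesteps both losses. An equivalent route is a direct Chernoff argument: one checks that $\log \E\exp(\lambda(|x_j| - \mu\theta)) \le C\theta\lambda^2$ for $|\lambda|$ below an absolute constant --- expanding $\E\exp(\lambda |x_j|) = 1 - \theta + \theta\,\E\exp(\lambda|R|)$ and using the sub-Gaussian moment growth of $|R|$ implied by \eqref{eqn:moments} and \eqref{eqn:tail} --- and then optimizes over $\lambda$ in the valid range (which contains the optimizer because $\delta\mu < 1$), again giving the per-row bound $4\exp(-c\delta^2\theta p)$.
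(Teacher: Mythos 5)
Your proposal is correct and follows essentially the same route as the paper's proof: condition on the Bernoulli support size $T_i\sim\mathrm{Binomial}(p,\theta)$ of each row, control $T_i$ by a multiplicative Chernoff bound, apply the subgaussian concentration \eqref{eqn:sg-conc} to the conditional sum $\sum_{k\le T_i}|R_k|$ so that the exponent scales like $\theta p$ rather than $\theta^2 p$, combine the two deviations by the triangle inequality, and finish with a union bound over the $n$ rows. The only differences are cosmetic (a $\delta/4$ rather than $\delta/2$ split of the error budget, and the optional MGF-based alternative you sketch at the end).
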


\begin{proof}
Let $\mb X_i = \mb \Omega_i \odot \mb R_i$ denote the $i$-th row of $\mb X$. We have $\| \mb X \|_{r1} = \max_{i \in [n]} \| \mb X_i \|_1$. For any fixed $i$, $\| \mb X_i \|_1 =\sum_j |X_{ij}|=\sum_j |\Omega_{ij}R_{ij}|$, and $\E[\|\mb X_i \|_1]=\sum_j \E|X_{ij}|= \mu \theta p$, where $\mu = \E[ |R_{ij}| ]$ is as specified in the definition of the Bernoulli-Subgaussian model. 

Let $T_i=\#\{j|\Omega_{ij}=1\}$. Since $T_i\sim \mathrm{Binomial}(p,\theta)$, for $0 < \delta_1 < 1$, the Chernoff bound yields
\begin{align}
\P\left[ \, T_i\geq (1+\delta_1)\theta p \, \right]\leq \exp\left(-\frac{\delta_1^2\theta p}{3}\right), \label{eqn:chernoff-upper} \\
\P\left[ \, T_i\leq (1-\delta_1)\theta p\, \right]\leq \exp\left(-\frac{\delta_1^2\theta p}{2} \right). \label{eqn:chernoff_lb_gn}
\end{align}
Since the $R_{ij}$ are subgaussian, so is $|R_{ij}|$. Conditioned on $\Omega_i$, $\| \mb X_i \|_1 = \sum_{j \in T_i} |R_{ij}|$ is a sum of independent subgaussian random variables. By \eqref{eqn:tail}, 
\begin{align}
\condprob{\magnitude{ \, \| \mb X_i \|_1- \mu T_i \, } \geq \delta_2 \mu \theta p}{ \Omega_i } \;\leq\; 2 \exp\left(-\frac{\delta_2^2 \mu^2 \theta^2 p^2}{2T_i}\right). 
\end{align}
Whenever $T_i \in \theta p \times [1-\delta_1,1+\delta_1]$, the conditional probability is bounded above by $\exp\left( - \frac{\delta_2^2 \mu \theta p}{2 (1-\delta_1)} \right)$. So, unconditionally, 
\begin{eqnarray*}
\lefteqn{\P\Bigl[ \, \| \mb X_i \|_1 \notin \mu \theta p \times [ 1- \delta_1 - \delta_2, 1 + \delta_1 +  \delta_2] \, \Bigr] } \\ 
&\le& 2 \exp\left( -\frac{c_1 \delta_2^2 \mu \theta p}{ (1-\delta_1)} \right) +  \exp\left( -\frac{\delta_1^2 \theta p}{3} \right) + \exp\left( - \frac{\delta_1^2 \theta p}{2} \right).
\end{eqnarray*}
Set $\delta_1 = \delta_2 = \delta / 2$, the fact that $\mu$ is bounded below by a constant, combine exponential terms, and take a union bound over the $n$ rows to complete the proof. 
\end{proof}

\subsection{Proof of Correct Recovery Theorem \ref{thm:correct}} \label{sec:recovery-proof-sc}

\begin{proof} If $n = 1$, the result is immediate. Suppose $n \ge 2$. We will invoke Lemmas \ref{lem:reduction-new} and \ref{lem:extreme-sparse-new}. The conditions of Lemma \ref{lem:reduction-new} are satisfied immediately. In Lemma \ref{lem:extreme-sparse-new}, choose $\gamma = \frac{\beta}{\log n}$, with $\beta$ smaller than the numerical constant $\alpha_0$ in the statement of Lemma \ref{lem:2nd-our-form}. Take $s = \lceil 6 \theta n \rceil$. Since $\theta n \ge 2$, $s \le 7 \theta n$. The condition $\theta s < \gamma / 8$ is satisfied as long as $$\theta^2 < \frac{\gamma}{56 n} = \frac{\beta}{56 \, n \log n}.$$
This is satisfied provided the numerical constant $\alpha$ in the statement of the theorem is sufficiently small. Finally, it is easy to check that $p$ satisfies the requirements of Lemma \ref{lem:extreme-sparse-new}, as long as the constant $c_1$ in the statement of the theorem is sufficiently large. 

So, with probability at least $1-7p^{-10}$, the matrix $\mb X$ satisfies properties {\bf (P1)}-{\bf (P2)} defined in Lemmas \ref{lem:reduction-new} and \ref{lem:extreme-sparse-new}, respectively. Consider the optimization problem 
\begin{equation} \label{eqn:main-t-pf-opt}
\text{minimize} \quad \| \mb w^T \mb Y \|_1 \quad \text{subject to} \quad \mb r^T \mb w = 1,
\end{equation}
with $\mb r = \mb Y \mb e_j$ the $j$-th column of $\mb Y$. This problem recovers the $i$-th row of $\mb X$ if the solution $\mb w_\star$ is unique, and $\mb A^T \mb w_\star$ is supported only on entry $i$. This occurs if and only if the solution $\mb z_\star$ to the modified problem 
\begin{equation}
\text{minimize} \quad \| \mb z^T \mb X \|_1 \quad \text{subject to} \quad \mb b^T \mb z = 1
\end{equation}
with $\mb b = \mb A^{-1} \mb r = \mb X \mb e_j$ is unique and supported only on entry $i$. 

Provided the matrix $\mb X$ satsifies properties {\bf (P1)}-{\bf (P2)}, solving problem \eqref{eqn:main-t-pf-opt} with $\mb r = \mb Y \mb e_j$ recovers {\em some} row of $\mb X$ whenever (i) $1 \le \| \mb X \mb e_j \|_0 \le s$ and (ii) $|b|_{(2)}/|b|_{(1)} \le 1 - \gamma$. Let
\begin{eqnarray}
\event_1(j) &=& \textbf{event} \{ \; \| \mb X \mb e_j \|_0 > 0 \; \}, \\
\event_2(j) &=& \textbf{event} \{\; \| \mb X \mb e_j \|_0 \le s\; \}, \\
\event_3(j) &=& \textbf{event} \{\; |b|_{(2)}/|b|_{(1)} \le 1 - \gamma\; \}.
\end{eqnarray}
Let $\event(i,j)$ be the event that these three properties are satisfied, and the largest entry of $\mb b = \mb X \mb e_j$ occurs in the $i$-th entry: 
$$\event(i,j) \;=\; \event_1(j) \,\cap \, \event_2(j) \, \cap \, \event_3(j) \, \cap \, \textbf{event}\left\{ \; |b|_{(1)} = |b_i| \; \right\}.$$
If the matrix $\mb X$ satisfies {\bf (P1)}-{\bf (P2)}, then on $\event(i,j)$, \eqref{eqn:main-t-pf-opt} recovers the $i$-th row of $\mb X$. Moreover, because $\event(i,j)$ only depends on the $j$-th column of $\mb X$, the events $\event(i,1) \dots \event(i,p)$ are mutually independent. By symmetry, for each $i$, $$
\P[ \event(i,j) ] \;=\; \frac{1}{n} \, \P[ \event_1(j) \cap \event_2(j) \cap \event_3(j) ].$$
The random variable $\| \mb X \mb e_j \|_0$ is distributed as a $\mathrm{Binomial}(n,\theta)$. So, 
$\P[ \event_1(j)^c ] = (1-\theta)^n \le (1- 2/n)^n \le e^{-2}$.
The binomial random variable $\| \mb X \mb e_j \|_0$ has expectation $\theta n$. Since $s \ge 6 \theta n$, by the Markov inequality, $\P[  \event_2(j)^c ] < 1/6$. Finally, by Lemma \ref{lem:2nd-our-form}, $\P \left[ \event_3(j)^c \mid \event_1 \right]  \le 1/2$. 
Moreover, 
\begin{eqnarray*}
\P[ \event_1(j) \cap \event_2(j) \cap \event_3(j) ] &\ge& 1 - \P[\event_1(j)^c] - \P[\event_2(j)^c] - \P[\event_3(j)^c \mid \event_1 ] \\ &\ge& 1 - e^{-2} - 1/6 - 1/2 \quad\doteq\quad \zeta.
\end{eqnarray*}
The constant $\zeta$ is larger than zero (one can estimate $\zeta \approx .198$).  For each $(i,j)$, $$\P\left[ \, \event(i,j)\, \right] \;\ge\; \frac{\zeta}{n}.$$
Hence, the probability that we fail to recover all $n$ rows of $\mb X$ is bounded by 
\begin{eqnarray*}
\P[ \mb X \; \text{does not satisfy {\bf (P1)}-{\bf (P2)}} ] + \sum_{i = 1}^n \P[ \cap_j \event(i,j)^c ] &\le& 7 p^{-10} + n ( 1 - \zeta / n )^p  \\
&\le& 7 p^{-10} + \exp\left( -\frac{\zeta p }{n} + \log n \right).
\end{eqnarray*}
Provided $\frac{\zeta p}{n} \ge \log n + 10 \log p$, the exponential term is bounded by $p^{-10}$. When $c_1$ is chosen to be a sufficiently large numerical constant, this is satisfied. 
\end{proof}

\subsection{The Two-Column Case}

The proof of Theorem \ref{thm:correct-tc} follows along very similar lines to Theorem \ref{thm:correct}. The main difference is in the analysis of the gaps. 
\vspace{.1in}

\begin{proof} We will apply Lemmas \ref{lem:reduction-new} and \ref{lem:extreme-sparse-new}. For Lemma \ref{lem:extreme-sparse-new}, we will set $\gamma = 1/2$, and $s = 12\theta n + 1$. Then $\theta s = 12 \theta^2 n$. Under our assumptions, provided the numerical constant $\alpha$ is small enough, $\theta s \le \gamma / 8$. Moreover, the hypotheses of Lemma \ref{lem:reduction-new} are satisfied, and so $\mb X$ has property {\bf (P1)} with probability at least $1-3 p^{-10}$. Provided $\alpha$ is sufficiently small, $s < 1/8 \theta$, as demanded by Lemma \ref{lem:reduction-new}. 

For each $j \in [p]$, let $\Omega_j = \mathrm{supp}(\mb X \mb e_j)$. Let $$\event_{\Omega,j,k}(i) \;\doteq\; \textbf{event}\left\{ |\Omega_j \cup \Omega_k| \le s, \;\text{and} \; \Omega_j \cap \Omega_k = \{i\} \right\}.$$
Hence, on $\event_{\Omega,j,k}(i)$, the vectors $\mb X \mb e_j$ and $\mb X \mb e_k$ overlap only in entry $i$. The next event will be the event that the $i$-th entries of $\mb X \mb e_j$ and $\mb X \mb e_k$ are the two largest entries in the combined vector $$\mb h_{jk} \doteq \left[ \begin{array}{c} \mb X \mb e_j \\ \mb X \mb e_k \end{array} \right],$$
and their signs agree. More formally:
$$\event_{X,j,k}(i) \;=\; \textbf{event}\left\{ \{ |\mb h_{jk}|_{(1)}, |\mb h_{jk}|_{(2)} \} = \{ X_{ij}, X_{ik}\} \quad \text{and} \quad \mathrm{sign}(X_{ij}) = \mathrm{sign}(X_{ik}) \, \right\}.$$
If we set $\mb r = \mb Y \mb e_j + \mb Y \mb e_k$, and hence $\mb b = \mb A^{-1} \mb r = \mb X \mb e_j + \mb X \mb e_k$, then on $\event_{X,j,k}(i) \cap \event_{\Omega,j,k}(i)$, the largest entry of $\mb b$ occurs at index $i$, and $|\mb b|_{(2)} / | \mb b|_{(1)} \le 1/2$. 

Hence, if $\mb X$ satisfies {\bf (P1)}-{\bf (P2)}, on $\event_{\Omega,j,k}(i) \cap \event_{X,j,k}(i)$, the optimization
\begin{equation}
\text{minimize} \quad \| \mb w^T \mb Y \|_1 \quad \text{subject to} \quad \mb r^T \mb w = 1
\end{equation}
with $\mb r = \mb Y \mb e_j + \mb Y \mb e_k$ recovers the $i$-th row of $\mb X$. The overall probability that we fail to recover some row of $\mb X$ is bounded by 
\begin{eqnarray*}
\P[ \mb X \; \text{does not satisfy {\bf (P1)}-{\bf (P2)}} ] + \sum_{i = 1}^n \P\left[ \cap_{l=1}^{p/2} (\event_{\Omega,j(l),k(l)}(i) \cap \event_{X,j(l),k(l)}(i))^c \right]
\end{eqnarray*}
Let $\Omega_j' = \Omega_j \setminus \{ i \}$, and $\Omega_k' = \Omega_k \setminus \{i\}$. Then we have 
\begin{eqnarray*}
\P\left[ \, \event_{\Omega,j,k}(i) \, \right] &=& \P\left[ \, i \in \Omega_j \;\text{and}\; i \in \Omega_k \, \right]\; \P\left[ \, |\Omega_{j}' \cup \Omega_k'| \le s-1 \;\, \text{and} \;\, \Omega_j' \cap \Omega_k' = \emptyset \, \right] \\
 &\ge& \theta^2 \left( 1 - \P\left[ \, |\Omega_j' | > \frac{s-1}{2} \, \right] - \P\left[ \, |\Omega_k' | > \frac{s-1}{2} \, \right] - \P\left[\, \Omega_j' \cap \Omega_k' \ne \emptyset \, \right] \right) \\
&=& \theta^2 \left( \P\left[\, \Omega_j' \cap \Omega_k' = \emptyset \, \right] - \P\left[ \, |\Omega_j' | > \frac{s-1}{2} \, \right] - \P\left[ \, |\Omega_k' | > \frac{s-1}{2} \, \right] \right) \\
&\ge& \theta^2 \left( (1-\theta^2)^{n-1} - 1/3 \right),
\end{eqnarray*}
where in the final line we have used the Markov inequality: $$\P\left[ |\Omega_j'| > \frac{s-1}{2} \right] \;\le\; \frac{2\E[ |\Omega_j'| ]}{s-1} \;=\; \frac{2 \theta (n-1)}{s-1} \;\le\; \frac{1}{6}.$$ 
Since $\theta < \alpha / \sqrt{n}$, we have $(1-\theta^2)^{n-1} \ge 1 - \alpha^2$. Provided $\alpha$ is sufficiently small, this quantity is at least $2/3$. Hence, we obtain
\begin{equation}
\prob{ \event_{\Omega,j,k}(i) } \;\ge\; \frac{\theta^2}{3}. 
\end{equation}
For $\event_{X,j,k}(i)$, we calculate 
\begin{equation}
\P\left[ \event_{X,j,k}(i) \mid \event_{\Omega,j,k}(i) \right] \;\ge\; \frac{1}{2 \binom{s}{2}} \;\ge\; \frac{1}{2 (12 \theta n + 1)^2 },
\end{equation}
and
\begin{equation}
\prob{ \event_{\Omega,j,k}(i) \cap \event_{X,j,k}(i) } \;\ge\; \frac{\xi}{n^2},
\end{equation}
for some numerical constant $\xi > 0$. Hence, the overall probability of failure is at most 
\begin{equation}
7 p^{-10} + n \left( 1- \frac{\xi}{n^2} \right)^{p/2} \;\le\; 7 p^{-10} + \exp\left( - \frac{\xi p}{2 n^2}  + \log n \right).
\end{equation}
Provided $\frac{\xi p}{2 n^2} > \log n + 10 \log p$, this quantity can be bounded by $p^{-10}$. Under our hypotheses, this is satisfied. 
\end{proof}

\section{Upper Bounds: Proof of Theorem \ref{thm:ub}}

\begin{proof}
For technical reasons, it will be convenient to assume that the entries of the random matrix $\mb X$ are bounded by $\tau$ almost surely. We will set $\mb X' = \trunc{\tau} \mb X$, and set $\tau = \sqrt{24 \log p}$, so that with probability at least $1- 2 p^{-10}$, $\mb X' = \mb X$. We then prove the result for $\mb X'$. Notice that we may write $\mb X' = \mb \Omega \odot \mb R'$, with $\mb R'$ iid subgaussian, and $\mu' \doteq \E[ | R'_{ij} | ] \ge 1/20$. 

Applying a change of variables $\mb z = \mb A^T \mb w$, $\mb b = \mb A^{-1} \mb r$, we may analyze the equivalent optimization problem
\begin{equation} \label{eqn:ub-opt}
\text{minimize} \; \| \mb z^T \mb X' \|_1 \quad \text{subject to} \quad \mb b^T \mb z = 1.
\end{equation}
Let $\mb v = \frac{\mathrm{sign}(\mb b)}{\| \mb b \|_1}$. This vector is feasible for \eqref{eqn:ub-opt}. We will compare the objective function obtained at $\mb z = \mb v$ to that obtained at any feasible one-sparse vector $\mb z = \mb e_i / b_i$. Note that 
$$\| \mathrm{sign}(\mb b)^T \mb X'  \|_1 \;=\; \sum_{k = 1}^p \left| \sum_{i=1}^n X'_{ik} \, \mathrm{sign}(b_i) \right| \;\doteq\; \sum_k Q_k \;=\; Q_j+\sum_{k\neq j} Q_k.$$ 
The random variable $Q_j$ is just the $\ell^1$ norm of $\mb X'_j$, which is bounded by $n \tau$. The random variables $(Q_k)_{k \ne j}$ are conditionally independent given $\mb b$. We have 
$$
\E\left[\, Q_k^2 \, \mid \mb b \, \right] \;=\; \theta \| \mb b \|_0,
$$
By Bernstein's inequality, 
\begin{equation}
\P\left[ \sum_{k \ne j} Q_k \ge \E\left[ \sum_{k\ne j} Q_k \mid \mb X_j \right] + t \mid \mb X_j' \right] \;\le\; \exp\left( \frac{-t^2}{2 p \theta \| \mb X_j' \|_0 + 2 \tau \| \mb X_j' \|_0 / 3 }\right).
\end{equation}
If the constant $c$ in the statement of the theorem is sufficiently large, then $p \theta > c_1 \tau$. We also have $\E\left[ Q_k \mid \mb X_j' \right] \le \sqrt{\theta \| \mb X_j' \|_0}$. Simplifying and setting $t = p\sqrt{ \theta \| \mb X_j' \|_0}$, we obtain
\begin{equation}
\P\left[ \sum_{k \ne j} Q_k \ge 2 p\sqrt{\theta \| \mb X_j' \|_0} \mid \mb X_j' \right] \;\le\; \exp\left( - c_2 p\right).
\end{equation}
The variable $Q_j$ is bounded by $n\tau$. Moreover, a Chernoff bound shows that 
\begin{equation}
\P\left[ \| \mb X_j' \|_0 \;\ge\; 4 \theta n \right] \;\le\; \exp\left( -  3\theta n \right) \;\le\; \exp\left( - 3 \beta \sqrt{n\log n} \right).
\end{equation}
So, with overall probability at least $1-\exp\left( - c_2 p \right) + \exp\left( - 3 \beta \sqrt{n \log n} \right)$, 
\begin{equation}
\| {\mb X'}^T \mathrm{sign}(\mb b) \|_1 \;\le\;  4 \theta p \sqrt{n} + n \tau \;\le\; 5 \theta p \sqrt{n},
\end{equation}
where in the final inequality, we have used our lower bound on $p$. Using that $\mb b = \mb X' \mb e_j$ is Bernoulli-Subgaussian, we next apply Lemma \ref{lem:M-norm-bound-bg} with $\delta = 1/2$ to show that 
\begin{align}
\P\left[ \, \|\mb b\|_1 \;\le\; \tfrac{1}{2} \mu' \theta n \, \right] \;\leq\; 4 \exp(-c_3 \theta n)\;\leq\; 4\exp(-c_3 \sqrt{\beta n\log n} )
\end{align}
On the other hand, by Lemma \ref{lem:M-norm-bound-bg},
 \begin{align}
\P\left[ \min_{i} \| \mb e_i^T \mb X'  \|_1 \;\le\; \frac{\theta p \mu'}{2} \right] \le  4 \exp( - c_3 \theta p + \log n). 
 \end{align}
and using the subgaussian tail bound,
\begin{equation}
\P\left[ \| \mb b \|_\infty > t \right] < 2 n \exp(-t^2 / 2 ). 
\end{equation}
Set $t = \sqrt{10 \log n}$ to get that $\| \mb b \|_\infty \le \sqrt{10 \beta \log n}$ with probability at least $2 n^{1- 5 \beta}$.

Hence, with overall probability at least 
$$
1 -  \exp\left(- c_2 p \right) - \exp\left( - 3 \beta \sqrt{n \log n} \right) - 4 \exp\left( - c_3 \theta p + \log n \right)  - 2 n^{1-5\beta}
$$
we have 
\begin{equation} \label{eqn:avg-bound}
\left\| {\mb X'}^T \frac{\mathrm{sign}(\mb b)}{\|\mb  b \|_1} \right\|_1 \;\le\; \frac{5 \theta p \sqrt{n}}{\tfrac{1}{2} \mu \theta n} \;\le\; \frac{c_5 p}{\sqrt{n}}.
\end{equation}
while
\begin{equation}
\left\| {\mb X'}^T \frac{ \mb e_i}{b_i} \right\|_1 \;\ge\; \frac{\| {\mb X'}^T \mb e_i \|_1}{\| \mb b \|_\infty} \;\ge\; \frac{\theta \mu p}{2 \sqrt{10} \sqrt{\log n}} \;\ge\; \frac{ c_4 \theta p }{\sqrt{\beta \log n}} \;\ge\; \frac{c_4 \sqrt{\beta} p }{\sqrt{n}} . \label{eqn:point-bound}
\end{equation}
If $\beta$ is sufficiently large, \eqref{eqn:point-bound} is larger than \eqref{eqn:avg-bound}, and the algorithm does not recover any row of $\mb X'$. Since with probability at least $1-2p^{-10}$, $\mb X' = \mb X$, the same holds for $\mb X$, with an additional failure probability of $2p^{-10}$. 
\end{proof}

\section{Gaps in Gaussian Random Vectors}

In this section, we consider a $d$-dimensional random vector $\mb r$, with entries iid $\mc N(0,1)$. We let 
$$s(1) \ge s(2) \ge \dots \ge s(d)$$
denote the order statistics of $|\mb r|$. We will make heavy use of the following facts about Gaussian random variables,
  which may be found in~\cite[Section VII.1]{Feller}.
\begin{lemma}\label{lem:feller}
Let $x$ be a Gaussian random variable with mean 0 and variance $1$.
Then, for every $t > 0$
\[
  \frac{1}{t} p (t) \geq \P\left[x \geq t\right] \geq 
  \left(\frac{1}{t} - \frac{1}{t^{3}} \right) p (t),
\]
where
\[
  p (t) = \frac{1}{\sqrt{2 \pi}} \exp \left( -t^{2}/2 \right).
\]
\end{lemma}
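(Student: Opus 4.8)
The plan is to reduce the Gaussian tail integral to integrals of $u\,p(u)$, which has the elementary antiderivative $-p(u)$ because $p'(u) = -u\,p(u)$. For the upper bound I would simply observe that $u/t \ge 1$ on $[t,\infty)$, so
\[
  \P\left[ x \ge t \right] \;=\; \int_{t}^{\infty} p(u)\,du \;\le\; \frac{1}{t}\int_{t}^{\infty} u\,p(u)\,du \;=\; \frac{1}{t}\Bigl[ -p(u) \Bigr]_{t}^{\infty} \;=\; \frac{p(t)}{t},
\]
which is the right-hand inequality in one line.

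For the lower bound I would integrate by parts twice, each time splitting off the factor $u\,p(u) = -p'(u)$. Writing $p(u) = \tfrac{1}{u}\cdot u\,p(u)$ and integrating by parts gives
\[
  \P\left[ x \ge t \right] \;=\; \frac{p(t)}{t} \;-\; \int_{t}^{\infty} \frac{p(u)}{u^{2}}\,du,
\]
and a second integration by parts, writing $u^{-2}p(u) = u^{-3}\cdot u\,p(u)$, gives
\[
  \int_{t}^{\infty} \frac{p(u)}{u^{2}}\,du \;=\; \frac{p(t)}{t^{3}} \;-\; 3\int_{t}^{\infty}\frac{p(u)}{u^{4}}\,du \;\le\; \frac{p(t)}{t^{3}},
\]
the inequality holding because the remaining integral is nonnegative. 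Combining the two displays yields $\P\left[x \ge t\right] \ge \bigl(\tfrac{1}{t} - \tfrac{1}{t^{3}}\bigr) p(t)$, as claimed. For $t < 1$ the right-hand side is negative, so the bound is vacuous there; the computation above is valid for every $t > 0$ regardless, since all boundary terms at $u = \infty$ vanish ($p$ decays faster than any power of $u$).

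Honestly there is no real obstacle here: this is the classical Mills-ratio estimate recorded by Feller, and the only thing to watch is keeping the boundary terms straight in the two integrations by parts. If one prefers to avoid integration by parts, an equivalent route is to set $\phi(t) = \P\left[x \ge t\right] - \bigl(\tfrac{1}{t} - \tfrac{1}{t^{3}}\bigr) p(t)$, compute $\phi'(t) = -\tfrac{3}{t^{4}}\,p(t) < 0$, and conclude from $\phi(t) \to 0$ as $t \to \infty$ together with monotonicity that $\phi \ge 0$ on $(0,\infty)$; this also recovers the upper bound since dropping the $-\tfrac{1}{t^{3}}$ term only decreases the subtracted quantity.
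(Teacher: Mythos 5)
Your proof is correct. Note that the paper does not actually prove this lemma --- it is stated as a known fact with a citation to Feller, Section VII.1 --- so there is no in-paper argument to compare against; what you have written is a complete, self-contained derivation of the standard Mills-ratio bounds, and it is essentially the classical argument. The one-line upper bound via $u/t \ge 1$ on $[t,\infty)$ and the two integrations by parts (using $u\,p(u) = -p'(u)$, with all boundary terms at infinity vanishing) are all carried out correctly, and you are right that the lower bound is vacuous for $t \le 1$ but the identity still holds. The only flaw is the throwaway remark at the very end: in the monotonicity route, knowing $\phi(t) = \P[x \ge t] - \bigl(\tfrac{1}{t}-\tfrac{1}{t^3}\bigr)p(t) \ge 0$ does \emph{not} ``recover the upper bound by dropping the $-\tfrac{1}{t^3}$ term,'' since dropping that term makes the subtracted quantity larger and the resulting inequality points the wrong way. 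To get the upper bound by that route you need the separate computation $\psi(t) = \P[x\ge t] - \tfrac{1}{t}p(t)$, $\psi'(t) = \tfrac{1}{t^2}p(t) > 0$, $\psi(t)\to 0$, hence $\psi \le 0$. Since your primary argument already establishes the upper bound independently, this slip does not affect the correctness of the proof.
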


\begin{lemma}\label{lem:max-ub} For any $d \ge 2$, 
\[
 \P\left[ s(1) > 4 \sqrt{\log d} \right] \;\le\; d^{-3}.
\]
\end{lemma}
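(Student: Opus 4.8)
This is a textbook union bound combined with the Gaussian tail estimate already recorded in Lemma \ref{lem:feller}, so I do not expect any real obstacle; the only thing to be slightly careful about is controlling the polynomial prefactor in the tail bound so that it is harmless for all $d \ge 2$.

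First I would fix $t = 4\sqrt{\log d}$ and note that, by symmetry, for each coordinate $\P[|r_i| > t] = 2\,\P[r_i > t]$, so that Lemma \ref{lem:feller} gives
\[
\P[\,|r_i| > t\,] \;\le\; \frac{2}{t}\,p(t) \;=\; \frac{2}{4\sqrt{\log d}}\cdot\frac{1}{\sqrt{2\pi}}\exp\!\left(-\tfrac{t^2}{2}\right).
\]
Since $t^2/2 = 8\log d$, the exponential is exactly $d^{-8}$. For $d \ge 2$ we have $\sqrt{\log d} \ge \sqrt{\log 2}$, and a quick numerical check shows $\tfrac{2}{4\sqrt{2\pi}\sqrt{\log 2}} < 1$, so the prefactor is at most $1$ and hence $\P[\,|r_i| > t\,] \le d^{-8}$.

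Then I would take a union bound over the $d$ coordinates: $\P[s(1) > t] = \P[\exists\, i : |r_i| > t] \le d \cdot d^{-8} = d^{-7} \le d^{-3}$, which is the claimed bound (in fact with a large margin to spare). The slack in the exponent, $7$ versus $3$, gives plenty of room should the prefactor bookkeeping be done more loosely, so the argument is robust.

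The only ``hard'' part is the trivial prefactor estimate; everything else is the union bound and a substitution. No further ingredients beyond Lemma \ref{lem:feller} are needed.
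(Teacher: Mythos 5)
Your proof is correct and follows essentially the same route as the paper: substitute $t = 4\sqrt{\log d}$ into Lemma \ref{lem:feller}, check that the resulting prefactor is at most $1$ for $d \ge 2$, and finish with a union bound over the $d$ coordinates. Your version is in fact slightly more careful than the paper's (you track the factor of $2$ from the two-sided tail explicitly, and you correctly get $\exp(-t^2/2) = d^{-8}$ where the paper's displayed intermediate bound reads $d^{-4}$, an apparent typo that does not affect the conclusion).
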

\begin{proof}
Set $t = 4 \sqrt{\log d}$ in Lemma \ref{lem:feller} to show that 
\begin{equation}
\P\left[ | r(i) | \ge t \right] \le \frac{1}{4 \sqrt{2 \pi \log d}} d^{-4}. 
\end{equation}
The denominator is larger than one for any $d \ge 2$; bounding it by $1$, and taking a union bound over the $d$ elements of $\mb r$ gives the result. 
\end{proof}

\begin{lemma}\label{lem:max-lb}
For $d$ larger than some constant
\[
  \P\left[ s (1) < \frac{\sqrt{\log d}}{2} \right]
 \leq 
  \exp \left( -\frac{d^{7/8}}{4 \sqrt{\log d}} \right). 
\]
\end{lemma}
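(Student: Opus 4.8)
The plan is to reduce the statement to an independence bound plus the standard Mills-ratio estimate of Lemma~\ref{lem:feller}. Since $\{ s(1) < t \}$ is exactly the event that every one of the $d$ iid coordinates has magnitude below $t$, we have
$\P[s(1) < t] = \P[\,|r_1| < t\,]^d = (1 - \P[\,|r_1| \ge t\,])^d \le \exp(-d\,\P[\,|r_1| \ge t\,])$.
Taking $t = \tfrac12\sqrt{\log d}$, it therefore suffices to show that $d\,\P[\,|r_1| \ge t\,] \ge d^{7/8}/(4\sqrt{\log d})$ once $d$ exceeds an absolute constant, i.e.\ that $\P[\,|r_1| \ge t\,] \ge \tfrac14\, d^{-1/8}/\sqrt{\log d}$.

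For the lower bound on the tail I would apply the left-hand inequality of Lemma~\ref{lem:feller} together with symmetry: $\P[\,|r_1| \ge t\,] = 2\,\P[\,r_1 \ge t\,] \ge 2\bigl(\tfrac1t - \tfrac1{t^3}\bigr) p(t)$. Substituting $t = \tfrac12\sqrt{\log d}$ gives $p(t) = \tfrac1{\sqrt{2\pi}}\exp(-\tfrac18\log d) = \tfrac1{\sqrt{2\pi}}\, d^{-1/8}$ and $\tfrac1t - \tfrac1{t^3} = \tfrac2{\sqrt{\log d}}\bigl(1 - \tfrac4{\log d}\bigr)$, so that
$\P[\,|r_1| \ge t\,] \ge \tfrac4{\sqrt{2\pi}}\cdot\tfrac{d^{-1/8}}{\sqrt{\log d}}\bigl(1 - \tfrac4{\log d}\bigr)$.
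Since $\tfrac4{\sqrt{2\pi}} > 1$ and $1 - \tfrac4{\log d} \to 1$, for every $d$ larger than a fixed absolute constant the right-hand side is at least $\tfrac14\, d^{-1/8}/\sqrt{\log d}$, which is precisely what is needed. Combining with the first display yields $\P[s(1) < \tfrac12\sqrt{\log d}] \le \exp\bigl(-d^{7/8}/(4\sqrt{\log d})\bigr)$.

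I do not anticipate any real obstacle here: the argument is a one-line independence estimate followed by the Gaussian tail bound already recorded in Lemma~\ref{lem:feller}. The only point requiring a little care is the choice of the threshold ``$d$ larger than some constant,'' which must be taken large enough that the lower-order factor $(1 - 4/\log d)$ and the numerical slack between $4/\sqrt{2\pi}$ and $1/4$ together absorb all constants; a quick check shows $d \ge e^{5}$ is already more than enough.
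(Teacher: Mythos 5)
Your proof is correct and follows essentially the same route as the paper's: set $t=\tfrac12\sqrt{\log d}$, lower-bound the per-coordinate tail $\P[\,|r_1|\ge t\,]$ by $\tfrac14 d^{-1/8}/\sqrt{\log d}$ via Lemma~\ref{lem:feller}, and use independence with $(1-x)^d\le e^{-dx}$. Your version just makes the Mills-ratio computation explicit where the paper leaves it to the reader.
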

\begin{proof}
Set
\[
t = \frac{1}{2} \sqrt{\log d}
\]
For $d$ sufficiently large, we may use
  Lemma~\ref{lem:feller} to show that the probability that
  a Gaussian random variable of variance 1 and any mean has absolute
  value greater than $t$ is at least 
  $$\frac{1}{4\sqrt{ \log d}} d^{-1/8}.$$
Thus, the probability that every entry of $\mb r$ has absolute
  value less than $t$ is at most  
\[
  \left(1 - d^{-1/8} / 4 \sqrt{\log d} \right)^{d}
\leq \exp \left( -\frac{d^{7/8}}{4 \sqrt{\log d}} \right).
\]
\end{proof}

We now examine the gap between the largest and second-largest entry of $\mb r$.
We require the following fundamental fact about Gaussian random variables.
\begin{lemma}\label{lem:univariateGap}
Let $x$ be a Gaussian random variable with variance $1$ and arbitrary mean.
Then for every $t > 0$ and $\alpha   > 0$,
\[
  \P\left[\abs{x}\leq t+\alpha  \big| \abs{x} \geq t\right] \leq 3 \alpha \max (t,3).
\]
\end{lemma}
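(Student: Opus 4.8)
The plan is to reduce to a one‑variable tail estimate. Write $x = \mu + g$ with $g \sim \mc N(0,1)$; since the law of $|x|$ depends only on $|\mu|$, we may assume $\mu \ge 0$. The conditional probability in question equals $N/D$, where $N = \P[\,t \le |x| \le t+\alpha\,]$ and $D = \P[\,|x|\ge t\,]$, so the whole task is an upper bound on $N$ together with a matching lower bound on $D$.

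First I would dispose of the trivial regime $\mu \ge t$. There $D \ge \P[\,x \ge t\,] = \P[\,g \ge t-\mu\,] \ge \tfrac12$, while $N$ is at most the Lebesgue measure of $\{u : t \le |u| \le t+\alpha\}$, namely $2\alpha$, times the maximal value $\tfrac{1}{\sqrt{2\pi}}$ of the density of $x$. Hence $N/D \le 4\alpha/\sqrt{2\pi} < 9\alpha \le 3\alpha\max(t,3)$, and this case is finished.

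The substantive regime is $0 \le \mu < t$; set $s = t-\mu \in (0,t]$. Splitting $\{t \le |x| \le t+\alpha\}$ into its positive and negative parts and translating by $\mu$ (using that $g$ is symmetric), the two pieces become $\P[\,g \in [s,s+\alpha]\,]$ and $\P[\,g \in [t+\mu,\,t+\mu+\alpha]\,]$; since $s \ge 0$, $t+\mu \ge s$, and the standard Gaussian density $p(\cdot)$ of Lemma~\ref{lem:feller} is decreasing on $[0,\infty)$, each piece is at most $\alpha\,p(s)$, so $N \le 2\alpha\,p(s)$. Likewise $D \ge \P[\,g \ge s\,]$. Therefore $N/D \le 2\alpha\, p(s)/\P[\,g\ge s\,]$, i.e. $2\alpha$ times the inverse Mills ratio at $s$. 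To finish I would establish the clean global bound $p(s)/\P[\,g\ge s\,] \le s+1$ for all $s \ge 0$: put $h(s) = \P[\,g \ge s\,] - p(s)/(s+1)$; using $p'(s) = -s\,p(s)$ one computes $h'(s) = -\,s\,p(s)/(s+1)^2 \le 0$, and since $h(s) \to 0$ as $s \to \infty$, the non‑increasing function $h$ must be nonnegative everywhere. Combining, $N/D \le 2\alpha(s+1) \le 2\alpha(t+1)$; for $t \ge 3$ this is at most $3\alpha t = 3\alpha\max(t,3)$ (because $2(t+1) \le 3t$ whenever $t\ge 2$), and for $t < 3$ it is $< 8\alpha < 9\alpha = 3\alpha\max(t,3)$.

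The only delicate point is the uniform inverse‑Mills‑ratio bound: Feller's inequalities (Lemma~\ref{lem:feller}) give $p(s)/\P[\,g\ge s\,] \le s^3/(s^2-1)$ only for $s$ bounded away from $1$, and this blows up at $s=1$, so a separate global argument — the monotonicity of $h$ above, or equivalently the standard estimate $p(s)/\P[\,g\ge s\,] \le \tfrac12\bigl(s+\sqrt{s^2+4}\bigr)$ — is what actually carries the proof. Everything else is routine bookkeeping across the two cases, and the constant $3$ leaves a comfortable margin.
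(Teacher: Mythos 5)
Your proof is correct, and it takes a noticeably different route from the paper's. The paper also writes the conditional probability as a ratio of a band probability to a tail probability, but it then \emph{asserts} two monotonicity facts without proof -- that the ratio is maximized when the mean is $0$ (given a nonnegative mean), and that it is increasing in $t$ (used to reduce $t\le 3$ to $t=3$) -- and finishes by applying Lemma~\ref{lem:feller} at $t\ge 3$, where $1-1/t^2\ge 8/9$, to get the constant $2\cdot 9/8\le 3$. You instead handle an arbitrary mean directly: after translating by $\mu$ you split on $\mu\ge t$ versus $\mu<t$, bound the numerator by $2\alpha\,p(s)$ with $s=t-\mu$ using monotonicity of the density on $[0,\infty)$, and replace Feller's inequality (which degenerates near $s=1$ and is only useful for $s$ bounded away from $1$) with the uniform inverse-Mills-ratio bound $p(s)/\P\left[g\ge s\right]\le s+1$, proved by differentiating $h(s)=\P\left[g\ge s\right]-p(s)/(s+1)$. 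What your approach buys is self-containedness -- the paper's two ``one can show'' reductions are true (they follow from log-concavity / the increasing hazard rate of the Gaussian) but are genuine gaps in its write-up, and your argument needs neither of them nor the restriction $t\ge 3$. What the paper's approach buys is brevity, at the cost of those unproved steps. Your constants all check out: $4\alpha/\sqrt{2\pi}<9\alpha$ in the first case, and $2\alpha(t+1)\le 3\alpha t$ for $t\ge 2$ in the second.
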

\begin{proof}
Assume without loss of generality that the mean of $x$ is positive.
Then,
\begin{align*}
\P\left[\abs{x}\leq t+\alpha  \big| \abs{x} \geq t\right]
& = 
\frac{
\P\left[t \leq \abs{x}\leq t+\alpha \right]
}{
\P\left[\abs{x}\geq t \right]
}\\
& \leq 
\frac{
2 \P\left[ t \leq x \leq t+\alpha \right]
}{
\P\left[ \abs{x}\geq t \right]
}\\
& \leq 
\frac{
2 \P\left[ t \leq x \leq t+\alpha \right]
}{
\P\left[ x \geq t \right]
}.
\end{align*}
One can show that this ratio is maximized when the mean
  of $x$ is in fact $0$, given that it is non-negative.
Similarly, the ratio is monotone increasing with $t$.
So, if $t \leq 3$, we will upper bound this probability by
  the bound we obtain when $t = 3$.
When the mean of $x$ is $0$ and $t \geq 3$,
 Lemma~\ref{lem:feller} tells us that
\[
\P\left[x \geq t\right]
 \geq \frac{1}{\sqrt{2 \pi}t} \left(1 - \frac{1}{t^{2}}  \right) \exp (-t^{2}/2)
 \geq \frac{8}{9 \sqrt{2 \pi}t} \exp (-t^{2}/2).
\]
We then also have
\[
\P\left[ t \leq x \leq t+\alpha \right]
=
\int_{t}^{t+\alpha} \frac{1}{\sqrt{2 \pi}} \exp (-x^{2} / 2) dx
\leq 
\frac{\alpha}{\sqrt{2 \pi}} \exp (-t^{2} / 2) .
\]
The lemma now follows from $2 \cdot 9 / 8 \leq 3$.
\end{proof}

\begin{lemma}\label{lem:2ndgap} Let $d \ge 2$. Then for every $\alpha  > 0$, 
\[
  \P\left[ s (1) - s (2) < \alpha  \right]
\leq 12 \alpha \sqrt{\ln d} + d^{-2}.
\]
\end{lemma}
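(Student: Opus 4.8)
The plan is to first discard the rare event that $s(1)$ is abnormally large, then decompose over which coordinate attains the maximum, and finally apply the one-variable gap estimate of Lemma~\ref{lem:univariateGap} with a threshold that is random but becomes deterministic after conditioning on the remaining coordinates.

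First, by Lemma~\ref{lem:max-ub} we have $\P[\,s(1) > 4\sqrt{\ln d}\,] \le d^{-3} \le d^{-2}$, so it suffices to bound $\P[\,s(1) - s(2) < \alpha,\ s(1) \le 4\sqrt{\ln d}\,]$ by $12\alpha\sqrt{\ln d}$. Since the entries of $\mb r$ are iid continuous random variables, almost surely there is a unique index $i$ with $|r_i| = s(1)$. Writing $M_i \doteq \max_{j \ne i}|r_j|$, this is the event $A_i = \{\,|r_i| > M_i\,\}$, and the $A_i$ are disjoint with $\sum_{i=1}^d \P[A_i] = 1$. On $A_i$ we have $s(1) = |r_i|$ and $s(2) = M_i$, so
\[
\P[\,s(1)-s(2)<\alpha,\ s(1)\le 4\sqrt{\ln d}\,] \;=\; \sum_{i=1}^d \P\!\left[\, M_i \le |r_i| \le M_i+\alpha,\ |r_i| \le 4\sqrt{\ln d}\,\right].
\]
Note that each summand is zero unless $M_i \le 4\sqrt{\ln d}$, since the event forces $M_i \le |r_i| \le 4\sqrt{\ln d}$.

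Now fix $i$ and condition on $(r_j)_{j\ne i}$, which determines $M_i$; since $r_i \sim \mc N(0,1)$ is independent of those variables, on the event $\{M_i \le 4\sqrt{\ln d}\}$ we get
\begin{align*}
\P\!\left[\, M_i \le |r_i| \le M_i+\alpha \,\middle|\, (r_j)_{j\ne i}\right]
&= \P\!\left[\,|r_i|\ge M_i\,\middle|\,(r_j)_{j\ne i}\right]\,\P\!\left[\,|r_i|\le M_i+\alpha \,\middle|\, |r_i|\ge M_i,\ (r_j)_{j\ne i}\right]\\
&\le \P\!\left[\,|r_i|\ge M_i\,\middle|\,(r_j)_{j\ne i}\right]\cdot 3\alpha\max(M_i,3),
\end{align*}
where the last step is Lemma~\ref{lem:univariateGap} applied to $x = r_i$ with the (post-conditioning deterministic) threshold $t = M_i$. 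For $d \ge 2$ one has $4\sqrt{\ln d}\ge 3$, so on $\{M_i\le 4\sqrt{\ln d}\}$ we have $\max(M_i,3)\le 4\sqrt{\ln d}$ and the bound becomes $12\alpha\sqrt{\ln d}\cdot\P[|r_i|\ge M_i\mid (r_j)_{j\ne i}]$. Taking expectations over $(r_j)_{j\ne i}$, summing over $i$, and using $\sum_i \P[|r_i|> M_i] = 1$ yields $\P[\,s(1)-s(2)<\alpha,\ s(1)\le 4\sqrt{\ln d}\,]\le 12\alpha\sqrt{\ln d}$. Adding the $d^{-2}$ from the first step completes the proof.

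The only subtle points are (i) that one must strip off the event $\{s(1)>4\sqrt{\ln d}\}$ before invoking Lemma~\ref{lem:univariateGap}, because that lemma's bound grows with the threshold $t$, and here $t = M_i = s(2)$ needs a deterministic ceiling; and (ii) that one should condition on all coordinates except $r_i$, so that $M_i$ is a fixed number to which Lemma~\ref{lem:univariateGap} applies directly. The rest is bookkeeping: the union over the $d$ possible locations of the maximum and the almost-sure uniqueness of the maximizer.
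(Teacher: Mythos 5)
Your proof is correct and follows essentially the same route as the paper's: decompose over which coordinate attains the maximum, use Lemma~\ref{lem:max-ub} to cap the relevant threshold at $4\sqrt{\ln d}$ (contributing the $d^{-2}$ term), and apply Lemma~\ref{lem:univariateGap} to the maximizing coordinate with threshold $M_i$. Your version is if anything slightly more careful, since conditioning on all of $(r_j)_{j\ne i}$ makes the threshold $t=M_i$ genuinely deterministic before invoking the univariate lemma, a point the paper's conditioning on the events $M_i\wedge R_i$ glosses over.
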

\begin{proof}
Let $M_{i}$ be the event that $q (i)$ is the largest entry of $q$ in absolute value.
As the events $M_{i}$ are disjoint, we know that the sum of their probabilities is $1$. Let $R_{i}$ be the event that $\max_{j \not = i} \abs{q (j)} \leq 4 \sqrt{\ln d}$. From Lemma~\ref{lem:max-ub} we know that the probability of $\mnot (R_{i})$ is at most $1/d^{3}$.
Let $G_{i}$ be the event that $M_{i}$ holds but that the gap between $q (i)$
  and the second-largest entry in absolute value is at most $\alpha$. From Lemma~\ref{lem:univariateGap}, we know that
\[
  \P\left[ G_{i} | R_{i} \mand M_{i} \right] \leq 12 \alpha \sqrt{\ln d}.
\]
The lemma now follows from the following computation.
\begin{align*}
  \P\left[ s (1) - s (2) < \alpha  \right]
& = 
  \sum_{i} \P\left[ G_{i} \right]
\\
& = 
  \sum_{i} \P\left[ G_{i} \mand M_{i} \right]
\\
& = 
  \sum_{i} \P\left[ G_{i} \mand M_{i} \mand R_{i} \right]
+
  \sum_{i} \P\left[ G_{i} \mand M_{i} \mand \mnot (R_{i}) \right].
\end{align*}
We have
\[
  \sum_{i} \P\left[ G_{i} \mand M_{i} \mand \mnot (R_{i}) \right]
\leq 
  \sum_{i} \P\left[ \mnot (R_{i}) \right]
\leq 
1/d^{2},
\]
and
\begin{align*}
  \sum_{i} \P\left[ G_{i} \mand M_{i} \mand R_{i} \right]
& =
  \sum_{i} \P\left[ M_{i} \mand R_{i} \right] \P\left[ G_{i} | M_{i} \mand R_{i} \right]
\\
& \leq 
  \sum_{i} \P\left[ M_{i} \right] \P\left[ G_{i} | M_{i} \mand R_{i} \right]
\\
& \leq 
  \sum_{i} \P\left[ M_{i} \right] 12 \alpha \sqrt{\ln d}
\\&=
12 \alpha \sqrt{\ln d}.
\end{align*}
\end{proof}

\begin{lemma} \label{lem:2nd-our-form} There exists $\alpha_0 > 0$ such that for any $\alpha < \alpha_0$ and $2 \le d \le n$ the following holds.  If $\mb r$ is a $d$-dimensional random vector with independent standard Gaussian entries
$$s(1) \ge s(2) \ge \dots \ge s(d)$$
are the order statistics of $|\mb r|$, then 
\begin{equation}
\P\left[ 1 - \frac{s(2)}{s(1)} \;<\; \frac{\alpha}{\log n} \right] \;<\; \frac{1}{2}.
\end{equation}
\end{lemma}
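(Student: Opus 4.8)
The plan is to reduce the multiplicative (ratio) gap to the additive gap $s(1)-s(2)$ by conditioning on the event that $s(1)$ is not too large, and then to feed the result into Lemmas~\ref{lem:max-ub} and~\ref{lem:2ndgap}. Since $s(1)\ge s(2)\ge 0$ and $s(1)>0$ almost surely, the event in question can be rewritten as
\[
1 - \frac{s(2)}{s(1)} < \frac{\alpha}{\log n} \quad\Longleftrightarrow\quad s(1) - s(2) < \frac{\alpha}{\log n}\, s(1) .
\]
By Lemma~\ref{lem:max-ub}, $\prob{s(1) > 4\sqrt{\log d}} \le d^{-3}$, so I would split
\[
\prob{1 - \frac{s(2)}{s(1)} < \frac{\alpha}{\log n}} \;\le\; \prob{s(1) - s(2) < \frac{4\alpha\sqrt{\log d}}{\log n}} + d^{-3}.
\]

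Next I would apply Lemma~\ref{lem:2ndgap} with additive threshold $\alpha' = \tfrac{4\alpha\sqrt{\log d}}{\log n}$, obtaining
\[
\prob{s(1) - s(2) < \alpha'} \;\le\; 12\alpha'\sqrt{\ln d} + d^{-2} \;=\; \frac{48\alpha\,\sqrt{\log d}\,\sqrt{\ln d}}{\log n} + d^{-2}.
\]
Because $2 \le d \le n$, we have $\sqrt{\log d}\,\sqrt{\ln d} \le C_0 \log n$ for an absolute constant $C_0$ (one may take $C_0 = 1$ if $\log$ denotes the natural logarithm), so the first term is at most $48 C_0\,\alpha$. Combining the two displays and using $d\ge 2$ to bound $d^{-2}+d^{-3} \le \frac14 + \frac18 = \frac38$ gives
\[
\prob{1 - \frac{s(2)}{s(1)} < \frac{\alpha}{\log n}} \;\le\; 48 C_0\,\alpha + \frac{3}{8}.
\]
Choosing $\alpha_0$ with $48 C_0\,\alpha_0 < \tfrac18$ (e.g.\ $\alpha_0 = 1/(384 C_0)$) makes the right-hand side strictly below $1/2$ for every $\alpha < \alpha_0$, which is the claim.

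There is no serious obstacle here; the proof is essentially a bookkeeping combination of the two cited lemmas. The two points needing a little care are (i) reconciling the various logarithms — the $\sqrt{\log d}$ in Lemma~\ref{lem:max-ub}, the $\sqrt{\ln d}$ in Lemma~\ref{lem:2ndgap}, and the $\log n$ in the statement — which is exactly where the hypothesis $d \le n$ is used, and (ii) the small-$d$ regime, where $d^{-2}$ is not negligible, but the slack $1/2 - 3/8 = 1/8$ is comfortably enough to absorb the $O(\alpha)$ term once $\alpha_0$ is taken small.
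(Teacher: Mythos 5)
Your proposal is correct and follows essentially the same route as the paper: bound $s(1)$ from above via Lemma~\ref{lem:max-ub} to convert the ratio gap into an additive gap, apply Lemma~\ref{lem:2ndgap}, and absorb $d^{-2}+d^{-3}\le 3/8$ into the $1/2$ slack before choosing $\alpha_0$ small. The only cosmetic difference is that you condition on $s(1)\le 4\sqrt{\log d}$ where the paper uses the slightly looser threshold $4\sqrt{\log n}$; both yield the same conclusion.
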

\begin{proof}
For any $\alpha > 0$, $t > 0$, we have 
\begin{eqnarray*}
\P\left[ \, 1 - \frac{s(2)}{s(1)} \;\ge\; \frac{\alpha}{\log n} \, \right] &\ge& \P\left[ \, s(1) - s(2) \;\ge\; \frac{\alpha t}{\sqrt{\log n}} \;\;\text{and}\;\; s(1) \le t \sqrt{\log n} \, \right] \\
&\ge& 1 - \P\left[ \, s(1) - s(2) \;<\; \frac{\alpha t}{\sqrt{\log n}} \,\right] - \P\left[ \, s(1) \;>\; t \sqrt{\log n} \, \right] \\
&\ge& 1 - \P\left[ \, s(1) - s(2) \;<\; \frac{\alpha t}{\sqrt{\log n}} \,\right] - \P\left[ \, s(1) \;>\; t \sqrt{\log d} \, \right].
\end{eqnarray*}
Setting $t = 4$ and applying Lemmas \ref{lem:2ndgap} and \ref{lem:max-ub}, we have
\begin{eqnarray*}
\P\left[ \, 1 - \frac{s(2)}{s(1)} \;\ge\; \frac{\alpha}{\log n} \, \right] &\ge& 1 - 48 \alpha \sqrt{\frac{\log d}{\log n}} - d^{-2} - d^{-3}.
\end{eqnarray*}
Choosing $\alpha_0$ sufficiently small (and noting that for $d \ge 2$, $d^{-2} + d^{-3} \le 3/8$) completes the proof. 
\end{proof}

\end{document}